\theoremstyle{plain} 
\newtheorem{theorem}{Theorem}
\newtheorem*{theorem*}{Theorem}
\newtheorem{lemma}{Lemma}
\theoremstyle{definition} 
\newtheorem{defn}{Definition}
\theoremstyle{remark} 
\newcounter{casecount}
\renewcommand{\epsilon}{\varepsilon}
\newcommand{\N}{\ensuremath{\mathds{N}}\xspace}
\newcommand{\R}{\ensuremath{\mathds{R}}\xspace}
\newcommand{\X}{\ensuremath{\mathcal{X}}\xspace}
\newcommand{\Y}{\ensuremath{\mathcal{Y}}\xspace}
\newcommand{\F}{\ensuremath{\mathcal{F}}\xspace}
\DeclareMathOperator{\OPT}{OPT}
\newcommand{\CHALLENGE}{\textsc{Challenge}\xspace}
\newcommand{\Challenge}{\CHALLENGE}
\newcommand{\floor}[1]{\ensuremath{\left\lfloor#1\right\rfloor}}
\newcommand{\abs}[1]{\ensuremath{\left\lvert#1\right\rvert}}
\newcommand{\E}[1]{\ensuremath{\mathrm{E}\mathord{\left(#1\right)}}}
\newcommand{\Et}[1]{\ensuremath{\mathrm{E}_t\mathord{\left(#1\right)}}}
\newcommand{\Ek}[1]{\ensuremath{\mathrm{E}_k\mathord{\left(#1\right)}}}
\newcommand{\Prob}[1]{\ensuremath{\mathrm{Pr}\left(#1\right)}}
\newcommand{\prob}[1]{\Prob{#1}}
\newcommand{\ones}[1]{\left\lvert#1\right\rvert_1}
\newcommand{\BILINEAR}{\textsc{Bilinear}\xspace}
\newcommand{\bilinear}{\BILINEAR}
\newcommand{\RWAB}{\textsc{Rwab}\xspace}
\newcommand{\rwab}{\RWAB}
\newcommand{\RN}[1]{%
  \textup{\uppercase\expandafter{\romannumeral#1}}%
}
\newcommand{\pminus}{\ensuremath{p^-_{x,y}}\xspace}
\newcommand{\pplus}{\ensuremath{p^+_{x,y}}\xspace}
\newcommand{\M}{M\xspace}
\title{Concentration Tail-Bound Analysis of
Coevolutionary\\
and Bandit Learning Algorithms\footnote{
Full version at \url{https://arxiv.org/abs/2405.04480}. }
}
\author{
Per Kristian Lehre
\and
Shishen Lin~\footnote{Authors are listed in alphabetical order.}
\affiliations
University of Birmingham, Birmingham, United Kingdom
\emails
\{p.k.lehre, sxl1242\}@cs.bham.ac.uk
}
\begin{document}
\maketitle

\begin{abstract}
Runtime analysis, as a branch of the theory of AI, studies how the number of iterations algorithms take before finding a solution (its runtime) depends on the design of the algorithm and the problem structure. Drift analysis is a state-of-the-art tool for estimating the runtime of randomised algorithms, such as evolutionary and bandit algorithms. Drift refers roughly to the expected progress towards the optimum per iteration. This paper considers the problem of deriving concentration tail-bounds on the runtime/regret of algorithms. It provides a novel drift theorem that gives precise exponential tail-bounds given positive, weak, zero and even negative drift. Previously, such exponential tail bounds were missing in the case of weak, zero, or negative drift. 
\par Our drift theorem can be used to prove a strong concentration of the runtime/regret of algorithms in AI. For example, we prove that the regret of the \rwab bandit algorithm is highly concentrated, while previous analyses only considered the expected regret. This means that the algorithm obtains the optimum within a given time frame with high probability, i.e. a form of algorithm reliability. Moreover, our theorem implies that the time needed by the co-evolutionary algorithm RLS-PD to obtain a Nash equilibrium in a \bilinear max-min-benchmark problem is highly concentrated. However, we also prove that the algorithm forgets the Nash equilibrium, and the time until this occurs is highly concentrated. This highlights a weakness in the RLS-PD which should be addressed by future work.

\end{abstract}

\section{Introduction}
Drift analysis is a powerful technique in understanding the performance of randomised algorithms, particularly in the field of runtime analysis of heuristic search. 
For more recent overviews of drift analysis in evolutionary computation, see ~\cite{doerr2019theory,Intro_Neumann,Thomas_intro,he_drift_2001,hajek1982hitting}. 
The majority of existing drift theorems provide an upper bound on the expected runtime needed to reach a target state, such as an optimal solution set \cite{he_drift_2001}. By identifying an appropriate potential function and demonstrating a positive drift towards the target state, the expected runtime can be bounded by the reciprocal of the drift multiplied by the maximum distance from the target state. 
\par The focus of concentration tail-bound analysis is on quantifying the deviation of the runtime of randomised algorithm $T$, from its expected value. 
By providing insights into the distribution of $T$, this approach offers a more detailed understanding of an algorithm's performance~\cite{kotzing_concentration_2016,lehre_general_2018,doerr_adaptive_2013}. The concentration tail-bound analysis has gained significant interest due to its potential for delivering tighter upper bounds on the runtime of various algorithms. For instance, an exponential tail bound is used to bound the expected runtime of RLS on separable functions~\cite{SeparableDoerr}. Moreover, in the case of (1+1)-cooperative co-evolutionary algorithms, concentration tail-bound analysis can establish a $\Theta (n \log (n))$ bound for the runtime of the cooperative coevolutionary algorithm on linear functions~\cite{shishen2023}. 
The concentration tail-bound analysis is also useful in the context of restarting arguments; for example~\cite {Restart_Per}. More precise runtime estimation can be valuable in 
optimising and comparing different algorithms, potentially
leading to improved algorithm design and 
performance \cite{bian2020efficient,PK2021escaping,zheng2022first,doerr2023runtime}. 
\par Concentration tail bounds are not only used in runtime analysis to help us understand evolutionary algorithms, including simple genetic algorithms or coevolutionary algorithms~\cite{kotzing_concentration_2016,shishen2023},
but can also be used in regret analysis of reinforcement learning algorithms. 
A typical example is using concentration inequality (Azuma-Hoeffding inequality) to provide precise bounds for regret. 
Concentration inequalities are used in the development of optimal UCB family algorithms, incorporating the concept of optimism in the face of uncertainty~\cite{auer2002finite}.

\subsection{Related Works}
Researchers use drift analysis to analyse not only the runtime of evolutionary algorithms but also other randomised algorithms like Random 2-SAT \cite{gobel2018intuitive} or the expected regret of simple reinforcement learning algorithms on bandit problems \cite{larcher2023simple}. We would like to explore more advanced drift analysis tools to provide more precise estimates of runtime and regret.
Various extensions of drift theorems have been proved, including multiplicative drift~\cite{doerr_multiplicative_nodate}, variable drift~\cite{VariableBS,johannsen2010random,mitavskiy2009theoretical}, and negative drift~\cite{oliveto_erratum_2012}. The multiplicative drift theorem~\cite{doerr_multiplicative_nodate} refines the original additive drift theorem by considering the current state, resulting in a more precise bound when using the same potential function. Variable drift~\cite{VariableBS,johannsen2010random,mitavskiy2009theoretical} generalises the multiplicative drift concept to incorporate an increasing positive function, $h$. 
On the other hand, negative drift~\cite{oliveto_erratum_2012} is employed to provide a lower bound for the expected runtime, often used to demonstrate that an algorithm has an exponential expected runtime, thereby proving its inefficiency.

In recent years, researchers have been exploring advanced drift theorems that focus on the tail bound of the runtime~\cite{kotzing_concentration_2016,lehre_general_2018,doerr_adaptive_2013}. 
Researchers are also interested in the applications of concentration inequalities, like the Azuma-Hoeffding inequalities~\cite{Azuma1967WEIGHTEDSO}. They provide deeper insights into the behaviour and performance of randomised algorithms \cite{Benjamin_Azuma}.

\subsection{Our Contributions}
\par This paper provides a novel perspective on analysing tail bounds by introducing a classic recurrence strategy. 
With the help of the recurrence strategy, this paper presents a sharper bound for all possible drift cases with a simpler proof. 
In particular, we provide an exponential tail bound under constant variance with negative drift.  
Refining an existing method, we also show
a more precise exponential tail bound for the traditional cases with additive drift and for the cases in which there is constant variance but weak or zero drift. 
\par Finally, we illustrate the practical impact of our findings by applying our theorems to various algorithms. The analysis brings us stronger performance guarantees for these algorithms. 
In particular, we prove the instability of the co-evolutionary algorithm (CoEA) on maximin optimisation (\bilinear problem instance) occurs with high probability. 
Moreover, we show that the randomness of the reinforcement learning algorithm \rwab can help to find the optimal policy for the 2-armed non-stationary bandit problem with high probability. 
This paper is the first tail-bound analysis of both random local search with pairwise dominance (RLS-PD) and the bandit learning algorithm \rwab.

\section{Preliminaries}
For a filtration $\F_t$, we write $\Et{\cdot}:=\E{\cdot \mid \mathcal{F}_{t}}$.
We denote the $1$-norm as $|z|_1=\sum_{i=1}^n z_{i}$ for $z \in \{0,1\}^n$ and 
$\mathds{1}_{E}$ by indicator function, i.e. $\mathds{1}_{E}=1$ if event $E$ holds and $0$ otherwise.
With high probability" will be abbreviated as "w.h.p.". We say an event $E_n$ with problem size $n\in \mathbb{N}$ occurs w.h.p. if $\Pr (E_n) \geq 1-1/{\text{poly}(n)}$.
We defer pseudo-codes of algorithms and tables in the appendix.

We define the $k$-th stopping time, also called $k$-th hitting time, which will be used in later proofs. 
\begin{defn}($k$-th stopping time) Given a stochastic process $(X_t)_{t \geq 0}$ on a state space in $\mathbb{R}$. 
Let the target set $A$ be a finite non-empty subset of $\mathbb{R}$, and then for any $k \geq 0$, we define $T_{k} = \min \{t \geq k \mid X_{t} \in A \}$. 
In particular, $T_0$ is the first hitting time at $A$.
\end{defn}

We first provide a formal definition of variance-dominated and variance-transformed processes.
\begin{defn}(Variance-dominated processes) A sequence of random variables $X_0,X_1,\dots \in [0,n]$ is a variance-dominated process with respect to the filtration $\mathcal{F}_0, \mathcal{F}_1, \dots $ if for all $t \in \mathbb{N}$, the following conditions hold: 
\begin{itemize}
    \item[(1)] $\E{X_{t+1}-X_{t} \mid \mathcal{F}_{t}}\geq 0$;

    \item[(2)]  $\exists \delta>0$ such that $\E{(X_{t+1}-X_{t})^2 \mid \mathcal{F}_{t}}\geq \delta$.
\end{itemize}
\end{defn}

\begin{defn}(Variance-transformed processes) A sequence of random variables $X_0, X_1,\dots \in [0,n]$ is a variance-transformed process with respect to the filtration $\mathcal{F}_0, \mathcal{F}_1, \dots $ if for all $t \in \mathbb{N}$, the following conditions hold: 
\begin{itemize}
    \item[(1)] $0> \E{X_{t+1}-X_{t} \mid \mathcal{F}_{t}}\geq -\frac{c}{n}$;

    \item[(2)] $\exists \delta>0$ such that $\E{(X_{t+1}-X_{t})^2 \mid \mathcal{F}_{t}}\geq \delta$.
\end{itemize}
\end{defn}

\par This paper mainly focuses on 
random processes which consist of positive, weak (almost zero) 
or even a small negative drift with a constant second moment 
since these processes exhibit more complicated 
dynamics \cite{GeneralizedDynamicOneMax,EDAsTimo2016,gobel2018intuitive,9064720}.
A general polynomial tail bound is provided for these in \cite{kotzing_concentration_2016}, but any general exponential tail bounds for these processes are still missing.

\par In the following sections, we exploit the Optional 
Stopping Time Theorem to obtain our exponential tail 
bound. This theorem is crucial for proving the original 
additive drift theorem, as highlighted by~\cite {he_drift_2001}. This recurrence method can provide a 
different perspective to derive the exponential tail bound in 
runtime analysis. We defer the statements of Optional Stopping
Time Theorems in the appendix. 

\subsection{Previous Works and Discussion}
With the development of runtime analysis, researchers have 
established several concentration tail bounds for EAs. For 
example, \cite{lehre_general_2018} provides an exponential 
tail bound for the basic (1+1)-EAs on OneMax functions, which 
is a well-studied benchmark function to analyse the 
performance of EAs. 
To the best of our knowledge, the current 
best general tail bounds for both processes under the 
additive drift and variance-dominated processes can be found 
in \cite{kotzing_concentration_2016,lehre_general_2018}.

\cite{kotzing_concentration_2016} shows that the runtime is at most quadratic in $n$ with probability $1-p$ for any $p>0$. If we replace $1/ p^{\ell \log(c)}$ by $r>0$ and rewrite it in terms of an upper tail bound, then the original bound becomes that given two constants $ 1\leq c <n,\ell >0$ and for any $r>0$,
\begin{align}
    \Pr \left(T \geq r n^2  \right) \leq (\frac{1}{r})^{{1}/{\ell \log (c)}} \label{eq:timo1}. 
\end{align}

Although we have established a tail bound for variance-dominated processes that concentrate on the expectation in a polynomial order with respect to $r$ in Equation~(\ref{eq:timo1}), it is worth exploring whether a more precise concentration tail bound can be derived for such processes, such as an exponential tail. 
A sharper exponential tail bound can improve the expected runtime estimation and thus provide useful insights into randomised algorithms.

\section{A Recurrent Method in Upper Tail Bound}
Next, we explore how to derive a general framework for providing exponential tail bound for randomised algorithms, including evolutionary algorithms which satisfy certain conditions.  We explore the exit time of $X_{t}$ out of some interval $[0,b]$, using the same set-up as \cite{kotzing_concentration_2016}. 
\par In the proof of McDiamid inequality, ~\cite{mcdiarmid_1989} also uses the Hoeffding lemma and 
conditions on the past events to establish the recurrence. 
\cite{doerr_adaptive_2013} uses the multiplicative drift 
condition directly to build up the exponential recurrence 
relation and hence obtain an exponential tail bound. 
We want to borrow these ideas to derive an exponential tail bound for variance-dominated processes. To do this, we introduce the $k$-th hitting time of the target state, which is also used in the theorem (Theorem 
2.6.2) of \cite{menshikov_popov_wade_2016}.
We combine this recurrent method with the extended Optional Stopping Time theorem.

\subsection{Variance Overcomes Negative Drift w.h.p.}
\par We proceed to prove our main theorem by considering the most general variance drift theorem which overcomes some negative drift. Following the setting of \cite{hevia2023runtime}, in variance-transformed cases, we focus on the first hitting time of a discrete-time stochastic process $X_t$ at $0$ given that $X_t \in [0,b]$.
The proof of Theorem~\ref{thm:Uppertail2} uses Lemma~\ref{lem:stepsize}.

\begin{restatable}{lemma}{SecThreeLemOne}
\label{lem:stepsize}
Let $(X_{t})_{t\geq 0}$ be random variables over $\mathbb{R}_{\geq 0}$, each with finite expectation.  
Let $T$ be any stopping time of $X_{t}$.
If there exist constants $r,\eta>0$ with respect to $j,t$ such that for any $j\geq 0$, $\E{ \mathds{1}_{\{T > t\}} \mathds{1}_{\{|X_{t}-X_{t+1} |\geq j \}} \mid  \mathcal{F}_{t}} \leq r/(1+\eta)^j $, then there exists a positive constant $c$ such that $ \E{|X_{t+1}-X_t| \cdot  \mathds{1}_{\{T > t\}}\mid {\mathcal F}_t } \leq c$ for all $t \in \mathbb {N}\cup \{ 0 \}$.
\end{restatable}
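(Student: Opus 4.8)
The plan is to bound the conditional expected step size by a tail sum of indicators, and then match each tail term directly to the hypothesis. Write $Y := |X_{t+1}-X_t|\,\ind{T>t}$, a non-negative random variable, and note that the quantity to be bounded is precisely $\Et{Y}$. The finiteness of $\E{X_t}$ guarantees this conditional expectation is well defined. A preliminary observation I would record is that, because $T$ is a stopping time, $\{T>t\}\in\filt$, so $\ind{T>t}$ is $\filt$-measurable; this is what lets the conditioning interact cleanly with the tail events.

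The first key step is the layer-cake (tail-sum) inequality: for every non-negative real $y$ one has $y \le \lfloor y\rfloor + 1 = \sum_{j=0}^{\infty}\ind{y\ge j}$, so pointwise
\[
Y \;\le\; \sum_{j=0}^{\infty}\ind{Y\ge j}.
\]
Applying $\Et{\cdot}$ to both sides and interchanging the conditional expectation with the infinite sum of non-negative terms (conditional monotone convergence applied to the increasing partial sums, together with linearity) gives $\Et{Y}\le\sum_{j=0}^{\infty}\Et{\ind{Y\ge j}}$.

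The second step is to identify the tail events with those appearing in the hypothesis. The $j=0$ term is $\Et{\ind{Y\ge 0}}=1$, which I would carry separately. For every integer $j\ge 1$, the event $\{Y\ge j\}$ forces $\ind{T>t}=1$, whence $\ind{Y\ge j}=\ind{T>t}\,\ind{|X_{t+1}-X_t|\ge j}$; the hypothesis then applies verbatim and bounds each such term by $r/(1+\eta)^j$. Summing the geometric tail, $\sum_{j=1}^{\infty} r/(1+\eta)^j = r/\eta$, yields $\Et{Y}\le 1 + r/\eta$, so the claim holds with the uniform constant $c := 1+r/\eta$ for all $t$.

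The computation is mostly routine, and I expect no genuine obstacle. The only points needing care are (i) justifying the interchange of $\Et{\cdot}$ with the infinite sum, which the non-negativity of the terms reduces to conditional monotone convergence, and (ii) the rewriting $\ind{Y\ge j}=\ind{T>t}\,\ind{|X_{t+1}-X_t|\ge j}$ for $j\ge1$, which relies on the $\filt$-measurability of $\{T>t\}$ so that the conditional tail bound in the hypothesis is directly usable. Keeping the $j=0$ term (the additive $1$) apart from the geometric tail is the one place where an off-by-one slip could occur.
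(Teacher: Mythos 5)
Your proof is correct and takes essentially the same approach as the paper: both rest on the layer-cake representation of the conditional expectation followed by summing/integrating the geometric tail from the hypothesis. The only difference is that you use the discrete tail sum $\sum_{j=0}^{\infty}\Pr(Y\ge j\mid\mathcal{F}_t)$ (yielding $c=1+r/\eta$, with slightly more careful handling of the indicator $\mathds{1}_{\{T>t\}}$), whereas the paper uses the continuous version $\int_{0}^{\infty}\Pr(|X_{t+1}-X_t|\ge j\mid\mathcal{F}_t)\,dj$ (yielding $c=r/\log(1+\eta)$).
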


\par By using Lemma~\ref{lem:stepsize}, we now satisfy condition (4) in the Optional Stopping Time Theorem, enabling us to proceed with the main proof (details in the appendix). 
Utilising the extended Optional Stopping Time Theorem, we follow a classic approach for generalisation, which also frees us from the fixed step size condition and the need for the Azuma-Hoeffding inequality for sub-Gaussian supermartingales.~\footnote{The proofs in \cite{kotzing_concentration_2016} mainly rely on Azuma-Hoeffding inequality for sub-Gaussian supermartingales.} 
We further construct a new stochastic process $Y_t=b^2-(b-X_t)^2+\delta t$, connected to the original process and the variance of the drift. 
By employing the idea of the $k$-th hitting time from \cite{menshikov_popov_wade_2016}, we obtain the upper bound for the $k$-th hitting time, allowing us to construct the recurrence.
We present the main theorem of this paper.

\begin{restatable}{theorem}{SecThreeMain}
\label{thm:Uppertail2}
Let $(X_t)_{t\in\N}$ be a sequence of random variables in a 
finite state space $S \subseteq \R$ adapted to a filtration $(\F_t)_{t\in\N}$, and let $T=\inf\{t\geq 0 \mid X_t\le 0\}$. 
Suppose
\begin{enumerate}[leftmargin=*,  align=left]
    \item[(A1)] there exist $\delta>0$ such that for all $t<T$, it holds that
    \begin{align*}
    \mathrm{E}_t\left((X_{t+1}-X_t)^2 -2(X_{t+1}-X_t)(b-X_t)
    \right)\ge \delta  \end{align*} 
    \item[(A2)] and for all $t\le T$, it holds that $0 \le X_t \le b$.
\end{enumerate}
Moreover, for all $t \geq 0$, assume there exist constants $r,\eta>0$ with respect to $j,t$,  for any $j\geq 0$, $\E{ \mathds{1}_{\{T > t\}} \mathds{1}_{\{|X_{t}-X_{t+1} |\geq j \}} \mid  \mathcal{F}_{t}} \leq {r}/{(1+\eta)^j}$.  Then, for $\tau >0$, $\Pr(T > \tau) \leq e^{-\tau \delta/e b^2}$.
\end{restatable}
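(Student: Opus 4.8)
The plan is to turn assumption (A1) into a supermartingale statement, read off a bound on the \emph{expected} hitting time, and then amplify this into an exponential tail bound through a blockwise recurrence on the $k$-th hitting times $T_k$. First I would introduce the auxiliary process $Y_t=b^2-(b-X_t)^2+\delta t$ flagged in the setup. Writing $\Delta_t=X_{t+1}-X_t$ and expanding $(b-X_{t+1})^2=(b-X_t-\Delta_t)^2$ yields the exact one-step identity
\[
Y_{t+1}-Y_t=\delta-\bigl((X_{t+1}-X_t)^2-2(X_{t+1}-X_t)(b-X_t)\bigr).
\]
Taking $\mathrm{E}_t[\cdot]$ and invoking (A1) shows $\mathrm{E}_t[Y_{t+1}-Y_t]\le 0$ for every $t<T$, so $(Y_{t\wedge T})_{t\ge 0}$ is a supermartingale; by (A2), $X_t\in[0,b]$ gives $b^2-(b-X_t)^2\ge 0$, so it is also nonnegative. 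Since (A2) forces $X_T=0$, the process collapses to $Y_T=\delta T$ at the stopping time, whereas $Y_0=b^2-(b-X_0)^2\le b^2$.

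Next I would apply the extended Optional Stopping Theorem to $(Y_{t\wedge T})$. Its only delicate hypothesis is the integrability/step-size condition (condition (4)): the assumed geometric tail $\E{\mathds{1}_{\{T>t\}}\mathds{1}_{\{|X_t-X_{t+1}|\ge j\}}\mid\filt}\le r/(1+\eta)^j$ feeds directly into Lemma~\ref{lem:stepsize}, giving $\E{|X_{t+1}-X_t|\,\mathds{1}_{\{T>t\}}\mid\filt}\le c$ and hence a uniform bound on the conditional expected increment of $Y_t$. With OST available, $\E{Y_T}\le\E{Y_0}\le b^2$, i.e.\ $\delta\,\E{T}\le b^2$, so $\E{T}\le b^2/\delta=:m$. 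Rerunning the identical argument from an arbitrary start time $k$ (replacing $\delta t$ by $\delta(t-k)$ so the shifted process keeps the same increments) yields the crucial \emph{uniform} bound $\E{T_k-k\mid\filtuc{k}}\le (b^2-(b-X_k)^2)/\delta\le m$ on the event $\{T>k\}$, where $T_k$ is the $k$-th hitting time and $T_k=T$ there.

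Finally I would run the recurrence. Fixing the integer block length $L=\lceil e\,m\rceil$, Markov's inequality applied to the uniform bound gives, on $\{T>kL\}$, the one-block failure estimate $\Prob{T>(k+1)L\mid\filtuc{kL}}\le m/L\le 1/e$. Iterating through the tower property gives $\Prob{T>kL}\le e^{-k}$, and choosing $k=\lfloor\tau/L\rfloor$ produces $\Prob{T>\tau}\le e^{-\lfloor\tau/L\rfloor}$, which with $L\approx e b^2/\delta$ is the claimed $e^{-\tau\delta/(e b^2)}$.

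The main obstacle I anticipate is the rigorous application of the Optional Stopping Theorem to the unbounded stopping time $T$ together with the unbounded increments of $Y_t$: the compensator $\delta t$ grows without bound and $\Delta_t$ is only controlled in a tail sense, so one must first secure $\E{T}<\infty$ and then verify the precise OST hypotheses rather than naively passing to the limit — this is exactly the role played by the step-size assumption and Lemma~\ref{lem:stepsize}. A secondary, more cosmetic point is recovering the clean exponent $\tau\delta/(e b^2)$ in spite of the floor/ceiling rounding introduced by the block decomposition.
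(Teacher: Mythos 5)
Your proposal is correct and follows essentially the same route as the paper's proof: your process $Y_t=b^2-(b-X_t)^2+\delta t$ is exactly the paper's compensated process $Z_t$, you invoke Lemma~\ref{lem:stepsize} to satisfy condition (4) of the extended Optional Stopping Theorem just as the paper does, and your blockwise recurrence on the $k$-th hitting times $T_k$ via Markov's inequality and the tower property is the paper's recurrence argument verbatim (your explicit handling of the integer block length $L=\lceil em\rceil$ is in fact slightly more careful than the paper's treatment of the rounding).
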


Our main result (Theorem~\ref{thm:Uppertail2}) allows the increased tolerance of negative drift rather than non-negative drift tendency and only necessitates a constant second moment of drift, instead of variance, as outlined in \cite{kotzing_concentration_2016}. Consequently, we can establish a more precise exponential tail bound for stochastic processes with a constant second moment of the drift, even under weak, zero, or negative drift.

\subsection{Standard Variance Drift}
This section presents the standard variance drift scenario (Theorem~\ref{thm:ExpTail}) as a corollary of Theorem~\ref{thm:Uppertail2}. More precisely, we now restrict to the non-negative drift tendency. We first define several conditions which will be used later.

\begin{enumerate}[leftmargin=*, label=\arabic*), widest=7bis), align=left]
    \item[(C1*)] There exist constants $r,\eta>0$ with respect to $j,t$, such that for any $j\geq 0$ and for all $t \geq 0$,
                \begin{align*}
                    \E{ \mathds{1}_{\{T > t\}} \mathds{1}_{[|X_{t}-X_{t+1} |\geq j \}} \mid  \mathcal{F}_{t}} \leq \frac{r}{(1+\eta)^j}.
                \end{align*}
    \item[(C1)] There exists a constant $c>0$ such that $|X_{t}-X_{t+1}|<c $ for all $t \geq 0$.
    \item[(C2)] $\E{X_{t+1}-X_{t} \mid \mathcal{F}_{t}}\geq 0$ for all $t \geq 0$.
    \item[(C3)] There exists some constant $\delta>0$ such that $\E{X_{t+1}-X_{t})^2 \mid  \mathcal{F}_{t}} \geq \delta$ for all $t \geq 0$.
\end{enumerate}

\begin{restatable}{theorem}{SecThreeMainOneplus}
\label{thm:ExpTail}
Let $(X_{t})_{t\geq 0}$ be random variables over $\mathbb{R}_{\geq 0}$, each with finite expectation, such that conditions $(C1^*), (C2)$ and $(C3)$ hold. For any $b>0$, define $T= \inf \{t\geq 0 \mid X_{t}\geq b\}$.
If $X_0 \in [b]$, then $ \E{T} \leq  {(b^2-X_0^2)}/{\delta}$. Moreover, for $\tau>0$, $\Pr(T \geq \tau) \leq  e^{-\tau \delta/ eb^2  }$.
\end{restatable}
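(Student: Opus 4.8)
The plan is to derive both assertions from a single algebraic identity. Writing $\Delta_t := X_{t+1}-X_t$, expand $X_{t+1}^2 - X_t^2 = 2X_t\Delta_t + \Delta_t^2$, so that $\Et{X_{t+1}^2 - X_t^2} = 2X_t\,\Et{\Delta_t} + \Et{\Delta_t^2}$. Condition (C3) makes the second term at least $\delta$, and since $X_t \ge 0$ together with (C2) gives $\Et{\Delta_t} \ge 0$, the cross term $2X_t\,\Et{\Delta_t}$ is non-negative. Hence $\Et{X_{t+1}^2 - X_t^2} \ge \delta$ for all $t<T$. This is exactly where restricting to a non-negative drift tendency is used: the sign of the cross term must be controlled, whereas Theorem~\ref{thm:Uppertail2} builds this whole quantity directly into assumption (A1) and therefore tolerates negative drift.

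For the expected-runtime bound I would apply the classical additive drift theorem to the potential $g(X_t) := b^2 - X_t^2$. On $[0,b]$ this potential is non-negative and vanishes precisely at the target $X_t = b$, and by the identity above $\Et{g(X_t) - g(X_{t+1})} = \Et{X_{t+1}^2 - X_t^2} \ge \delta$ for every $t<T$. Additive drift then yields $\E{T} \le g(X_0)/\delta = (b^2 - X_0^2)/\delta$.

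For the tail bound I would reduce to Theorem~\ref{thm:Uppertail2} through the reflection $Y_t := b - X_t$. Then $Y_t \le 0$ exactly when $X_t \ge b$, so the hitting time is unchanged and $T = \inf\{t\ge 0 \mid Y_t \le 0\}$; moreover $X_t \in [0,b]$ gives $Y_t \in [0,b]$, which verifies (A2). To check (A1) for $(Y_t)$, note $Y_{t+1}-Y_t = -\Delta_t$ and $b - Y_t = X_t$, so the quadratic form in (A1) equals $\Delta_t^2 + 2\Delta_t X_t = X_{t+1}^2 - X_t^2$, whose conditional expectation is at least $\delta$ by the identity above. Finally $|Y_{t+1}-Y_t| = |X_{t+1}-X_t|$, so the step-size hypothesis of Theorem~\ref{thm:Uppertail2} is literally condition (C1*). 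Applying Theorem~\ref{thm:Uppertail2} to $(Y_t)$ gives $\Pr(T > \tau) \le e^{-\tau\delta/eb^2}$, and the stated $\Pr(T \ge \tau)$ bound follows because the estimate holds for every $\tau>0$.

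The main obstacle is conceptual rather than computational: recognising that the seemingly ad hoc quadratic in (A1) is precisely the one-step increment of the squared-distance potential, so that the reflection $Y_t = b - X_t$ converts the upward first-passage problem of Theorem~\ref{thm:ExpTail} into the downward first-passage problem governed by Theorem~\ref{thm:Uppertail2}. Care is also needed at the boundary. To apply additive drift one needs $g \ge 0$ with $g$ vanishing at the target, which forces $X_T = b$ exactly; I would therefore invoke the ambient state constraint $X_t \in [0,b]$ for $t \le T$ to rule out overshoot, since an overshoot with $g(X_T) < 0$ would weaken the upper bound on $\E{T}$. Likewise, in passing to Theorem~\ref{thm:Uppertail2} I must confirm that $(Y_t)$ meets its domain hypotheses, which follows from the same boundedness of the process.
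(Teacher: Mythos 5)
Your proof is correct and takes essentially the same route as the paper's: reflect the process via $Y_t = b - X_t$, check (A1), (A2) and the step-size condition, and invoke Theorem~\ref{thm:Uppertail2}. You are in fact more explicit at the two points the paper glosses over, namely the sign of the cross term $2X_t\,\Et{X_{t+1}-X_t}$ needed to verify (A1), and the additive-drift derivation of $\E{T}\leq (b^2-X_0^2)/\delta$, which the paper inherits implicitly from the interior of the proof of Theorem~\ref{thm:Uppertail2}.
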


Theorem~\ref{thm:ExpTail} tells us that under the standard variance drift case as discussed in \cite{kotzing_concentration_2016}, we can derive an exponential tail bound for the runtime and such a process exhibits a high concentration around the expectation. 

\par Now, we present a corollary which consists of the fixed step size condition.
\begin{restatable}{corollary}{SecThreeMainCorOne}
\label{cor:ExpTail_cor} 
Let $(X_{t})_{t\geq 0}$ be random variables over $\mathbb{R}_{\geq 0}$, each with finite expectation which satisfy conditions $(C1), (C2)$ and $(C3)$. For any $b>0$, define $T= \inf \{t\geq 0 \mid X_{t}\geq b\}$.
Given that $X_0 \in [0,b]$, then $ \E{T} \leq  {(b^2-X_0^2)}/{\delta}$. 
Moreover, for $\tau>0$, $\Pr(T \geq \tau) \leq  e^{-\tau \delta / eb^2}$.
\end{restatable}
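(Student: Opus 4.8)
The plan is to obtain Corollary~\ref{cor:ExpTail_cor} as an immediate consequence of Theorem~\ref{thm:ExpTail}. The two statements share the conditions $(C2)$ and $(C3)$, the same definition of $T=\inf\{t\ge 0\mid X_t\ge b\}$, and the identical pair of conclusions; they differ only in that the corollary assumes the bounded-step condition $(C1)$ whereas the theorem assumes the exponential-decay condition $(C1^*)$. I would therefore reduce the corollary to the theorem by proving the single implication $(C1)\Rightarrow(C1^*)$, after which Theorem~\ref{thm:ExpTail} delivers both $\E{T}\le(b^2-X_0^2)/\delta$ and $\Pr(T\ge\tau)\le e^{-\tau\delta/eb^2}$ verbatim.

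To prove the implication, first I would fix $t$ and view the conditional expectation appearing in $(C1^*)$ as a function of $j\ge 0$. Under $(C1)$ we have $|X_t-X_{t+1}|<c$ almost surely, so for every $j\ge c$ the event $\{|X_t-X_{t+1}|\ge j\}$ is impossible, the indicator $\mathds{1}_{\{|X_t-X_{t+1}|\ge j\}}$ vanishes, and hence $\E{\mathds{1}_{\{T>t\}}\mathds{1}_{\{|X_t-X_{t+1}|\ge j\}}\mid\mathcal{F}_t}=0$. For the remaining range $0\le j<c$ I would simply bound the product of indicators by $1$, so that the conditional expectation is at most $1$.

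It then remains to choose constants $r,\eta>0$ for which $r/(1+\eta)^j$ dominates this piecewise bound (equal to $0$ for $j\ge c$ and at most $1$ for $j<c$) across the whole range $j\ge 0$. The natural choice is $\eta=1$ and $r=2^c$: for $j<c$ one has $r/(1+\eta)^j=2^{c-j}>1$, while for $j\ge c$ the quantity to be matched is already $0$. This verifies $(C1^*)$ and completes the reduction. Because the whole argument is a reduction to an already-proved theorem, I do not expect a genuine obstacle; the only delicate point is to ensure that the chosen pair $(r,\eta)$ dominates the bound for \emph{all} real $j\ge 0$ simultaneously, in particular at the boundary $j=c$ and for non-integer $j$, rather than merely at integer values.
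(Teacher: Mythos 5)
Your proposal is correct and takes essentially the same route as the paper, whose proof is the one-line remark that the corollary ``is a direct result of Theorem~\ref{thm:ExpTail} by using fixed step size condition (C1).'' Your verification of the implication $(C1)\Rightarrow(C1^*)$ with the explicit constants $\eta=1$, $r=2^c$ simply fills in the details the paper leaves implicit, and it is sound (including at the boundary $j=c$ and for non-integer $j$).
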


Furthermore, we derive a tail bound for the variance-dominated processes with two absorbing states. 
Following the setting of Theorem 10 in \cite{gobel2018intuitive}, we will prove the next theorem. 

\begin{restatable}{theorem}{SecThreeMainTwo}
 \label{thm:ExpTail_2absorbing}
 Let $(X_{t})_{t\geq 0}$ be random variables over $\mathbb{R}_{\geq 0}$, each with finite expectation such that $(C1*)$, $(C3)$ and $\E{X_{t+1}-X_{t} \mid \mathcal{F}_{t}}= 0$ hold.
For any $b>0$, define $T= \inf \{t\geq 0 \mid X_{t} \in \{0,b\} \}$. 
If $X_0 \in [0,b]$, then $ \E{T}  \leq {\left(X_0(b-X_0) \right)}/{\delta}$.
\par Moreover, for $\tau > 0$, $\Pr(T \geq \tau) \leq  e^{-2\tau \delta / eb^2 }$.
\end{restatable}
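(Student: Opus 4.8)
The plan is to work through the symmetric potential $g(x)=x(b-x)$ and the transformed process $Z_t:=g(X_t)=X_t(b-X_t)$. Since $Z_t\in[0,b^2/4]$ and $Z_t=0$ holds exactly when $X_t\in\{0,b\}$, we have $T=\inf\{t\ge 0\mid Z_t=0\}$; we may assume $X_0\in(0,b)$, since otherwise $T=0$ and both claims are trivial. First I would compute the one-step drift of $Z_t$ by expanding
\[
Z_{t+1}-Z_t=(X_{t+1}-X_t)(b-2X_t)-(X_{t+1}-X_t)^2 .
\]
Taking $\E{\cdot\mid\mathcal{F}_t}$ and using the hypothesis $\E{X_{t+1}-X_t\mid\mathcal{F}_t}=0$ to annihilate the linear term (here $b-2X_t$ is $\mathcal{F}_t$-measurable), the exact zero-drift assumption collapses the drift to $\E{Z_{t+1}-Z_t\mid\mathcal{F}_t}=-\E{(X_{t+1}-X_t)^2\mid\mathcal{F}_t}\le-\delta$ by $(C3)$. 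This cancellation is the reason the theorem assumes a genuine martingale rather than merely non-positive drift, and it is the structural heart of the argument.

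For the expectation bound I would form the process $V_t:=Z_{t\wedge T}+\delta\,(t\wedge T)$ and check, from the drift estimate above, that it is a supermartingale. Applying the extended Optional Stopping Time Theorem then yields $\delta\,\E{T}\le V_0=Z_0=X_0(b-X_0)$, i.e.\ $\E{T}\le X_0(b-X_0)/\delta$, in line with the absorption-time bound of Theorem~10 in \cite{gobel2018intuitive}. The delicate point here is \emph{justifying} optional stopping: the increments are not assumed bounded, only exponentially tailed through $(C1^*)$. This is precisely what Lemma~\ref{lem:stepsize} is designed for: it converts the tail hypothesis $(C1^*)$ into a uniform bound $\E{|X_{t+1}-X_t|\,\mathds{1}_{\{T>t\}}\mid\mathcal{F}_t}\le c$, which supplies the integrability condition the stopping theorem requires.

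For the tail bound I would reuse the $k$-th hitting time recurrence machinery already developed for Theorem~\ref{thm:Uppertail2}, now driven by the potential $Z_t$ whose range is $[0,b^2/4]$. The idea is to cut $[0,\tau]$ into blocks and show that, conditioned on not yet being absorbed at the start of a block, the process is absorbed within that block with at least a constant probability: from any non-absorbed state $x\in(0,b)$ the same supermartingale started afresh gives an expected remaining absorption time at most $g(x)/\delta\le b^2/(4\delta)$, so Markov's inequality bounds the per-block survival probability, and multiplying across the blocks yields an exponential tail. Calibrating the block length exactly as in the single-barrier case (equivalently, fixing the per-block survival probability at $1/e$) then produces the stated exponent, giving $\Pr(T>\tau)\le e^{-2\tau\delta/(eb^2)}$.

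The main obstacle I expect is twofold. First, as noted, making the optional-stopping step rigorous without a bounded-step assumption—this is handled entirely by Lemma~\ref{lem:stepsize} and $(C1^*)$, and must be invoked carefully at the (possibly unbounded) stopping time. Second, calibrating the block length and the per-block absorption probability so that the constant in the exponent comes out as stated; this is where the range $b^2/4$ of the potential and the factor $e$ from the optimal block length enter. Once these two points are settled, the remainder is a routine repetition of the recurrence argument established for Theorem~\ref{thm:Uppertail2}.
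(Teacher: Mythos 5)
Your proposal is correct and follows essentially the same route as the paper's proof: the potential $X_t(b-X_t)$, the $\delta t$-compensated supermartingale, optional stopping justified through Lemma~\ref{lem:stepsize} and the extended Optional Stopping Theorem, and the $k$-th hitting-time block recurrence with per-block survival probability $e^{-1}$. The only (harmless) difference is in constants: your sharper range bound $Z_t\le b^2/4$ would actually yield the exponent $4\tau\delta/(eb^2)$, whereas the paper bounds the conditional expected remaining time by $b^2/(2\delta)$ and so lands exactly on the stated factor $2$; your bound implies the stated one a fortiori.
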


This proof is similar to Theorem~\ref{thm:Uppertail2} except that we use a different stochastic process $Y_t=X_t(b-X_t)+\delta t$ and show that $Y_t$ is a super-martingale. 
Unlike the proof of Theorem~\ref{thm:Uppertail2}, the proof of Theorem~\ref{thm:ExpTail_2absorbing} uses the extended Optional Stopping Time Theorem for super-martingale~\cite{williams1991probability}.
We defer the proof to the appendix.

\subsection{Standard Drift}

\par If a stochastic process has drift $\varepsilon$ where $\varepsilon>0$ is some positive constant, then we can give a different proof for 
the upper tail bound for additive drift from the proof in \cite{kotzing_concentration_2016}. 
This provides a more precise exponential upper tail bound.

\begin{restatable}{theorem}{SecThreeMainThree}
 \label{thm:ExpTail_drift}Let $(X_{t})_{t\geq 0}$ be random variables over $\mathbb{R}$, each with finite expectation which satisfy condition $(C1*)$ and $\mathbb{E}[X_{t+1}-X_{t} \mid  \mathcal{F}_{t}]\geq \epsilon$ for some $\epsilon > 0$.
For any $b>0$, define $T= \inf \{t\geq 0 \mid X_{t}\geq b\}$. If $X_0 \in [0,b]$, then $ E(T)  \leq  \frac{b-X_0}{\epsilon}$. Moreover, for $\tau >0$, $\Pr \left(T \geq \tau \right) \leq  e^{-\tau \epsilon/{eb } }$.
\end{restatable}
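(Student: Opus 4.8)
The plan is to treat the two claims in turn: first establish the linear mean bound $\E{T}\le (b-X_0)/\epsilon$ by the classical additive drift argument via optional stopping, and then bootstrap this into the exponential tail by a restart/recurrence over fixed-length epochs, exactly the recurrence strategy already used for Theorem~\ref{thm:Uppertail2}. Because the drift is a positive constant, the auxiliary process is far simpler than in the variance-transformed case, so essentially all the work lies in justifying the optional stopping and in making the recurrence precise.

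For the mean bound I would introduce the stopped distance process $D_{t}:=b-X_{t}$ together with the shifted, stopped process $Z_{t}:=D_{t\wedge T}+\epsilon\,(t\wedge T)$. Before $T$ the process lies in $[0,b]$ (as in condition (A2) of Theorem~\ref{thm:Uppertail2}), so $D_{t}\ge 0$, and the drift hypothesis $\Et{X_{t+1}-X_{t}}\ge\epsilon$ gives, on $\{T>t\}$, that $\Et{D_{t+1}}\le D_{t}-\epsilon$, whence $\Et{Z_{t+1}}\le Z_{t}$; that is, $Z_{t}$ is a supermartingale. Condition $(C1^{*})$ is precisely what feeds Lemma~\ref{lem:stepsize} to bound the expected one-step increment $\E{|X_{t+1}-X_{t}|\cdot\ind{T>t}\mid\mathcal{F}_{t}}$ by a constant, which is the integrability hypothesis needed to apply the extended Optional Stopping Time Theorem. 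Applying optional stopping to $Z_{t}$ and letting the truncation horizon tend to infinity (using $D_{t\wedge T}\ge 0$ and monotone convergence) yields $\epsilon\,\E{T}\le D_{0}=b-X_{0}$, which is the first claim.

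For the tail bound the key observation is that the very same argument, started from an arbitrary time $k$, gives the \emph{uniform} estimate $\E{T-k\mid\mathcal{F}_{k}}\le (b-X_{k})/\epsilon\le b/\epsilon=:m$ on $\{T>k\}$, since $X_{k}\ge 0$. I then set the epoch length to $L:=em=eb/\epsilon$; this choice of $e$ times the mean is exactly what optimises the resulting exponent, which explains the factor $e$ in the stated bound. Conditional Markov applied to the uniform estimate shows that, on $\{T>k\}$, the process fails to hit within the next $L$ steps with conditional probability at most $m/L=1/e$. Taking $k$ to run through successive multiples of $L$ and chaining these estimates by the tower property gives the recurrence $\Prob{T>(j+1)L}\le \tfrac1e\,\Prob{T>jL}$, hence $\Prob{T>NL}\le e^{-N}$. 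Writing $\tau=NL$ with $N=\tau\epsilon/(eb)$ produces $\Prob{T\ge\tau}\le e^{-\tau\epsilon/(eb)}$, as claimed.

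I expect the main obstacle to be the careful verification of the hypotheses of the Optional Stopping Time Theorem for the unbounded real-valued process $X_{t}$ — in particular invoking Lemma~\ref{lem:stepsize} through $(C1^{*})$ to control the expected increments and justifying the passage to the limit as the truncation horizon grows. A secondary technical point is ensuring the process stays in $[0,b]$ before $T$, so that the distance $b-X_{k}$ is uniformly at most $b$ (this is what yields the clean denominator $b$ in the exponent rather than a state-dependent quantity), and absorbing the rounding that arises from a non-integer number of epochs into the stated constant.
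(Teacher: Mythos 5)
Your proposal is correct and takes essentially the same route as the paper's proof: the paper likewise obtains the mean bound by additive drift (applying the extended Optional Stopping Theorem to the submartingale $X_t-\epsilon t$, with Lemma~\ref{lem:stepsize} supplying the integrability condition), then derives the uniform restart estimate $\E{T_k-k\mid\mathcal{F}_k}\le b/\epsilon$ at the $k$-th hitting times and chains conditional Markov bounds over epochs of length $\theta=eb/\epsilon$ into the factor-$1/e$ recurrence and the exponential tail. The only cosmetic difference is that you work with the mirrored, stopped supermartingale $b-X_{t\wedge T}+\epsilon(t\wedge T)$ instead of the paper's $Z_t=X_t-\epsilon t$; note that, exactly like the paper, you implicitly use $X_k\ge 0$ at restart times and neglect overshoot above $b$, so both arguments share the same unstated caveats.
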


\par We recover the exponential upper tail bound for the additive drift theorem. The bound we obtain gets rid of the coefficients ${1}/{8c^2}$ in \cite{kotzing_concentration_2016}. Theorem 2.5.12 of \cite{menshikov_popov_wade_2016} provides a similar result and uses a similar recurrence proof idea. While our result generalises the result in \cite{menshikov_popov_wade_2016} by releasing the fixed step size, we provide a meaningful bound on the first hitting time instead of bounding it above by infinity.

\par In summary, we have discovered a simple alternative to the Azuma-Hoeffding inequality that provides an exponential tail bound by only relying on basic martingale theory. 
This result can be applied to the random local search (RLS) type algorithms that make finite steps at each iteration, as well as other randomised algorithms including evolutionary algorithms (EAs) that account for the possibility of large jumps occurring.
Another benefit from Theorem~\ref{thm:Uppertail2} is that it allows the tolerance of the negative drift up to $-\Omega(1/b)$.

\section{Applications to Random 2-SAT and Graph Colouring}

\par In this section, we illustrate our theorems on practical examples. 
We consider the examples provided by \cite{mcdiarmid_1993,mitzenmacher_probability_2005,gobel2018intuitive}.
which include variance-dominated processes. 
We first discuss the Random 2-SAT problem.

\subsection{Applications to Random 2-SAT}

\par The 2-SAT Algorithm is designed to solve instances of the 2-SAT problem, where a formula consists of clauses and each of them contains exactly two literals (either variables or negations).
In each iteration, the algorithm selects an unsatisfied clause and picks one of the literals uniformly at random. The truth value of the variable corresponding to this literal is then inverted. 
Repeat the process until either we meet the stopping criteria or a valid truth assignment is found. 

\par \cite{papadimitriou1991selecting} firstly provided a time complexity analysis on such a simple randomised algorithm that returns a satisfying
assignment of a satisfiable 2-SAT formula $\phi$ with $n$ variables. 
Later, \cite{gobel2018intuitive} recovered the results using drift analysis tools which we put in the Appendix.

By applying Theorem~\ref{thm:ExpTail} with a variance bound $1$, we can bound the number of function evaluations of order $O(n^4)$ with an upper exponential tail bound.

\begin{restatable}{theorem}{SecFourMainOne}
\label{thm:2sat2}
Given any $r\geq 0$, the randomised 2-SAT algorithm, when run on a satisfiable 2-SAT formula
over $n \in \mathbb{N}$ variables, terminates in at most $r n^4$ time
with probability at least $1-e^{-r/e}$.
\end{restatable}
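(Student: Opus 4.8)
The plan is to track the algorithm's progress with a one‑dimensional potential and then invoke the standard variance drift bound (Theorem~\ref{thm:ExpTail}). Since $\phi$ is satisfiable, I would fix once and for all a satisfying assignment $\alpha^\ast$, and let $a_t\in\{0,1\}^n$ be the assignment held after $t$ iterations. Define the potential $X_t := n-\hamming{a_t}{\alpha^\ast}$, i.e.\ the number of variables on which $a_t$ agrees with $\alpha^\ast$, so that $X_t\in\{0,1,\dots,n\}\subseteq[0,n]$ and $X_t=n$ exactly when $a_t=\alpha^\ast$. Because the algorithm halts as soon as it holds \emph{any} satisfying assignment, its iteration count is bounded above by $T:=\inf\{t\ge 0\mid X_t\ge n\}$; hence it suffices to establish an exponential tail bound on $T$ with $b=n$.

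Next I would verify conditions $(C1^\ast)$, $(C2)$ and $(C3)$ for this process at every step before the current assignment becomes satisfying. The crucial drift observation is: if the selected clause $C$ is unsatisfied by $a_t$, then both its literals are false under $a_t$, whereas $\alpha^\ast$ satisfies $C$, so at least one of the two variables occurring in $C$ is set differently in $a_t$ than in $\alpha^\ast$. Picking a literal of $C$ uniformly at random therefore flips a \emph{disagreeing} variable (raising $X_t$ by one) with probability at least $1/2$, and a correctly‑set one (lowering $X_t$ by one) otherwise, so $\Et{X_{t+1}-X_t}\ge \tfrac{1}{2}(+1)+\tfrac{1}{2}(-1)=0$, giving $(C2)$. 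Each step moves $X_t$ by exactly $\pm 1$, whence $|X_{t+1}-X_t|=1$ (so $(C1)$, and trivially $(C1^\ast)$ with no large jumps) and $\Et{(X_{t+1}-X_t)^2}=1$, so $(C3)$ holds with $\delta=1$.

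With these conditions in place, Theorem~\ref{thm:ExpTail} applied with $b=n$ and $\delta=1$ yields $\Pr(T\ge\tau)\le e^{-\tau/(e n^2)}$ for every $\tau>0$; equivalently, the number of iterations is at most $\rho\,n^2$ with probability at least $1-e^{-\rho/e}$. I would then translate iterations into the $n^4$ time of the statement: in each iteration the algorithm inspects the clauses to find an unsatisfied one (and to detect termination), and since a 2‑SAT instance has at most $\binom{n}{2}\cdot 4=O(n^2)$ distinct clauses, one iteration costs at most $n^2$ clause evaluations. Hence the total number of function evaluations is at most $n^2\cdot T$, and choosing $\tau=r n^2$ gives $\Pr(\text{evaluations}\ge r n^4)\le \Pr(T\ge r n^2)\le e^{-r/e}$, which is exactly the claim.

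The main obstacle is the drift/variance verification rather than the invocation of the drift theorem: one must argue carefully that an unsatisfied clause always exposes a disagreeing variable, so the walk never has negative drift, and that the step size is exactly one, so the second moment is precisely the constant $\delta=1$. A secondary delicate point is the bookkeeping that turns the $O(n^2)$ iteration bound into the $O(n^4)$ time bound while keeping the clean constant in the exponent; this rests on bounding the per‑iteration cost by $n^2$, i.e.\ on the $O(n^2)$ bound for the number of distinct clauses. A final minor technicality is that the algorithm may terminate at a satisfying assignment other than $\alpha^\ast$ (so $X_t<n$ at termination); this only helps, since the true runtime is then no larger than the full‑agreement hitting time $T$ that we bound.
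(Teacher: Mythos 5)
Your proposal is correct and follows essentially the same route as the paper's own proof: the same agreement potential $X_t$ (number of variables matching a fixed satisfying assignment), verification of $(C1^*)$, $(C2)$, $(C3)$ with fixed step size $1$ and $\delta=1$, application of Theorem~\ref{thm:ExpTail} with $b=n$ to get $\Pr(T\geq rn^2)\leq e^{-r/e}$ on the iteration count, and the same final multiplication by the $O(n^2)$ per-iteration cost to reach $rn^4$ total time. The only difference is presentational: you prove the drift and second-moment facts directly via the Papadimitriou random-walk argument, whereas the paper imports them from G\"obel et al.
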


\subsection{Applications to Graph Colouring}
 Now we consider graph colouring, which has already been studied by \cite{mcdiarmid_1993} and \cite{gobel2018intuitive}. 
The recolour algorithm generates a 2-colouring mapping with the condition that no monochromatic edges can be found.
The algorithm assumes a subroutine called SEEK, which, given a 2-colouring of the points, outputs a monochromatic edge if one exists. 
If there are no monochromatic edges, then the algorithm terminates.
Otherwise, the algorithm repeats picking a point uniformly at random from the given monochromatic edge and changes its colour.
\par \cite{gobel2018intuitive} provided a simpler proof of the $O(n^4)$ expected runtime of the recolouring algorithm for finding a 2-colouring with no monochromatic triangles on 3-colorable graphs. 
Following the setting and the proof of \cite{gobel2018intuitive}, by using Theorem~\ref{thm:ExpTail_2absorbing}, we can derive the following:

\begin{restatable}{theorem}{SecFourMainTwo}
\label{thm:recoloring}
Given any $r\geq 0$, the randomised Recolouring algorithm on a 3-colorable graph with $n \in \mathbb{N}$ vertices 
over $n \in \mathbb{N}$ variables, terminates in at most $r n^4$ time
with probability at least $1- e^{-4r/{3e}}$.
\end{restatable}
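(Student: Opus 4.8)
The plan is to model the recolouring dynamics as a zero-drift, constant-second-moment process on an interval with two absorbing endpoints, and then invoke Theorem~\ref{thm:ExpTail_2absorbing}. First I would fix the structure that makes the problem well-posed. Since $G$ is $3$-colourable, fix a proper $3$-colouring with classes $V_1,V_2,V_3$ and merge $V_1\cup V_2$ into one colour and $V_3$ into the other; every triangle meets all three classes, so in the resulting $2$-colouring $c^\ast$ no triangle is monochromatic, and the colour-swapped colouring $\bar{c}^\ast$ is valid for the same reason. These two complementary colourings will realise the two absorbing endpoints of the potential.

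Following the setup of Theorem~10 in \cite{gobel2018intuitive}, I would introduce a potential $X_t$ that counts the disagreement between the current colouring and $c^\ast$, weighting the classes so that $X_t=0$ corresponds to $c^\ast$ and $X_t=b$ to $\bar{c}^\ast$, with $0\le X_t\le b$ throughout. Let $T$ be the termination time, i.e.\ the first iteration with no monochromatic triangle. While $t<T$ the subroutine returns a monochromatic triangle; relative to $c^\ast$ such a triangle is never monochromatic, so it contains either one or two misplaced vertices, and the class weights can be chosen so that recolouring a uniformly random one of its three vertices yields expected change zero in both cases. This gives the zero-drift condition $\E{X_{t+1}-X_t\mid\F_t}=0$ and, simultaneously, a constant lower bound $\E{(X_{t+1}-X_t)^2\mid\F_t}\ge\delta$ on the second moment, since with constant probability the recoloured vertex changes $X_t$ by a nonzero bounded amount.

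Next I would verify the remaining hypothesis (C1*). Because each iteration recolours a single vertex, $|X_{t+1}-X_t|$ is bounded by a fixed constant, so the geometric step-size tail $\E{\ind{T>t}\ind{|X_t-X_{t+1}|\ge j}\mid\F_t}\le r/(1+\eta)^j$ holds trivially: the indicator vanishes once $j$ exceeds the maximal step. With zero drift, the constant second moment, and the bounded steps in place, Theorem~\ref{thm:ExpTail_2absorbing} applies to $T$ (the conditions being needed only for $t<T$, while the two extreme valid colourings pin the range to $[0,b]$) and yields both $\E{T}\le X_0(b-X_0)/\delta$ and the tail $\Pr(T\ge\tau)\le e^{-2\tau\delta/(eb^2)}$.

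Finally I would substitute the parameters of the recolouring process. With the range $b$ and second-moment constant $\delta$ of the construction in \cite{gobel2018intuitive} satisfying $2\delta n^4/b^2=4/3$, the expectation bound recovers the known $O(n^4)$ runtime, and setting $\tau=rn^4$ turns the tail into $\Pr(T\ge rn^4)\le e^{-4r/(3e)}$, i.e.\ the algorithm terminates within $rn^4$ iterations with probability at least $1-e^{-4r/(3e)}$. I expect the main obstacle to be the zero-drift verification: the naive Hamming-distance potential has a drift whose sign depends on which monochromatic triangle is returned, so a genuine martingale requires the correctly weighted disagreement count, after which one must check that its range $b$ and variance constant $\delta$ combine to land exactly on the exponent $4/3$.
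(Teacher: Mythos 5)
Your core reduction is sound and close in spirit to the paper's: the paper also proves this theorem by building a zero-drift, constant-second-moment potential with bounded steps and invoking Theorem~\ref{thm:ExpTail_2absorbing}, following \cite{gobel2018intuitive}. The potentials differ, though. Instead of your weighted disagreement count over all vertices (weight $1$ on $V_1\cup V_2$, weight $2$ on $V_3$, steps in $\{\pm 1,\mp 2\}$, $\delta=2$, $b=n+|V_3|$, absorbing states only at the two colourings $c^\ast,\bar c^\ast$), the paper fixes two colour classes of the $3$-colouring and counts agreements only on their union, so flips of third-class vertices contribute $0$: the step distribution is uniform on $\{-1,0,+1\}$, giving $\delta=2/3$, range $m\le n$, and a much larger family of absorbing (valid) colourings. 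Your weighting trick is a legitimate alternative; both handle the drift-sign difficulty you correctly identified.

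The genuine gap is in your final substitution. You plug $\tau=rn^4$ into the tail bound and assert the construction satisfies $2\delta n^4/b^2=4/3$; but for any potential of this kind $b=\Theta(n)$ and $\delta=\Theta(1)$, so $2\delta n^4/b^2=\Theta(n^2)$, and no choice of weights can make it a constant. The missing factor of $n^2$ does not live in the drift argument at all: the potential bounds the number of recolouring \emph{iterations} by $rn^2$ with probability $1-e^{-2\delta rn^2/(eb^2)}$, and the theorem's $rn^4$ is \emph{time}, obtained by multiplying the iteration count by the $O(n^2)$ per-iteration cost of the SEEK subroutine --- this is exactly how the paper concludes (and likewise in its 2-SAT analysis). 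Separately, the exponent constant needs actual verification rather than the assertion that the parameters ``land exactly on $4/3$'': with your potential the exponent is $4rn^2/(eb^2)$, which meets the required $4r/(3e)$ only when $b=n+|V_3|\le\sqrt{3}\,n$, i.e.\ only if $V_3$ is chosen with $|V_3|\le(\sqrt{3}-1)n$ (taking the smallest colour class, $|V_3|\le n/3$, suffices); an arbitrary class choice can fail. With the paper's potential ($\delta=2/3$, $b=m\le n$) the constant $4/3$ falls out immediately.
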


\section{Applications to Coevolutionary Algorithms}
Next, we consider a slightly complicated example: competitive co-evolutionary algorithms (CoEAs). 
Competitive co-evolutionary algorithms are designed to solve maximin optimisation or adversarial optimisation problems, including two-player zero-sum games~\cite{rozenberg_coevolutionary_2012,lehre_runtime_2022}. 
There are various applications, 
including CoEA-GAN \cite{toutouh_spatial_2019}, 
competitive co-evolutionary search heuristics on cyber security problem~\cite{DefendIT2023} and enhanced GANs by using a co-evolutionary approach for image translation~\cite{shu2019coeagan}.

We are interested in whether competitive CoEAs can help find Nash equilibrium efficiently. 
We use the formulation in \cite{nisan_roughgarden_tardos_vazirani_2007} to define Nash equilibrium. This paper focuses on Pure Strategy Nash Equilibrium (abbreviated NE).

\begin{defn}
(\cite{nisan_roughgarden_tardos_vazirani_2007})
Consider a two-player zero-sum game.
Given a search space $\X \times \Y$ and a payoff function $g: \X \times \Y \rightarrow \mathbb{R}$, if for all $(x,y) \in \X \times \Y$
\begin{align*}
    g(x,y^*) \leq g(x^*,y^*) \leq g(x^*,y).
\end{align*}
then $(x^*,y^*)$ is called a Pure Strategy Nash Equilibrium of a two-player zero-sum game.
\end{defn}

The pairwise dominance relation has been defined and introduced into a population-based CoEA in \cite{lehre_runtime_2022}.

\begin{defn}[Pairwise dominance]
\label{def:dominance-relation}
Given a function $g=\X \times \Y \rightarrow \R$ and two pairs $(x_1,y_1),(x_2,y_2)\in  \X \times \Y$, we say that $(x_1,y_1)$ dominates $(x_2,y_2)$ with respect to $g$, denoted $(x_1,y_1)\succeq_g(x_2,y_2)$, if and only if
$g(x_1,y_2)\ge g(x_1,y_1)\ge g(x_2,y_1)$.
\end{defn}

There is a single-pair CoEA called Randomised Local Search with Pairwise Dominance (RLS-PD)~\cite{hevia2023runtime}.
It has been shown that RLS-PD can find the NE of a simple pseudo-Boolean benchmark called \bilinear in expected polynomial runtime. 
The processes induced by RLS-PD on the \bilinear problem is exactly a variance-transformed process. 
We would like to use Theorem~\ref{thm:Uppertail2} to show the exponential tail bound for the runtime.

RLS-PD samples a search point (a pair of point $(x_1,y_1) \in \X \times \Y$) uniformly at random. 
In each iteration, RLS-PD uses the local mutation operator to generate the new search point where the local mutation operator is sampling a random Hamming neighbour. 
If the new search point dominates the original search point in a pairwise-dominance manner, then the original search point is replaced by the new one. Otherwise, the original search point remains the same.

\subsection{The \bilinear Problem}
In this section, we consider a simple class of discrete maximin benchmark called \bilinear, which was first proposed by~\cite{lehre_runtime_2022}.
In this work, we use a variation of \bilinear as \cite{hevia2023runtime} to simplify our calculation and illustrate applications of our main theorem. 
It has been empirically shown in \cite{hevia2023runtime} 
that RLS-PD behaves similarly on the original definition of \bilinear and the revised definition.
In this paper, we only consider this variation of \bilinear.

\begin{defn}
(\cite{hevia2023runtime})
The \bilinear function is defined for two parameters $\alpha, \beta \in (0,1)$ by
\begin{multline*}
    \bilinear_{\alpha,\beta}(x,y)
    :=\ones{y}(\ones{x}-\beta n) - \alpha n \ones{x} + E_1+E_2
\end{multline*}
with the error terms $E_1:= \max\{(\alpha n - \ones{y})^2,1\}/n^3$ and $E_2:=-\max\{(\beta n - \ones{x})^2,1\}/n^3$.
We also denote the set of Nash equilibria as $\OPT$, where
$\OPT:=\{(x,y)\mid \ones{x}=\beta n \wedge \ones{y}=\alpha n\}$.     
\end{defn}
We consider $\OPT$ as our solution concept and the problem setting $\alpha=1/2\pm O(1/\sqrt{n})$ and $\beta=1/2\pm O(1/\sqrt{n})$ as in \cite{hevia2023runtime}.
We now derive the exponential tail bound of RLS-PD to find the Nash equilibrium.

\subsection{RLS-PD solves \bilinear efficiently w.h.p.}
\begin{restatable}{theorem}{SecFiveMainOne}
\label{thm:RLS_runtime}
Consider $\alpha\in[1/2-A/\sqrt{n},1/2+A/\sqrt{n}]$ and $\beta\in [1/2-B/\sqrt{n},1/2+B/\sqrt{n}]$, where $A,B>0$ are constants and $3(A+B)^2\le 1/2-\delta'$ for some constant $\delta'>0$.
The expected runtime of  RLS-PD on $\bilinear_{\alpha,\beta}$ is $O(n^{1.5})$.  
Moreover, given any $r\geq 0$, the runtime is at most $2rn^{1.5}$, with probability at least $1- e^{-\Omega(r)}$.
\end{restatable}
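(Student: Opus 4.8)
The plan is to project RLS-PD onto a one-dimensional potential and invoke Theorem~\ref{thm:Uppertail2}. Since $\bilinear_{\alpha,\beta}$ depends on $(x,y)$ only through $i_t:=\ones{x_t}$ and $j_t:=\ones{y_t}$, the pairwise-dominance rule reduces to a sign test: an accepted $x$-mutation moves $i_t$ in the direction $\operatorname{sign}(j_t-\alpha n)$ and an accepted $y$-mutation moves $j_t$ in the direction $-\operatorname{sign}(i_t-\beta n)$, with the error terms $E_1,E_2$ breaking ties near the equilibrium so as to push $i_t$ towards $\beta n$ and $j_t$ towards $\alpha n$. I would set $X_t:=\abs{i_t-\beta n}+\abs{j_t-\alpha n}$, so that $X_t\ge0$, $X_t=0$ exactly on $\OPT$, and the runtime equals $T=\inf\{t\ge0\mid X_t\le0\}$. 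A single Hamming flip changes $i_t$ or $j_t$ by one, so $\abs{X_{t+1}-X_t}\le1$ and the light-tailed step-size hypothesis of Theorem~\ref{thm:Uppertail2} holds trivially (the indicator vanishes for $j\ge2$, so one may take $r=1$ and any $\eta>0$).

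Next I would check the remaining hypotheses. Because $\alpha,\beta=\tfrac12\pm O(1/\sqrt n)$ and the process stays in a band with $i_t,j_t=n/2\pm o(n)$, a bit of the required type is sampled and accepted with probability $\Theta(1)$, giving a constant second moment $\Et{(X_{t+1}-X_t)^2}\ge\delta=\Omega(1)$. The decisive structural point is the min--max rotation: evaluating $\Et{X_{t+1}-X_t}$ in each of the four quadrants of $(i_t-\beta n,\,j_t-\alpha n)$, the $x$- and $y$-contributions nearly cancel and leave a weakly restoring drift $\Et{X_{t+1}-X_t}\le0$ of size $O(X_t/n)$. Once non-positivity of the drift is established, condition (A1) is immediate, since $b-X_t\ge0$ gives
\[
\Et{(X_{t+1}-X_t)^2-2(X_{t+1}-X_t)(b-X_t)}\ \ge\ \Et{(X_{t+1}-X_t)^2}\ \ge\ \delta .
\]

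To conclude I would take $b=\Theta(n^{3/4})$. The uniform initialisation gives $X_0=O(\sqrt n)\le b$ w.h.p., and the restoring drift pins the chain at radius $\Theta(\sqrt n)\ll b$; hence (A2), $0\le X_t\le b$ for $t\le T$, holds with overwhelming probability. Formally I would run the argument on the chain stopped at the first time it would exceed $b$ and absorb the super-polynomially small escape probability by a union bound. Theorem~\ref{thm:Uppertail2} then gives $\Pr(T>\tau)\le e^{-\tau\delta/(eb^2)}$; with $\delta=\Omega(1)$ and $b^2=\Theta(n^{1.5})$, the choice $\tau=2rn^{1.5}$ yields $\Pr(T>2rn^{1.5})\le e^{-\Omega(r)}$, and summing this tail over $\tau$ bounds $\E{T}=O(b^2/\delta)=O(n^{1.5})$.

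The hard part is the drift estimate, not the subsequent application of the theorem. The quadrant-by-quadrant computation must be combined with the delicate behaviour on the lines $i_t=\beta n$ and $j_t=\alpha n$ and at the equilibrium, where the vanishing main payoff makes the tiny error terms $E_1,E_2$ decide each move; there one must simultaneously keep the net drift non-positive and the second moment $\Omega(1)$. This is precisely where the hypothesis $3(A+B)^2\le\tfrac12-\delta'$ is used: it forces $i_t,j_t$ to remain in a band around $n/2$ on which the imbalance coming from $\alpha\neq\beta$ (of order $(A+B)/\sqrt n$) cannot overturn the restoring term, so that the drift stays $\le0$ and the acceptance probabilities stay bounded away from $0$ and $1$. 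Verifying this uniformly over the band is the crux of the proof.
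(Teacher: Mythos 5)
Your reduction to the Manhattan potential and the plan to invoke Theorem~\ref{thm:Uppertail2} match the paper's overall strategy, but your central drift claim has the wrong sign, and this is fatal rather than cosmetic. You assert that $\Et{X_{t+1}-X_t}\le 0$ uniformly (a ``weakly restoring'' drift of size $O(X_t/n)$), which holds only in the symmetric case $\alpha=\beta=1/2$. For general $\alpha,\beta=1/2\pm O(1/\sqrt n)$ the quadrant computation gives, e.g.\ in the first quadrant, $\Et{\M_{t+1}-\M_t}=(\ones{x_t}-\ones{y_t})/(2n)\approx ((\beta-\alpha)n-\M_t)/(2n)$, which is \emph{positive} (pointing away from $\OPT$) whenever $\M_t<(\beta-\alpha)n$; in general the drift can be outward, of magnitude up to roughly $(A+B)/(2\sqrt n)$, throughout the region $\M_t<(A+B)\sqrt n$. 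This is precisely the content of Lemma~\ref{lem:drift_Manhattan_RLS} and of Theorem~\ref{thm:reaching_the_optimum_RLS} (RLS-PD moves away from $\OPT$ by $\Omega(\sqrt n)$ w.h.p.); your claim would contradict that theorem. Consequently your verification of (A1) --- dropping the cross term because the drift is non-positive --- collapses, and with your choice $b=\Theta(n^{3/4})$ it cannot be repaired: the term $-2\,\Et{X_{t+1}-X_t}(b-X_t)$ can be as negative as $-\Theta(n^{1/4})$, which swamps the $O(1)$ second moment, so (A1) is genuinely violated, not merely unproven. Relatedly, the hypothesis $3(A+B)^2\le 1/2-\delta'$ is not there to keep the drift non-positive; it is there so that, once $b$ is taken of order $\sqrt n$, the second moment (about $1/2$) beats the constant-size outward contribution of order $(A+B)^2$ in (A1).

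The paper's proof is shaped by exactly this obstacle. It takes $b=2(A+B)\sqrt n+1=\Theta(\sqrt n)$ so that the cross term stays bounded by a constant, which in turn forces a two-phase argument, because a uniformly random initial point has $\M_0=\Theta(\sqrt n)$ and typically exceeds this $b$: Phase 1 (outer region, where the drift \emph{is} inward) is handled by the additive-drift tail bound of Theorem~\ref{thm:ExpTail_drift} applied to the truncated potential $h_{\max}-h(\M_t)$; Phase 2 (inner region $\M_t\le 2(A+B)\sqrt n$) applies Theorem~\ref{thm:Uppertail2} with drift parameter $\delta_1=1/(2\sqrt n)$ and $b-1=2(A+B)\sqrt n$, giving exponent $rn^{1.5}\delta_1/(e(b-1)^2)=\Omega(r)$; a union bound over $\{T\ge 2rn^{1.5}\}\subseteq\{T_{\mathrm{phase1}}\ge rn^{1.5}\}\cup\{T_{\mathrm{phase2}}\ge rn^{1.5}\}$ then yields the claim. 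Any repair of your argument will be led to this same structure: a $\Theta(\sqrt n)$ cut-off for the variance regime plus a separate treatment of the approach from distance $\Theta(\sqrt n)$ --- your ``pinning'' argument cannot substitute for Phase 1, since it presumes the restoring drift you have not established.
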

We defer the proof of Theorem~\ref{thm:RLS_runtime} to the appendix. 

\par Theorem~\ref{thm:RLS_runtime} shows that RLS-PD can find the Nash Equilibrium in $O(n^{1.5})$ with overwhelmingly high probability. 
The exponential tail bound provides a stronger performance guarantee up to the tail of the runtime than the sole expectation.

\subsection{RLS-PD forgets the Nash Equilibrium w.h.p.}\label{sec:rls-pd-forgets}

After the algorithm finds a Nash Equilibrium efficiently, the inherent characteristics of the function cause the algorithm not only to forget the Nash Equilibrium but also move away from it by a distance $\Omega(\sqrt{n})$ in $O(n)$ iterations w.h.p. This is shown by the following theorem.

\begin{restatable}{theorem}{SecFiveMainTwo}
\label{thm:reaching_the_optimum_RLS}
Let $\alpha=1/2\pm A/\sqrt{n}$ and $\beta=1/2 \pm B/\sqrt{n}$, where $A,B>0$ are constants. 
Consider RLS-PD on $\bilinear_{\alpha,\beta}$. Then, for any initial search points $(x_0,y_0)$, the expected runtime that the search point firstly moves away from $\OPT$ by a Manhattan distance at least $(A+B) \sqrt{n}$  is $O(n)$.
Moreover, given any  $r>0$, the runtime is at most $rn$, with probability at least $1-e^{-\Omega(r)}$.
\end{restatable}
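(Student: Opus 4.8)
The plan is to reduce the RLS-PD dynamics to a one-dimensional variance-dominated process and invoke Theorem~\ref{thm:ExpTail}. Since $\bilinear_{\alpha,\beta}$ depends on $(x,y)$ only through the cardinalities $\ones{x}$ and $\ones{y}$, and each iteration flips a single bit, it suffices to track the pair $(\ones{x_t},\ones{y_t})$, whose equilibrium is $(\beta n,\alpha n)$. I would take as potential the Manhattan distance from the optimum, $X_t := |\ones{x_t}-\beta n| + |\ones{y_t}-\alpha n|$, together with the threshold $b:=(A+B)\sqrt n$ and the stopping time $T=\inf\{t\ge 0 : X_t\ge b\}$; note $b^2=(A+B)^2 n=\Theta(n)$, so that the bounds $\E{T}\le b^2/\delta=O(n)$ and $\Pr(T\ge\tau)\le e^{-\tau\delta/eb^2}$ supplied by Theorem~\ref{thm:ExpTail} specialise, at $\tau=rn$, to $\Pr(T\ge rn)\le e^{-r\delta/(e(A+B)^2)}=e^{-\Omega(r)}$, which is exactly the claimed statement.

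The routine work is verifying the hypotheses of Theorem~\ref{thm:ExpTail}. The step-size tail condition (C1*) is immediate: a single bit flip changes each cardinality by one, so $|X_{t+1}-X_t|\le 1$ and the relevant indicator vanishes for $j\ge 2$. For the constant-second-moment condition (C3) I would use that throughout the range $X_t\le b$ both cardinalities satisfy $\ones{x_t},\ones{y_t}=n/2\pm O(\sqrt n)$, so each string has $\Theta(n)$ zero-bits and $\Theta(n)$ one-bits; translating the pairwise-dominance acceptance rule (an $x$-flip is accepted exactly when it increases $g$, a $y$-flip exactly when it decreases $g$) then shows that a constant fraction of the $2n$ Hamming neighbours is accepted and moves a cardinality by one, whence $\E{(X_{t+1}-X_t)^2 \mid \filt}\ge\delta$ for a constant $\delta>0$.

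The main obstacle is the drift condition (C2), $\E{X_{t+1}-X_t\mid\filt}\ge 0$. Writing $u=\ones{x}-\beta n$ and $v=\ones{y}-\alpha n$, the acceptance rule makes the deterministic part of the drift \emph{rotate} $(u,v)$ about the equilibrium: in each sign-region of $(u,v)$ one coordinate is driven outward while the other is driven inward, so the conservative rotational contribution to the drift of $X_t$ is large (order $\sqrt n$ per step) and changes sign from region to region, while the error terms $E_1,E_2$ only matter in an $O(\sqrt n)$-thin strip around the axes $u=0,v=0$, where they break ties. The crux is to show that, once the offset $\alpha,\beta=1/2\pm\Theta(1/\sqrt n)$ is accounted for, these rotational terms telescope so that the net drift of the Manhattan potential is $\E{X_{t+1}-X_t\mid\filt}=\tfrac{b-X_t}{2n}\ge 0$ on the escape region, where the process migrates from the optimum toward the centre $(n/2,n/2)$, which lies at Manhattan distance exactly $(A+B)\sqrt n$ from $\OPT$. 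I expect the delicate part to be the sign-region case analysis certifying that the drift never turns negative before the threshold; should a mild negative drift survive in some region, the fallback is to apply the more general Theorem~\ref{thm:Uppertail2} to the complementary potential $b-X_t$, whose tolerance of drift down to $-\Omega(1/b)=-\Omega(1/\sqrt n)$ exactly absorbs the $O(1/\sqrt n)$ rotational residue and yields the same $O(n)$ expectation and $e^{-\Omega(r)}$ tail.
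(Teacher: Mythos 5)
Your primary route fails at exactly the step you yourself flag as the ``crux'': condition (C2) is false for the Manhattan potential, and not by a removable technicality. The drift of $X_t$ can be computed quadrant by quadrant. Take the admissible parameters $\alpha=1/2+A/\sqrt{n}$, $\beta=1/2-B/\sqrt{n}$ and any state with $\ones{x_t}<\beta n$ and $\ones{y_t}>\alpha n$. The pairwise-dominance rule accepts exactly the $x$-flips that increase $\ones{x}$ (probability $(n-\ones{x_t})/(2n)$, distance decreases by $1$) and the $y$-flips that increase $\ones{y}$ (probability $(n-\ones{y_t})/(2n)$, distance increases by $1$), so
\begin{align*}
\E{X_{t+1}-X_t\mid \F_t}&=\frac{\ones{x_t}-\ones{y_t}}{2n}
=\frac{(\beta-\alpha)n-X_t}{2n}\\
&=\frac{-(A+B)\sqrt{n}-X_t}{2n}<0,
\end{align*}
i.e.\ the Manhattan distance drifts \emph{toward} $\OPT$ everywhere in that quadrant, with magnitude $\Theta(1/\sqrt n)$. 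Your claimed identity $\E{X_{t+1}-X_t\mid\F_t}=(b-X_t)/(2n)\ge 0$ holds only in the quadrant containing the centre $(n/2,n/2)$; in the other quadrants the constant $b$ is replaced by $(\beta-\alpha)n$, $(\alpha-\beta)n$ or $\pm(1-\alpha-\beta)n$, which can be as negative as $-b$. Note also that Lemma~\ref{lem:drift_Manhattan_RLS} only gives the one-sided bound $\E{X_{t+1}-X_t\mid\F_t}\le (b-X_t)/(2n)$, which is the opposite direction of what (C2) needs. Since the theorem is asserted for \emph{any} initial search point, Theorem~\ref{thm:ExpTail} cannot be applied to $X_t$: the escape from $\OPT$ is a genuinely variance-driven effect that occurs \emph{despite} inward drift, not a positive-drift effect with a small residue.

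Your fallback is indeed the paper's actual proof: set $Y_t=b-X_t$ and invoke Theorem~\ref{thm:Uppertail2}. But in your proposal it is only gestured at, and the justification offered (``the tolerance of drift down to $-\Omega(1/b)$ exactly absorbs the $O(1/\sqrt n)$ residue'') is not how condition (A1) works. (A1) is the joint condition $\Et{(Y_{t+1}-Y_t)^2-2(Y_{t+1}-Y_t)(b-Y_t)}\ge\delta$, and since the $\F_t$-measurable prefactor $b-Y_t=X_t$ can be as large as $b=\Theta(\sqrt n)$, the cross term has magnitude up to $\Theta(\sqrt n)\cdot\Theta(1/\sqrt n)=\Theta(1)$ --- the same order as the second moment, which is the acceptance probability $\approx 1/2$. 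Whether the variance term wins is therefore a constant-versus-constant comparison that must actually be carried out; the paper settles it by importing the computation from the proof of Theorem~4.1 of \cite{hevia2023runtime}, namely $\Et{(Y_{t+1}-Y_t)^2-2(Y_{t+1}-Y_t)(b-Y_t)}\ge 1/2-2(A+B)/\sqrt n\ge\delta_2$ for sufficiently large $n$. Once that is in place, the conclusion follows as you compute ($\E{T}=O(b^2)=O(n)$ and $\Pr(T>rn)\le e^{-rn\delta_2/(eb^2)}=e^{-\Omega(r)}$), matching the paper. That missing verification is the substantive content of the proof, so as it stands the proposal has a genuine gap.
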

Theorem~\ref{thm:reaching_the_optimum_RLS} illustrates how drift analysis can expose weaknesses in algorithms, suggesting what needs to be improved in new algorithms.
In particular, we can see even though RLS-PD can find the optimum in polynomial time, it can still suffer from evolutionary forgetting (i.e. forget the optimum found in previous iterations) with high probability. 
So only the expected runtime estimate might be insufficient to determine whether a coevolutionary algorithm is good or not. 
This highlights the weakness of RLS-PD and the need to understand coevolutionary dynamics further. 

\section{Applications to Regret Analysis of a Bandit Learning Algorithm}
We start with a brief introduction of bandit problems. 
Suppose we have $K$ decisions or "arms", where we obtain the corresponding reward $r_a$ when we choose one specific decision $a$.
The goal of the bandit algorithm is to maximise cumulative reward among time horizon $T$ \cite{lattimore2020bandit,sutton2018reinforcement}. 
In this paper, we consider the quantity called regret (missed reward), which is the difference between the reward of the chosen arm and the optimal arm at each iteration. 
We provide a formal definition of regret as follows. 
\begin{defn}
(\cite{lattimore2020bandit,larcher2023simple})
\label{def:regret}
Given time horizon $T$, each arm $a$ is associated with a probability distribution $\mathcal{D}(a)$, which we assume to be over $[0,1]$ and for which the mean is denoted as $\mu(a)$; whenever arm $a$ is pulled, the agent receives a reward distributed according to $\mathcal{D}(a)$. The regret (missed reward) of the agent at round $t\in \mathbb{N}$ is defined as $R_t = r_{a^*} - r_{a_t}$, where $a_t$ is the arm chosen at round $t $, $r_a$ is the reward obtained from reward distribution $\mathcal{D}(a)$ and $a^*=\text{arg max}\{\mu(a)\mid a \in [K]\}$.
The goal of the agent is to minimise the total regret $\mathcal{R}=\sum_{t=1}^{T}R_t$ or the total expected regret $\E{\mathcal{R}}$. 
\end{defn}
This paper focuses on the non-stationary 2-armed bandit problem, in which the reward distributions may swap over time and $K=2$. More precisely, the agent receives a reward according to a reward distribution $\mathcal{D}(a_1)$ by pulling arm $a_1$ and another reward according to another reward distribution $\mathcal{D}(a_2)$ by pulling arm $a_2$. We assume that two distributions $\mathcal{D}(a_1)$ and $\mathcal{D}(a_2)$ are fixed, and they will swap if a change occurs along the time horizon. To simplify the calculation, we assume both distributions over $[0,1]$.
We present the application of our drift theorem (concentration tail bound) on regret analysis of a simple reinforcement learning algorithm for such a bandit problem. We defer the pseudo-code for Random Walk with Asymmetric Boundaries (\rwab) proposed by \cite{larcher2023simple} to the appendix. 
\par Note that \rwab is designed to balance the exploration and exploitation for non-stationary bandit problems. \rwab mainly relies on the \Challenge operator to determine which arm we prefer to pull or whether we swap the arms.
The \Challenge operator is designed to use the random walk of action value $S$ on $[-\sqrt{T/L},1]$. \cite{larcher2023simple} shows that the expected regret of \rwab is $\Theta \left(\sqrt{LT} \right)$ where $T$ is the time horizon and $L$ is the number of changes. 
 We want stronger performance guarantees for the regret of \rwab, i.e. a concentration tail bound for the regret estimate. 
We would like to characterise the distribution of the regret.

Next, we present the main theorem for the regret of \rwab algorithm. In this theorem, we assume $L = o(T)$.

\begin{restatable}{theorem}{SecSixMain}
\label{thm:RegretMain}  Given any $\varepsilon \geq 1$,  the regret of Algorithm \rwab is at most $480 \varepsilon(L+\sqrt{LT})$ with probability at least $1- 2e^{-\sqrt{\varepsilon}/{e}}$.
\end{restatable}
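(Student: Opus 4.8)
The plan is to reduce the regret of \rwab to the first-hitting-time bounds of Theorem~\ref{thm:Uppertail2} and Theorem~\ref{thm:ExpTail}, applied to the action-value random walk $S$ driven by the \Challenge operator. First I would use the $L$ reward-distribution swaps to partition the horizon into epochs on each of which the optimal arm is fixed, so that $\mathcal{R}=\sum_{t=1}^{T}R_t$ splits into a per-epoch sum. I would then separate the total regret into a \emph{baseline} part---the regret paid by exploratory challenge pulls while the current arm is already optimal---and a \emph{recovery} part---the regret paid after a swap until $S$ reaches the boundary that makes \rwab switch arms. The baseline part is expected to produce the additive $L$-dependent term and the recovery part the $\sqrt{LT}$ term.

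The technical core is to verify that, within a single epoch, the shifted action value $S$ is confined to an interval $[0,b]$ with $b=\Theta(\sqrt{T/L})$ (condition (A2)), has increments with the geometric tail of condition (C1*), and satisfies the combined drift--variance condition (A1) of Theorem~\ref{thm:Uppertail2}, namely $\mathrm{E}_t\big((S_{t+1}-S_t)^2-2(S_{t+1}-S_t)(b-S_t)\big)\ge\delta$ for some constant $\delta>0$. Intuitively this should hold because \Challenge gives $S$ a constant second moment while its drift is at worst of order $-1/b$, so the asymmetric lower boundary $-\sqrt{T/L}$ is reached by variance alone; turning the transition probabilities of \Challenge on the two-armed instance into these explicit inequalities is the part that needs the most care. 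Granting them, Theorem~\ref{thm:Uppertail2} (or Theorem~\ref{thm:ExpTail} via (C1*), (C2), (C3) in the non-negative-drift sub-case) gives an exponential tail $\Pr(\tau_i>\tau)\le e^{-\tau\delta/(eb^2)}=e^{-\Omega(\tau L/T)}$ on the recovery time $\tau_i$ of epoch $i$, hence an exponentially concentrated per-epoch recovery regret of scale $\Theta(b)$.

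Finally I would aggregate the $L$ per-epoch recovery regrets. Each is exponentially concentrated with scale $\Theta(b)=\Theta(\sqrt{T/L})$, so their sum has mean $\Theta(L\sqrt{T/L})=\Theta(\sqrt{LT})$, matching the known expected regret $\Theta(\sqrt{LT})$, and---being a sum of $L$ sub-exponential contributions---concentrates around this mean. Rather than a union bound (which loses too much), I would use this sub-exponential sum concentration to obtain a tail in which the deviation level enters linearly in the bound but at least as $\sqrt{\varepsilon}$ in the exponent; stating the clean, possibly slightly loose, form $2e^{-\sqrt{\varepsilon}/e}$ then only requires $\varepsilon\ge1$. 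A union bound over this recovery event and the analogous high-probability bound on the baseline challenge regret supplies the factor $2$, and absorbing the implicit constants yields the coefficient $480$ and the additive $L$ term. I expect this aggregation to be the main obstacle: I must justify treating the per-epoch contributions as sufficiently independent despite the shared action value carrying over across swaps, and track how $L$ and $T$ enter the exponent so that the per-epoch exponential tails combine into the stated $\sqrt{\varepsilon}$ form rather than the cruder estimate a naive summation would give.
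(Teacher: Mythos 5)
There is a genuine gap at the technical core of your plan. You model the \Challenge walk as a variance-dominated process whose drift is ``at worst of order $-1/b$'' and whose lower boundary is ``reached by variance alone,'' and on that basis you invoke Theorem~\ref{thm:Uppertail2}. But the walk $S$ has \emph{constant} drift of magnitude $\Delta=\abs{\E{r^+-r^-}}\in(0,1]$: drift $+\Delta$ towards $+1$ when $a^+=a^*$, and drift $-\Delta$ towards $-s$ when $a^+\neq a^*$. In every situation that matters for the regret, the boundary being hit is reached by constant \emph{favourable} drift, so the right tool is the additive-drift tail bound (Theorem~\ref{thm:ExpTail_drift}), which is exactly what the paper uses; its tail $e^{-\tau\epsilon/(eb)}$ gives a recovery time of scale $b/\Delta=\Theta(\sqrt{T/L})$ per episode. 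Theorem~\ref{thm:Uppertail2} instead gives the tail $e^{-\tau\delta/(eb^2)}$, i.e.\ concentration at scale $b^2/\delta=\Theta(T/L)$ --- your own displayed bound $e^{-\Omega(\tau L/T)}$ says precisely this. That contradicts your next claim that each recovery regret is concentrated at scale $\Theta(b)=\Theta(\sqrt{T/L})$: the theorem you invoke only certifies scale $\Theta(b^2)$, and summing $L$ such contributions yields $O(T)$, not $O(\sqrt{LT})$. The variance-dominated route therefore cannot produce the stated bound; this is not a constant-factor loss but a polynomial one. (The paper's own experimental discussion makes the same point: the \rwab process ``already exhibits positive drift everywhere before reaching the target state.'')

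Your decomposition also omits regret sources that the paper must treat with entirely different tools. First, when $a^+\neq a^*$ and no \Challenge is running, the algorithm pays regret every step while waiting: a challenge starts only with probability $\sqrt{L/T}$ per step and ends the sub-era only with probability $\Omega(\Delta)$, so this waiting time is a geometric random variable, handled in the paper via the stochastic-dominance Lemma~\ref{lem:RegretLemOne} plus a tail computation --- no drift argument on $S$ applies there, and this term ($\mathcal{R}_3$) is of order $\sqrt{LT}$, i.e.\ not negligible. Second, you ignore \emph{mistakes}: swaps away from the optimal arm without any distribution change. Because of these, the number of recovery episodes is not $L$ but the number of sub-eras $M\le 2L+2N$, where the number of mistakes $N$ is itself random and needs a Chernoff bound against $\text{Bin}(T,2L/T)$; the same applies to the number of challenges ($\text{Bin}(T,\sqrt{L/T})$) multiplying your baseline term. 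Third, changes occurring during an ongoing \Challenge ($\mathcal{R}_2$ in the paper) fit neither of your two categories. Finally, your concern that a union bound ``loses too much'' is unfounded: the paper concludes with a plain union bound over its four regret classes, and after the substitution $\delta=\sqrt{\Delta\varepsilon}$ this already yields the stated $1-2e^{-\sqrt{\varepsilon}/e}$; no sub-exponential sum concentration (with its attendant independence issues, which you rightly flag as problematic) is needed.
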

The proof of Theorem~\ref{thm:RegretMain} is deferred to the Appendix. 

By Theorem~\ref{thm:RegretMain}, \rwab Algorithm has regret at most order $O \left(\sqrt{LT}\right)$ for a 2-armed non-stationary bandit problem w.h.p. By using minimax lower bound \cite{bubeck2012regret}, any algorithm on $K$-armed stationary bandit problems has regret at least ${\sqrt{Kn}}/{20}$ for time horizon $n$. In particular, for $K=2$, the lower bound for the expected regret of any algorithm on bandit stationary bandit problems is $\Omega (\sqrt{n})$ for time horizon $n$.
\cite{larcher2023simple} showed that by considering $L$ changes and each change of average steps $T/L$,  \rwab has expected regret at least $\Omega \left(L \cdot \sqrt{T/L} \right)= \Omega \left(\sqrt{LT} \right)$.
Theorem~\ref{thm:RegretMain} confirms that \rwab is optimal with overwhelmingly high probability, and we propose a new perspective of analysing a bandit algorithm by using drift analysis, which is rarely employed by the reinforcement learning community.

\section{Experiments}
To complement our asymptotic results with data for concrete problem sizes, we conduct the following experiments. 
\subsection{Empirical Evidence of RLS-PD on \bilinear}

We conduct experiments
with the RLS-PD for the maximin \bilinear problem.
The problem setup is $(\alpha,\beta)=(0.5,0.5),(0.3,0.3),(0.3,0.7)$, $(0.7,0.3),(0.7,0.7)$. These five scenarios cover the cases when the optimum lies in four different quadrants and the centre of the search space.
We set the mutation rate $\chi=1$ and problem size $n=1000$. We run $1000$ independent simulations for each configuration.
For each run, we initialise the search point uniformly at random.
\par  Figure~\ref{fig:totalRuntime0} displays the density plot for the runtime distribution of RLS-PD on \bilinear. 
The $x$-axis represents the runtime, the $y$-axis represents the frequency or density, and the red dotted line represents the average value of the runtime for each problem setting.
As $x$ increases, Figure~\ref{fig:totalRuntime0} 
shows that we have an exponentially decaying tail for the runtime of RLS-PD on each problem configuration. 
It is very unlikely that the runtime of RLS-PD on \bilinear deviates too much from the mean or the expected runtime from Figure~\ref{fig:totalRuntime0} for each problem configuration.  
From the statistics (figures and tables in the appendix), we can see for each configuration, the frequency that the actual runtime bounded above by the mean runtime converges to $1$.
When $(\alpha,\beta)=(0.5,0.5)$, the empirical results are consistent with our theoretical bounds in the sense of asymptotic order. 
The results for other problem configurations raise a conjecture about whether our theoretical results can also hold for all $\alpha,\beta \in [0,1]$.

\begin{figure}[!ht]
    \centering
  \subfloat[$\alpha=0.5$, $\beta=0.5$\label{1a0}]{%
       \includegraphics[width=0.48\linewidth]{{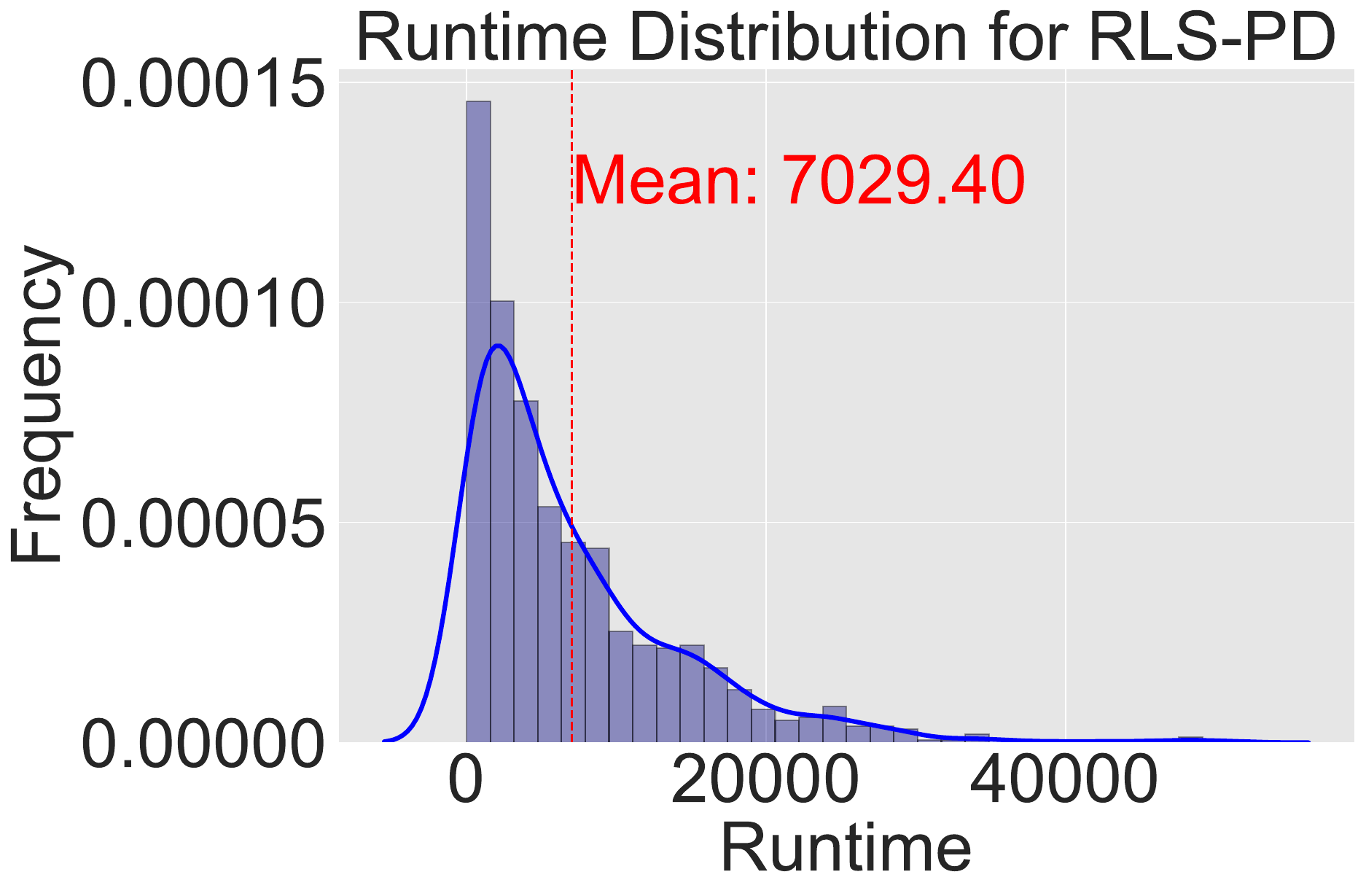}}}
    \hfill
  \subfloat[$\alpha=0.3$, $\beta=0.3$\label{1b0}]{%
        \includegraphics[width=0.49\linewidth]{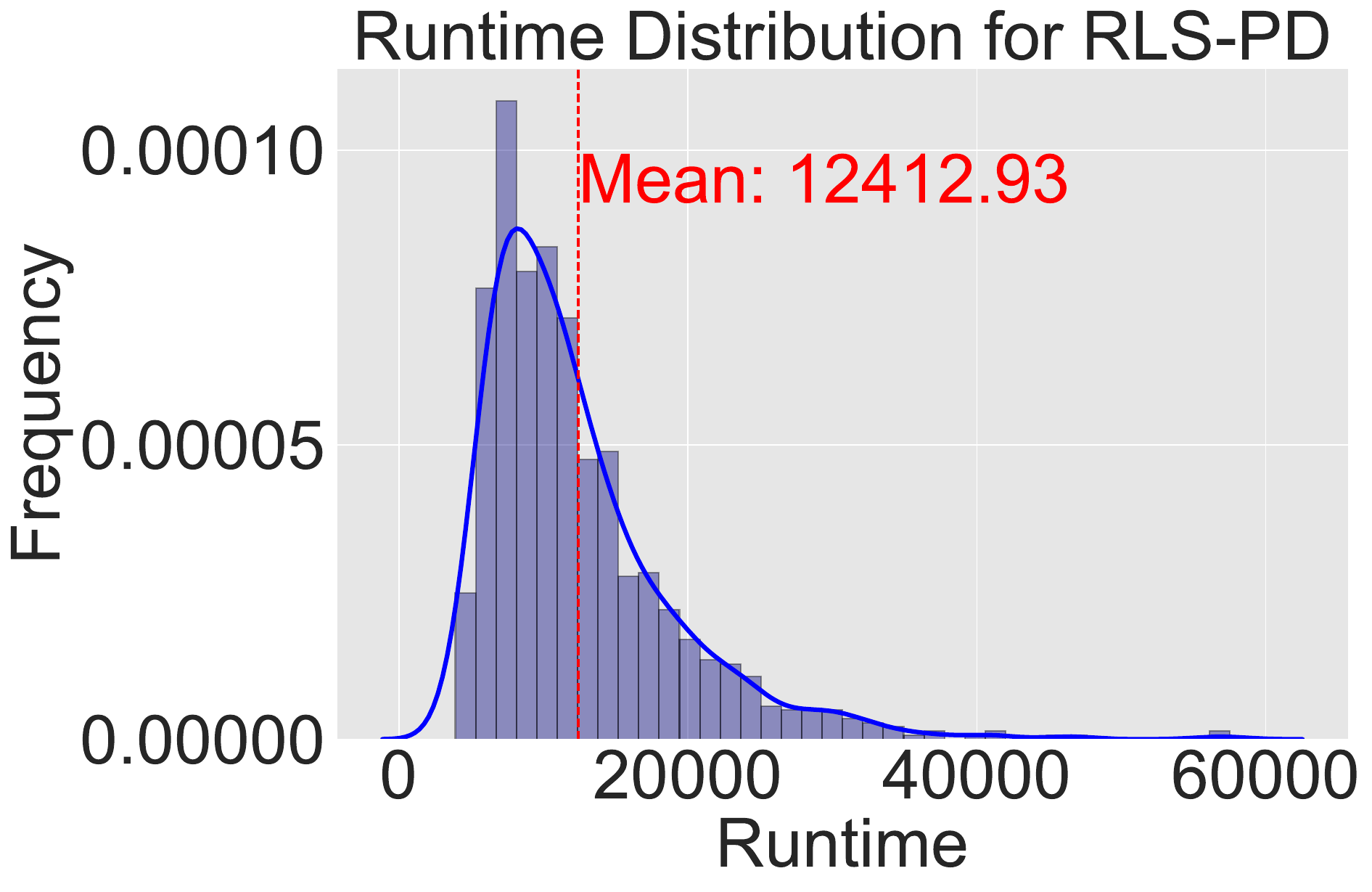}}
    \hfill
  \subfloat[$\alpha=0.7$, $\beta=0.7$\label{1c0}]{%
        \includegraphics[width=0.48\linewidth]{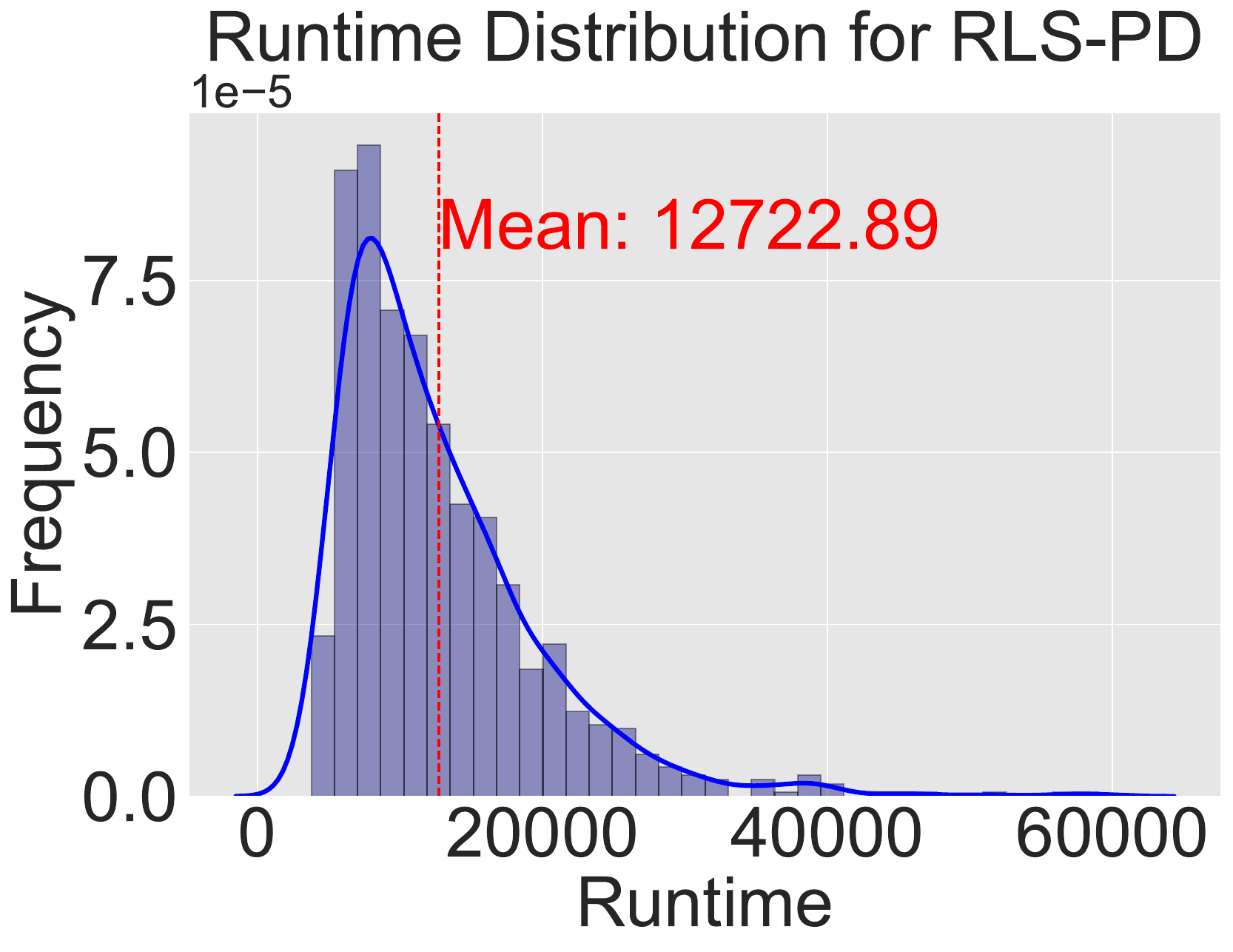}}
    \hfill
  \subfloat[$\alpha=0.3$, $\beta=0.7$\label{1d0}]{%
        \includegraphics[width=0.49\linewidth]{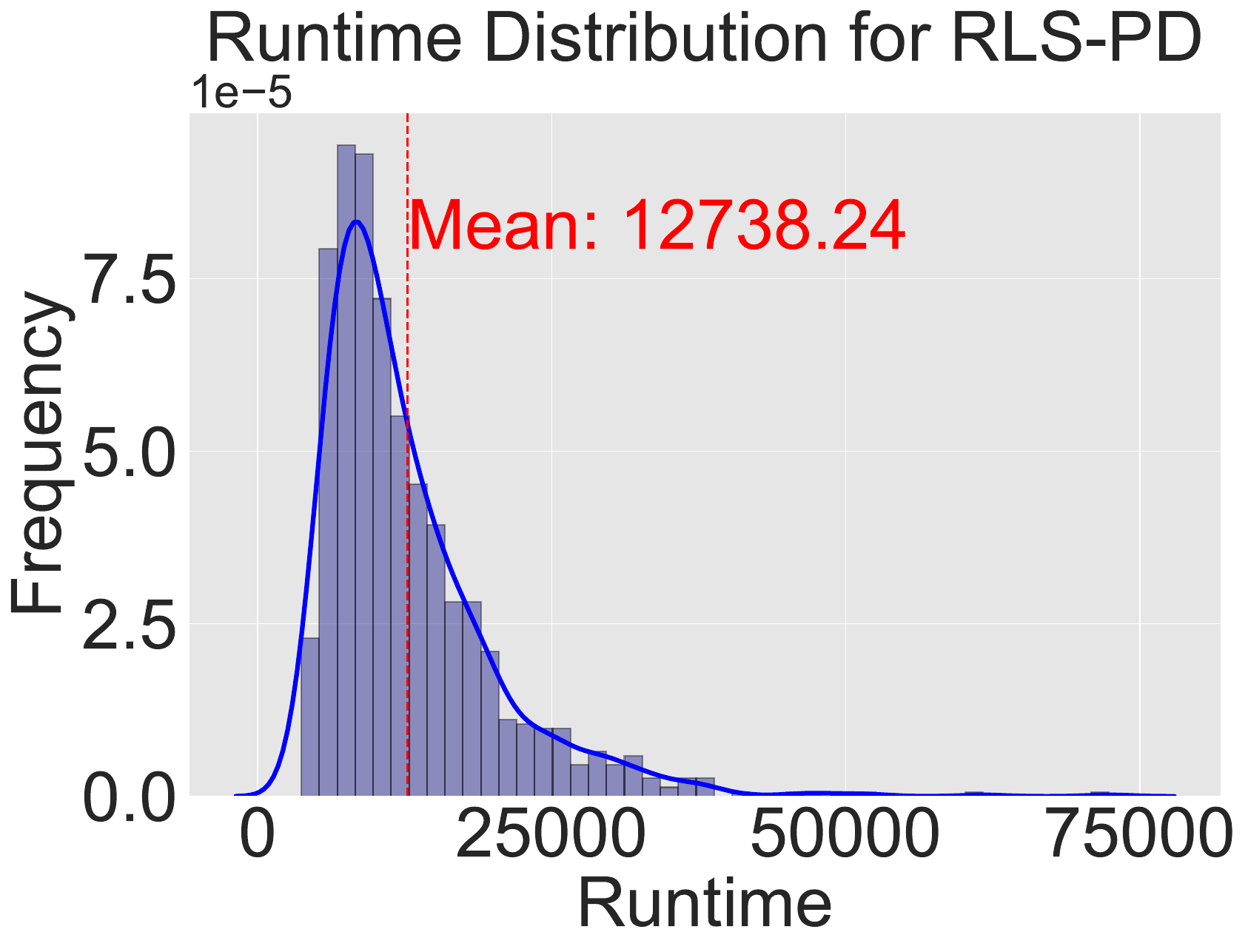}}
  \subfloat[$\alpha=0.7$, $\beta=0.3$\label{1e0}]{%
        \includegraphics[width=0.47 \linewidth]{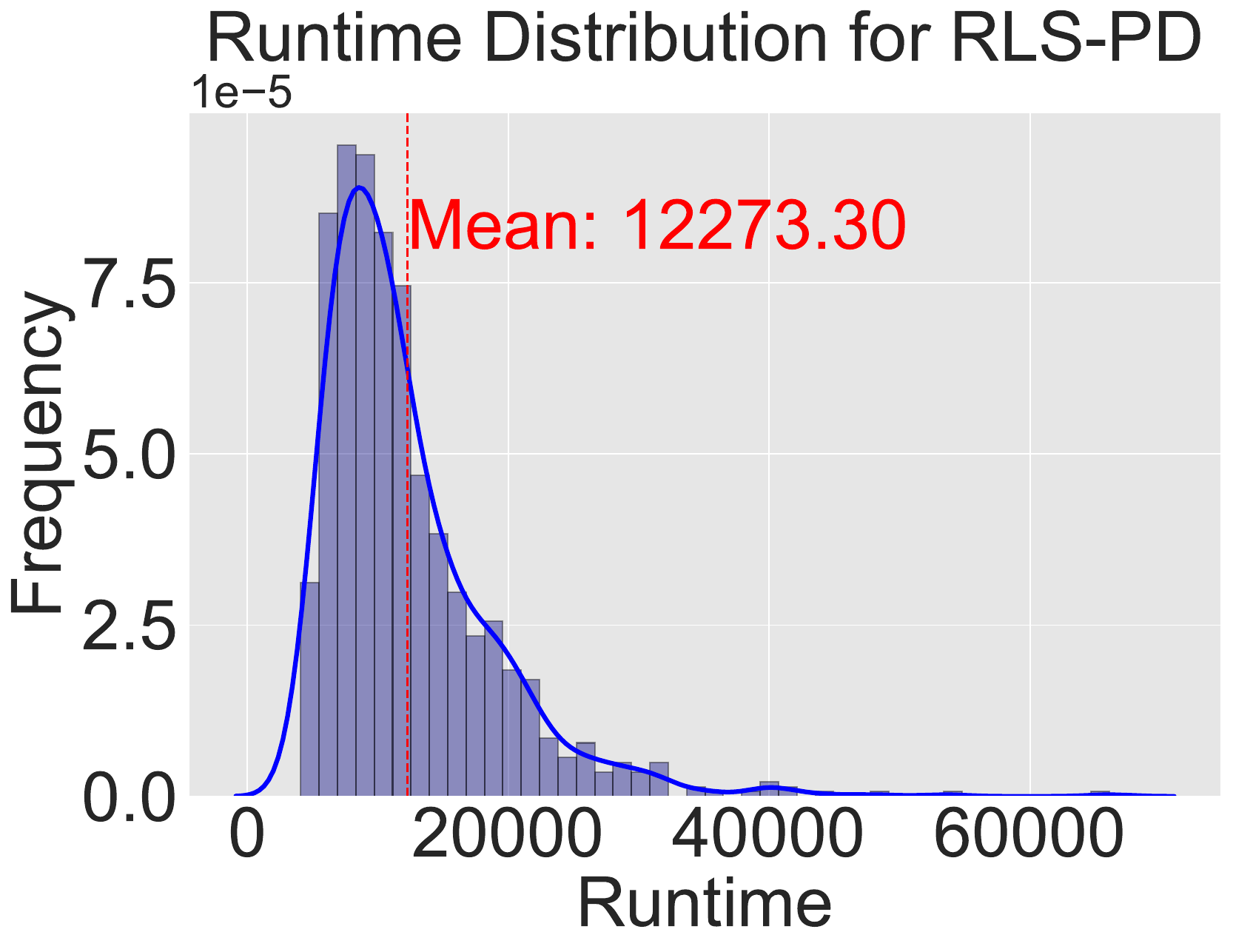}}
    \hfill
  \caption{Runtime distribution for RLS-PD for various $\alpha$ and $\beta$.}
  \label{fig:totalRuntime0} 
\end{figure}

\subsection{Empirical Evidence of \rwab Algorithm}
We conduct experiments with the \rwab Algorithm for the 2-armed non-stationary bandit problem. The environment is set up as two Bernoulli bandits with means $\mu_1=0.2, \mu_2=0.8$ and the number of changes $L=5, 10, 20, 40, 80, 100$. 
The changes are set up uniformly at random along the time horizon $T=1000$. $1000$ independent simulations are run for each configuration.

Figure~\ref{fig:totalRegret0} displays the regret distribution of \rwab. The $x$-axis represents the regret of \rwab, the $y$-axis represents the frequency or density, and the red dotted line represents the average regret for each problem setting. 
As $x$ increases, Figure~\ref{fig:totalRegret0} shows that we have an exponentially decaying tail for the regret of \rwab on 2-armed non-stationary bandit problem with respect to the changes $L$. Figure~\ref{fig:totalRegret0} shows the concentration of regret around the empirical mean or the expected regret and it is also unlikely that the regret of \rwab deviates too much from the expectation. 
The tables (deferred in the appendix) suggest that our theoretical tail bound is asymptotically tight regardless of the leading coefficient. 
Tables and figures for regret distributions show that for each configuration, the frequency that the actual regret bounded above by the mean regret is asymptotic to $1$ as the increases in the multiplicative factors of the upper bounds. 
Moreover, the convergence rate is significantly faster than the counterpart in the case of RLS-PD on \bilinear. 
This means that the theoretical bound (i.e. the leading coefficient) obtained has room to improve. One conjecture may be the process governing the dynamics of runtime for RLS-PD on \bilinear relies heavily on the high variance needed to overcome the negative drift, while the process governing the dynamics of regret induced by \rwab already exhibits positive drift everywhere before reaching the target state. 
Thus, it yields a faster convergence.

\begin{figure}[htpt] 
    \centering
  \subfloat[$T=1000, L=5$\label{2a0}]{%
       \includegraphics[width=0.48\linewidth]{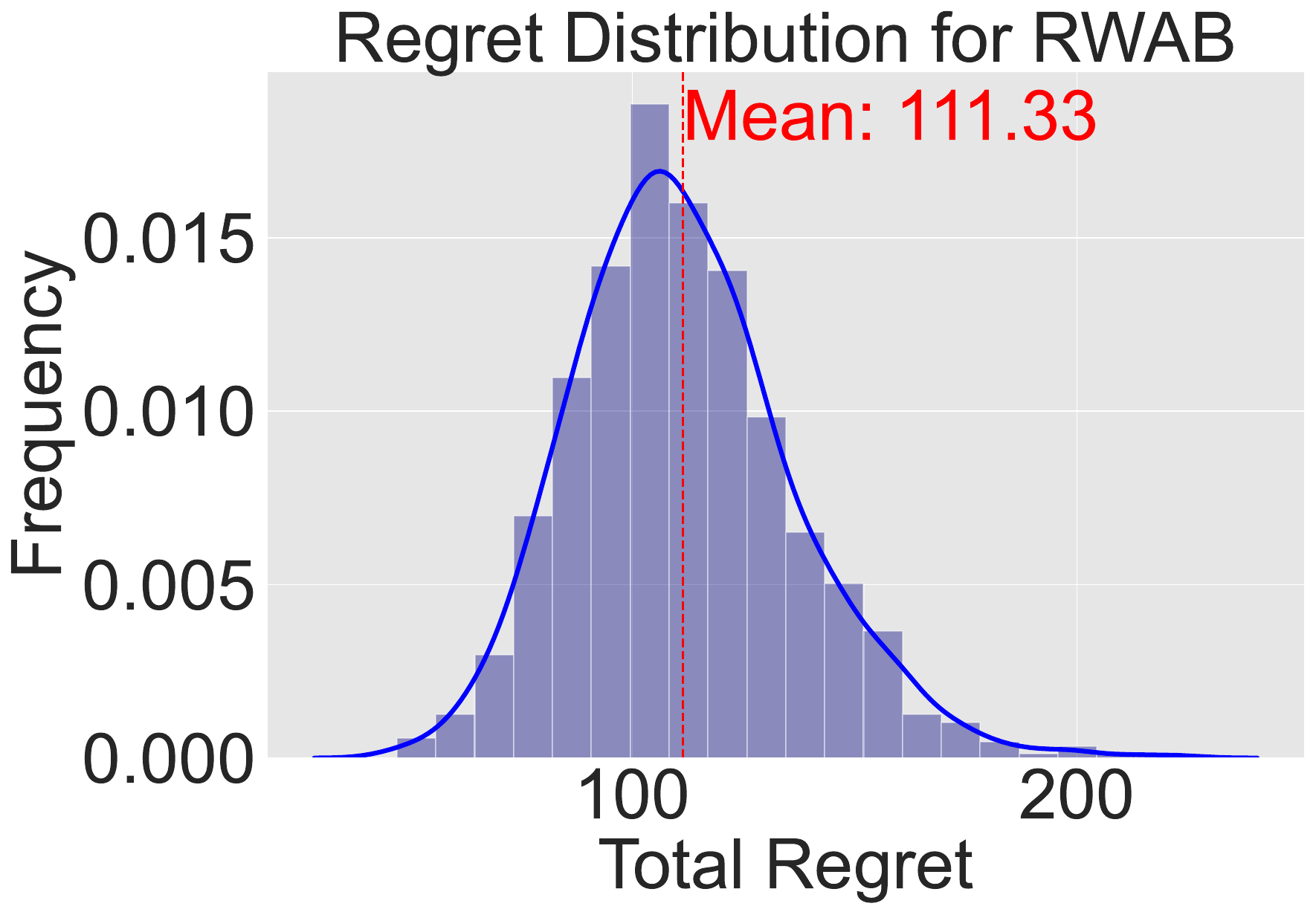}
       }
  \subfloat[$T=1000, L=10$\label{2b0}]{%
        \includegraphics[width=0.48\linewidth]{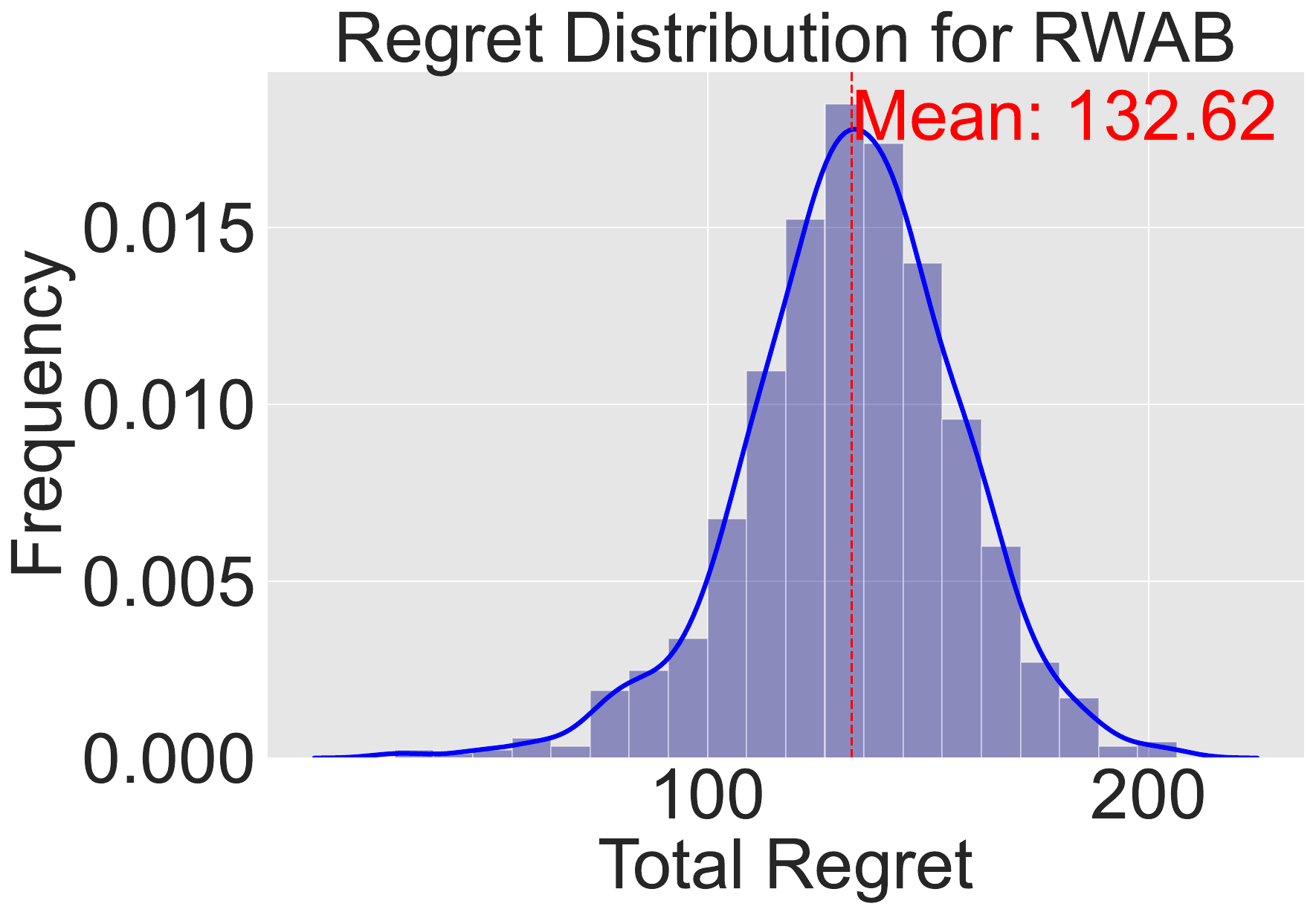}}
    \hfill
  \subfloat[$T=1000, L=20$\label{2c0}]{%
        \includegraphics[width=0.48\linewidth]{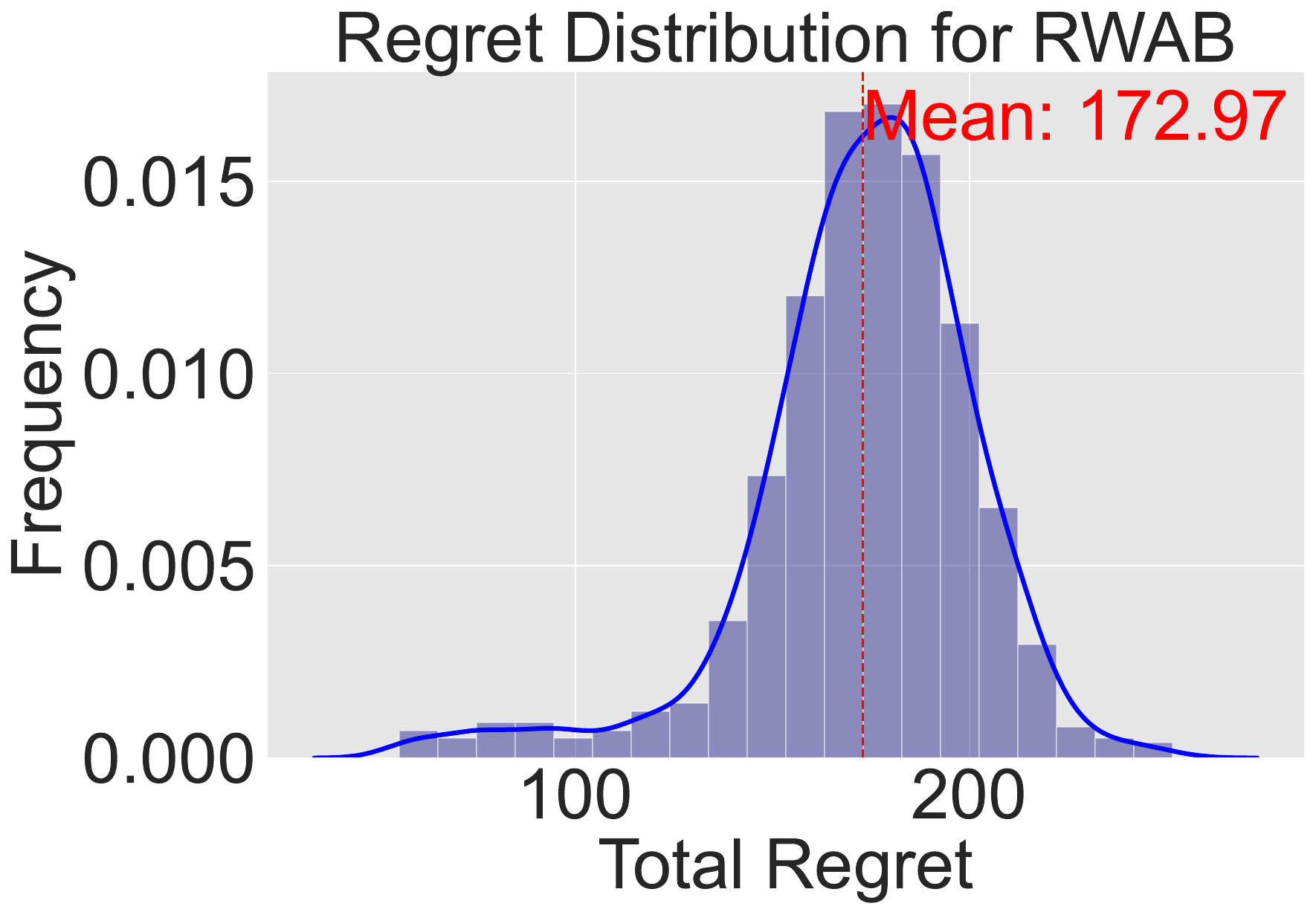}}
  \subfloat[$T=1000, L=40$\label{2d0}]{%
        \includegraphics[width=0.48\linewidth]{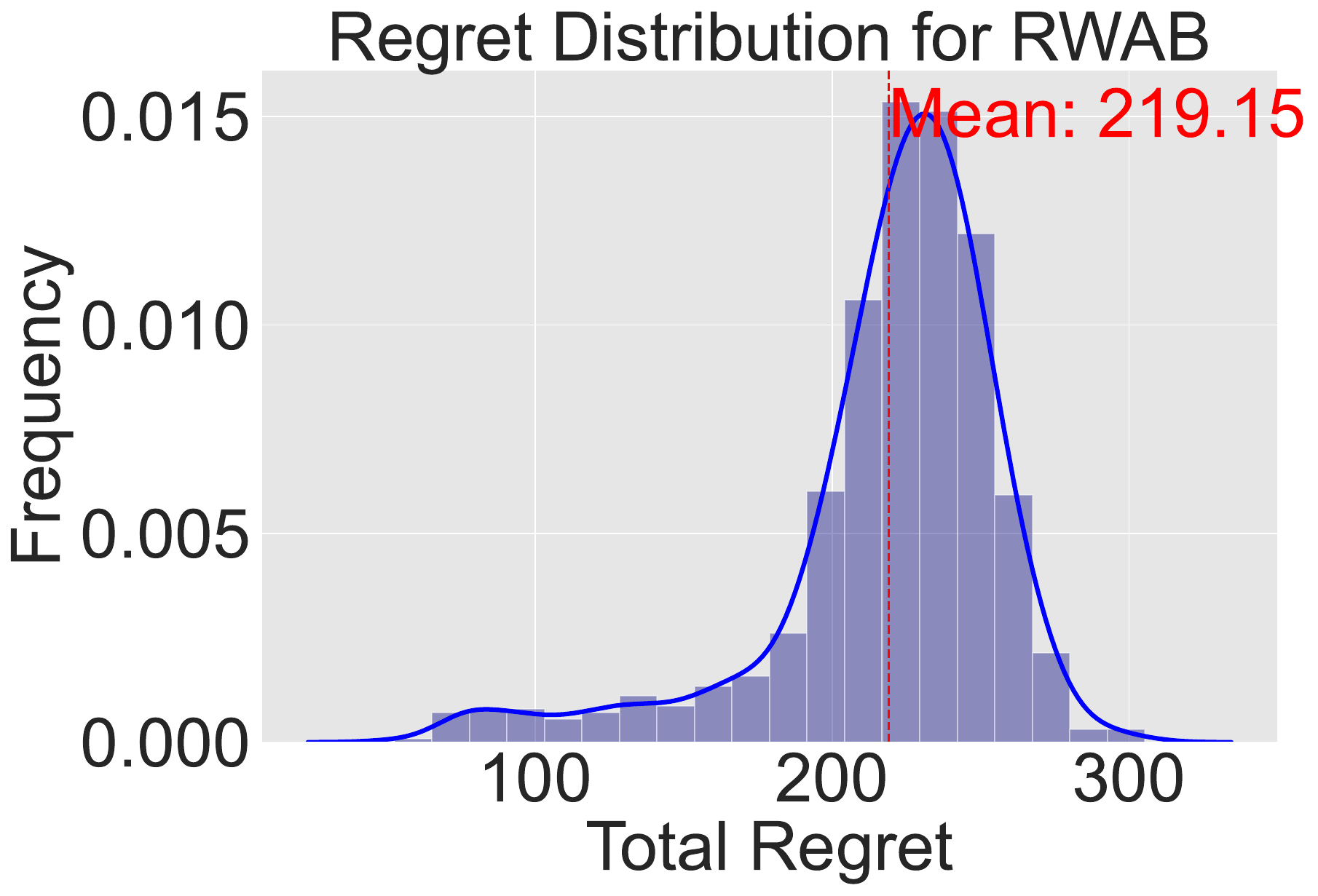}}
    \hfill
  \subfloat[$T=1000, L=80$\label{2e0}]{%
        \includegraphics[width=0.48\linewidth]{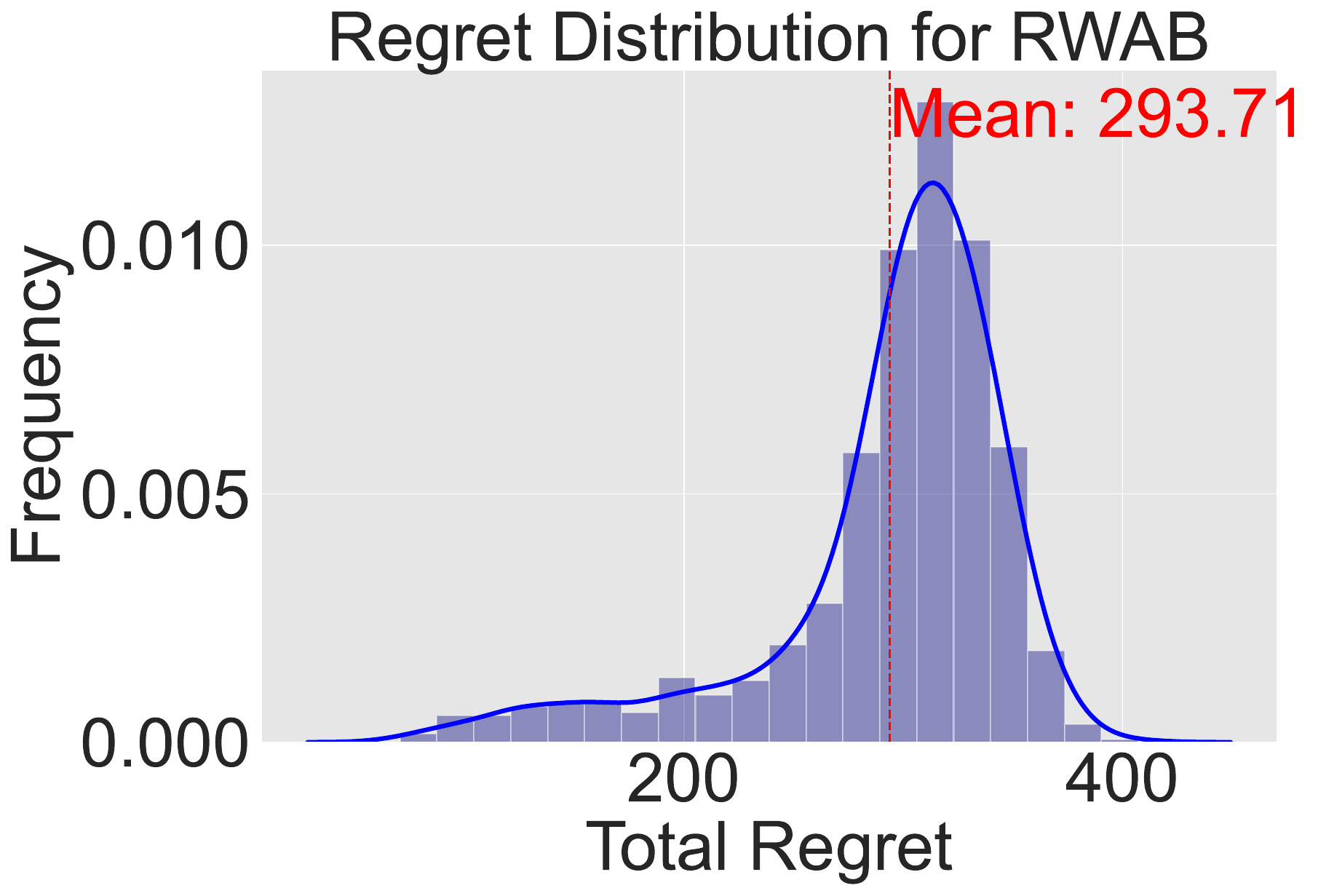}}
  \subfloat[$T=1000, L=100$\label{2f0}]{%
        \includegraphics[width=0.48\linewidth]{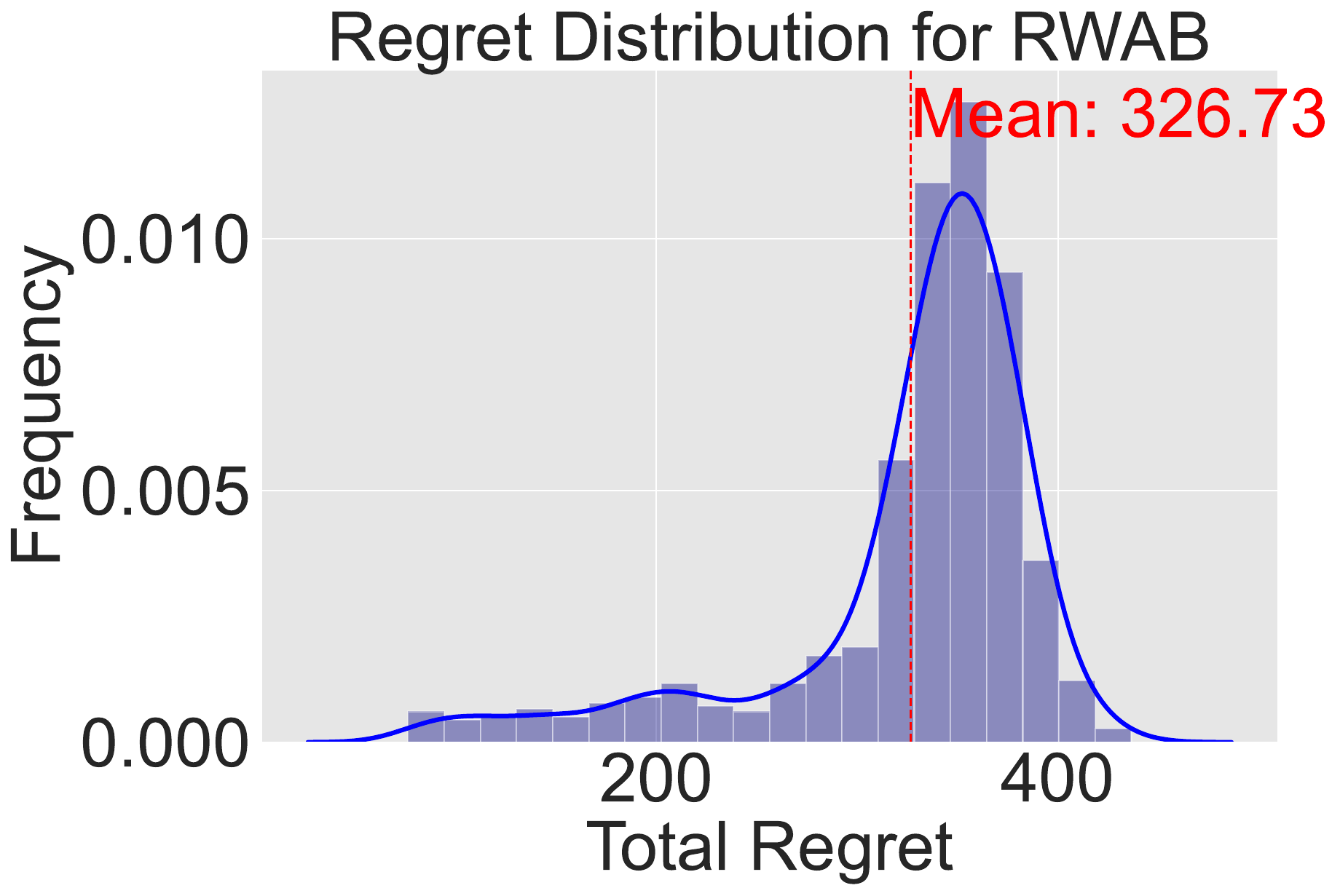}}
  \caption{Regret distribution for various values of $T$ and $L$.}
  \label{fig:totalRegret0} 
\end{figure}

\section{Conclusion}
This paper proves a more general and stronger drift theorem (tail-bound). 
Our theorems can be used to analyse the first hitting time of different random processes. 
As a sub-product, this paper also resolves the open problem left in \cite{kotzing_concentration_2016}, 
which asks for a suitable replacement for the Azuma-Hoeffding inequality to improve the tail bounds for random processes. 
We apply our theorems to several practical 
examples, including Random 2-SAT, Recolouring, 
competitive CoEAs and \rwab. To the best of our 
knowledge, it is the first tail-bound drift 
analysis of RLS-PD and \rwab.
Our drift theorems provide more precise information on how the runtime concentrates and a stronger performance guarantee. In practice, it shows the limitation of the current coevolutionary algorithm on maximin optimisation. It suggests a need for a deeper understanding of the mechanism of CoEAs, which may help to design a more stable CoEA. Moreover, our results confirm that randomness in \rwab can be helpful for stochastic non-stationary bandit problems. 
\par For future studies, both runtime analysis of CoEA on maximin optimisation and regret analysis of stochastic reinforcement learning algorithms via drift analysis are still poorly understood and unexplored areas.  
In particular, on the technical side, 
can we derive more precise bounds for \rwab since the leading coefficient seems not to be optimal from empirical results or
can we use these results to analyse more complicated CoEAs or bandit algorithms?
On the practical side, we could try to use such concentration bound to design more efficient algorithms. 
For example, we could try to design more stable CoEAs or develop a general optimal bandit algorithm by using the random-walk design analysed in this work.

\section*{Acknowledgments}
This work was supported by a Turing AI Fellowship (EPSRC grant ref EP/V025562/1).
The computations were performed using the University of Birmingham’s BlueBEAR high performance computing (HPC) service.

\bibliographystyle{named}

\newpage
\section{Pseudo-Code of Algorithms}
We consider the pseudocode from \cite{mitzenmacher_probability_2005}.
\begin{algorithm}[H]
  \caption{2-SAT Algorithm}
  \label{alg:2Sat}
  \begin{algorithmic}[1]
   \State Start with an arbitrary truth assignment.
   \For{$t=0,1,2,\ldots ,2mn^2$ until all clauses are satisfied}
    \State Choose an arbitrary clause that is not satisfied;
    \State Choose uniformly at random one of the literals in the clause and switch the value of its variable
    \EndFor
    \State  \textbf{if }{a valid truth assignment has been found} \textbf{then} return it.
    \State Otherwise, return that the formula is unsatisfied.
  \end{algorithmic}
\end{algorithm}

We consider the pseudocode from \cite{mcdiarmid_1993}.
\begin{algorithm}[H]
\caption{Recolour}
\label{alg:recolor}
\begin{algorithmic}[1]
\State Start with an arbitrary 2-colouring of the points
\While{SEEK returns a monochromatic edge $E$}
    \State pick a random point in $E$ and change its colour
\EndWhile
\end{algorithmic}
\end{algorithm}

We consider the pseudocode from \cite{hevia2023runtime}.
\begin{algorithm}[H]
\caption{RLS-PD: Randomised Local Search with Pairwise Dominance}
\label{alg:rls}
\begin{algorithmic}[1]
\Require Maximin-objective function $g:\X\times\Y\rightarrow\mathbb{R}$
\State Sample $x_1\sim\mathrm{Unif}(\{0,1\}^n)$
\State Sample $y_1\sim\mathrm{Unif}(\{0,1\}^n)$
\For{$t \in \{1,2,\dots\}$}
	\State Create $x',y' \in \{0, 1\}^n$ by copying~$x_t$ and~$y_t$ and flipping exactly one bit chosen uniformly at random from either $x_t$ or $y_t$.
    \If{$(x',y')\succeq_g(x_t,y_t)$}{ $(x_{t+1},y_{t+1}):=(x',y')$\label{line:create_new_pair}}
   \EndIf
\EndFor
\end{algorithmic}
\end{algorithm}

We consider the pseudocode from \cite{larcher2023simple}.
\begin{algorithm}[!ht]
\caption{Random Walk with Asymmetric Boundaries (RWAB)}
\label{alg:rwab}
\begin{algorithmic}[1]
\Require Time horizon $T$, number of changes $L$, two arms $a_1, a_2$ and Challenge probability $p=\sqrt{L/T}$.
\State Set $a^+ \leftarrow a_1$ and $a^- \leftarrow a_2$.
\For{$t \in \{1, 2, \dots, T\}$}
	\State Set $(a^+, a^-) \leftarrow \text{Challenge}(a^+, a^-)$ with prob. $p$.
    \State Otherwise pull $a^+$ once.
\EndFor
\end{algorithmic}
\end{algorithm}

\begin{algorithm}[H]
\caption{\Challenge}
\label{alg:rwab_challenge}
\begin{algorithmic}[1]
\Require Two arms $a^+, a^-$
\State Set $S \leftarrow 0$.
\While{$- \sqrt{T/L} < S < 1$}
	\State Pull both $a^+$ and $a^-$ once and observe rewards $r^+$ and $r^-$.
	\State Update $S \leftarrow S + r^+ - r^-$.
	\If{$S \geq 1$}
		\State \textbf{return} $(a^+, a^-)$ \text{ otherwise}
        \textbf{return} $(a^-, a^+)$
    \EndIf
\EndWhile
\end{algorithmic}
\end{algorithm}

\section{Optional Stopping Time Theorems}
We provide a formal definition of martingales/super-martingale / sub-martingale. 
\begin{defn}(Martingales~\cite{williams1991probability}) \label{def:martingale} A sequence of random variables $X_0,X_1,\dots$ is a martingale with respect to the sequence of filtration $\mathcal{F}_0, \mathcal{F}_1, \dots$ if for all $t\geq 0$, the following conditions hold:
\begin{enumerate}
    \item[(1)] $(X_t)_{t\geq 0}$ is adapted to $(\mathcal{F}_t)_{t\geq 0}$,
    \item[(2)] $E\left(|X_t| \right)<\infty, \forall t$
    \item[(3)] \label{eq:condition3} $E\left(X_t \mid \mathcal{F}_{t-1} \right)=X_{t-1}$ for $t\geq 1$.
\end{enumerate}
Moreover, a super-martingale with respect to $(\mathcal{F}_t)_{t\geq 0}$ is defined 
similar except that condition (3) is replaced by for $t\geq 1$,
$  E\left(X_t \mid \mathcal{F}_{t-1} \right)\leq X_{t-1}$.

A sub-martingale with respect to $(\mathcal{F}_t)_{t\geq 0}$ is defined with condition (3) replaced by for $t\geq 1$,
$E\left(X_t \mid \mathcal{F}_{t-1} \right)\geq X_{t-1} $.
\end{defn}

\begin{theorem}(Doob's Optional Stopping Time \cite{williams1991probability,doob_what_1971}) \label{thm:Doob} 
We consider the following conditions. Let $X_n$ be a stochastic process in $\mathbb{R}$ and $T$ be its stopping time.
\begin{enumerate}
    \item[(1)] $T$ is bounded (i.e. for some $N \in \mathbb{N}$, $T(\omega) \leq N, \forall \omega$);
    \item[(2)] $X$ is bounded (i.e. for some $K \in \mathbb{R}_{\geq 0}$, $|X_{t}(\omega) | \leq K$ for every $t$ and every $\omega$ and $T$ is a.s. finite );
    \item[(3)] $E(T)<\infty$, and, for some $K \in \mathbb{R}_{\geq 0}$,
        \begin{align*}
            |X_{t}(\omega)-X_{t-1}(\omega)| \leq K \quad \forall (t,\omega).
        \end{align*}
\end{enumerate}
In particular, let $X_{n}$ be a super-martingale and $\tau, \sigma$ be stopping times. 
Then $X_{\tau},X_{\sigma}$ are integrable and for $\sigma \leq \tau$, $E(X_{\tau}\mid \mathcal{F}_{\sigma}) \leq X_{\sigma}  \quad  a.s.$. In particular, for $\sigma=0, \tau=T$, $ E(X_{T}) \leq E(X_{0})$ in each of the above situations (1)-(3) holds.
\par Let $X_{n}$ is a sub-martingale. Then $X_{\tau},X_{\sigma}$ are integrable 
and for $\sigma \leq \tau$, $ E(X_{\tau}\mid \mathcal{F}_{\sigma}) \geq X_{\sigma} \quad a.s.$. In particular, for $\sigma=0, \tau=T$, $ E(X_{T})    
\geq E(X_{0})$ in each of the above situations (1)-(3) holds.
\end{theorem}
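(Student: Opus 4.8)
The plan is to reduce the whole statement to the single structural fact that the \emph{stopped} process $X^T_n:=X_{n\wedge T}$ inherits the super-martingale property, and then to justify the passage $n\to\infty$ separately under each of the three hypotheses. First I would verify that $X_{n\wedge T}$ is a super-martingale whenever $X_n$ is one. Writing
\[
X_{n\wedge T}=X_0+\sum_{k=1}^{n}(X_k-X_{k-1})\,\mathbf{1}_{\{T\ge k\}},
\]
the key observation is that $\{T\ge k\}=\{T\le k-1\}^{c}\in\mathcal{F}_{k-1}$ because $T$ is a stopping time, so $\mathbf{1}_{\{T\ge k\}}$ is a non-negative, bounded, predictable multiplier. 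Taking conditional expectations gives
\[
\E{X_{(n+1)\wedge T}\mid\mathcal{F}_n}-X_{n\wedge T}=\mathbf{1}_{\{T\ge n+1\}}\,\bigl(\E{X_{n+1}\mid\mathcal{F}_n}-X_n\bigr)\le 0,
\]
so $X^T$ is a super-martingale (with the inequality reversed for sub-martingales and equality for martingales). In particular $\E{X_{n\wedge T}}\le\E{X_0}$ for every finite $n$.

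Next I would take $n\to\infty$, and this is where the three conditions enter. Under (1), $T\le N$, so for $n\ge N$ we have $X_{n\wedge T}=X_T$ identically and the conclusion is immediate. Under (2), $T<\infty$ almost surely yields $X_{n\wedge T}\to X_T$ a.s., and the uniform bound $|X_{n\wedge T}|\le K$ lets the bounded convergence theorem move the limit inside the expectation. Under (3), the bounded-increment hypothesis gives the domination
\[
|X_{n\wedge T}-X_0|\le\sum_{k=1}^{n\wedge T}|X_k-X_{k-1}|\le K\,T,
\]
and since $\E{K\,T}=K\,\E{T}<\infty$, the integrable envelope $|X_0|+K\,T$ justifies dominated convergence; the same bound simultaneously establishes the asserted integrability of $X_T$ (and, applied to general stopping times, of $X_\tau$ and $X_\sigma$). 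In every case $\E{X_T}=\lim_{n}\E{X_{n\wedge T}}\le\E{X_0}$, which is the unconditional statement for $\sigma=0,\tau=T$.

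Finally, for the conditional optional-sampling inequality with $\sigma\le\tau$, I would upgrade the unconditional comparison $\E{X_\tau}\le\E{X_\rho}$ (valid for stopping times $\rho\le\tau$ by the same stopped-process reasoning) by testing against an arbitrary $A\in\mathcal{F}_\sigma$. The device is the hybrid stopping time $\rho_A:=\sigma\,\mathbf{1}_A+\tau\,\mathbf{1}_{A^{c}}$, which is again a stopping time since $A\cap\{\sigma\le n\}\in\mathcal{F}_n$ and $A^{c}\cap\{\tau\le n\}\in\mathcal{F}_n$, and which satisfies $\sigma\le\rho_A\le\tau$. Applying $\E{X_\tau}\le\E{X_{\rho_A}}$, splitting both sides over $A$ and $A^{c}$, and cancelling the common integrable term $\E{X_\tau\mathbf{1}_{A^{c}}}$ yields $\int_A X_\tau\,dP\le\int_A X_\sigma\,dP$ for every $A\in\mathcal{F}_\sigma$, which is exactly $\E{X_\tau\mid\mathcal{F}_\sigma}\le X_\sigma$ a.s.; the sub-martingale case is identical with inequalities reversed, and the martingale case follows by combining both.

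The hard part will be the limit interchange in the middle step: one must check case by case that $\{X_{n\wedge T}\}_n$ is uniformly integrable (or dominated), and case (3) is the delicate one, since the envelope $K\,T$ is integrable only through the \emph{combination} of bounded increments and $\E{T}<\infty$. Without both, $\E{X_T}$ need not equal $\lim_n\E{X_{n\wedge T}}$, and the theorem can fail; the rest of the argument is a routine martingale-transform computation and the standard $\mathcal{F}_\sigma$-testing trick.
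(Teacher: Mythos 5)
Your proposal cannot be checked against an in-paper proof, because there is none: Theorem~\ref{thm:Doob} is stated in the appendix as an imported classical result, cited to Williams' \emph{Probability with Martingales} and to Doob, and the paper only invokes it (together with its extension, Theorem~\ref{thm:Doob2}) as a black box inside the proofs of Theorems~\ref{thm:Uppertail2}, \ref{thm:ExpTail_2absorbing} and \ref{thm:ExpTail_drift}. Judged on its own, your argument is essentially the standard textbook proof and it is sound: (i) the transform identity $X_{n\wedge T}=X_0+\sum_{k\le n}(X_k-X_{k-1})\mathbf{1}_{\{T\ge k\}}$ with the predictable indicator $\{T\ge k\}=\{T\le k-1\}^c\in\mathcal{F}_{k-1}$ shows the stopped process is again a super-martingale; (ii) the passage $n\to\infty$ is exactly where hypotheses (1)--(3) are consumed (trivial under (1), bounded convergence under (2), dominated convergence with the envelope $|X_0|+KT$ under (3), which simultaneously yields integrability of $X_T$, with $\E{|X_0|}<\infty$ guaranteed by the super-martingale definition); (iii) the hybrid stopping time $\rho_A=\sigma\mathbf{1}_A+\tau\mathbf{1}_{A^c}$ upgrades the expectation inequality to the conditional statement $\E{X_\tau\mid\mathcal{F}_\sigma}\le X_\sigma$.

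One step is compressed and deserves to be spelled out, since it is the only place your plan could be read as circular: the two-time comparison $\E{X_\tau}\le\E{X_\rho}$ for stopping times $\rho\le\tau$ does \emph{not} follow literally from ``the stopped process is a super-martingale'' --- that fact only compares against time $0$. What is needed is the same transform argument run with the indicator $\mathbf{1}_{\{\rho\le k<\tau\}}$, which lies in $\mathcal{F}_k$ because $\{\rho\le k<\tau\}=\{\rho\le k\}\setminus\{\tau\le k\}$: for bounded stopping times this gives
\begin{align*}
\E{X_\tau}-\E{X_\rho}=\sum_{k}\E{\mathbf{1}_{\{\rho\le k<\tau\}}\left(\E{X_{k+1}\mid\mathcal{F}_k}-X_k\right)}\le 0,
\end{align*}
and one then replaces $\rho,\tau$ by $\rho\wedge n,\tau\wedge n$ and passes to the limit by exactly your case (1)--(3) arguments. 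With that filled in, the hybrid-set step goes through (including the small measurability check that $A^c\cap\{\tau\le n\}=\left(A^c\cap\{\sigma\le n\}\right)\cap\{\tau\le n\}\in\mathcal{F}_n$, which uses $\sigma\le\tau$), and the cancellation of $\E{X_\tau\mathbf{1}_{A^c}}$ is legitimate because you have already established integrability of $X_\tau$. So the proof is correct; it is the classical argument the paper's citation points to, not a new route.
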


\par Notice that in condition (3), the theorem still assumes a fixed step size. Then, we need an extended version of the Optional Stopping Theorem~\cite{grimmett_probability_2001,bhattacharya2007basic}.

\begin{theorem}[Extended Doob's Optional Stopping Time]
\label{thm:Doob2}
We consider the following conditions. Let $X_n$ be a stochastic process in $\mathbb{R}$ and $T$ be its stopping time.
\begin{enumerate}
    \item[(1)] $T$ is bounded (i.e. for some $N \in \mathbb{N}$, $T(\omega) \leq N, \forall \omega$);
    \item[(2)] $X$ is bounded (i.e. for some $K \in \mathbb{R}_{\geq 0}$, $|X_{t}(\omega) | \leq K$ for every $t$ and every $\omega$ and $T$ is a.s. finite );
    \item[(3)] $E(T)<\infty$, and, for some $K \in \mathbb{R}_{\geq 0}$,
        \begin{align*}
            |X_{t}(\omega)-X_{t-1}(\omega)| \leq K \quad \forall (t,\omega).
        \end{align*}
    \item[(4)] The stopping time $T$ has a finite expectation and the conditional expectations of the absolute value of the martingale increments are almost surely bounded. More precisely, $E(T)<\infty$, and there exists a constant $c>0$ such that 
        \begin{align*}
           E\bigl[|X_{t+1}-X_t|\,\big\vert\,{\mathcal F}_t\bigr]\le c
        \end{align*}
    almost surely on the event $\{T > t\}$ for all $t \in \mathbb {N}_0$.
\end{enumerate}
Theorem~\ref{thm:Doob} holds in each of the above situations (1)-(4) holds. 
\end{theorem}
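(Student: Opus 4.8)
The plan is to reduce the unbounded case (condition (4)) to the bounded-stopping-time case (situation (1) of Theorem~\ref{thm:Doob}), which I may assume, by truncating $T$ at a deterministic time $n$ and letting $n\to\infty$. First I would note that condition (4) includes $\E{T}<\infty$, so $T$ is almost surely finite and hence $X_{T\wedge n}\to X_T$ pointwise as $n\to\infty$. For each fixed $n$ the truncated stopping time $T\wedge n$ is bounded by $n$, so situation (1) of Theorem~\ref{thm:Doob} already yields $\E{X_{T\wedge n}}=\E{X_0}$ in the martingale case, and the corresponding $\le$ / $\ge$ in the super-/sub-martingale cases. Thus the whole theorem reduces to justifying the interchange $\lim_{n\to\infty}\E{X_{T\wedge n}}=\E{X_T}$.

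To justify this interchange I would exhibit a single integrable dominating variable. Telescoping gives
\begin{align*}
X_{T\wedge n}-X_0=\sum_{t=0}^{n-1}(X_{t+1}-X_t)\,\mathds{1}_{\{T>t\}},
\end{align*}
so $|X_{T\wedge n}|\le |X_0|+Y$ for every $n$, where $Y:=\sum_{t=0}^{\infty}|X_{t+1}-X_t|\,\mathds{1}_{\{T>t\}}$. The crux is to prove $\E{Y}<\infty$. Since $T$ is a stopping time, $\{T>t\}\in\mathcal{F}_t$, so by monotone convergence and the tower property, together with condition (4) applied on the event $\{T>t\}$,
\begin{align*}
\E{Y}=\sum_{t=0}^{\infty}\E{\mathds{1}_{\{T>t\}}\,\E{|X_{t+1}-X_t|\mid\mathcal{F}_t}}\le c\sum_{t=0}^{\infty}\Prob{T>t}=c\,\E{T}<\infty,
\end{align*}
where the final equality is the tail-sum formula for the non-negative integer-valued $T$. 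Hence $|X_0|+Y$ is integrable and dominates $|X_{T\wedge n}|$ uniformly in $n$.

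With domination established, dominated convergence gives $\E{X_T}=\lim_{n\to\infty}\E{X_{T\wedge n}}$, and combining this with the bounded-case relations above yields $\E{X_T}=\E{X_0}$ for martingales, $\E{X_T}\le\E{X_0}$ for super-martingales, and $\E{X_T}\ge\E{X_0}$ for sub-martingales; the general localised statement $\E{X_\tau\mid\mathcal{F}_\sigma}\le X_\sigma$ for $\sigma\le\tau$ then follows by applying the same truncation argument conditionally on $\mathcal{F}_\sigma$. I expect the only genuine obstacle to be the integrability bound $\E{Y}<\infty$: everything there hinges on correctly combining condition (4) (which is only assumed on $\{T>t\}$, precisely where the indicator restricts the sum), the measurability $\{T>t\}\in\mathcal{F}_t$, and the tail-sum identity $\E{T}=\sum_{t\ge0}\Prob{T>t}$. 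The remaining truncation-and-limit steps are routine once this bound is in place.
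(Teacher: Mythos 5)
Your proposal is correct, and it is worth noting at the outset that the paper itself contains no proof of Theorem~\ref{thm:Doob2}: the theorem is stated as a known extension of Doob's optional stopping theorem, with the justification delegated to the cited textbooks (Grimmett--Stirzaker; Bhattacharya--Waymire). What you have written is precisely the standard textbook argument for the condition-(4) case, and every step checks out: the telescoping identity $X_{T\wedge n}-X_0=\sum_{t=0}^{n-1}(X_{t+1}-X_t)\mathds{1}_{\{T>t\}}$ is valid; the dominating variable $Y=\sum_{t\geq 0}|X_{t+1}-X_t|\mathds{1}_{\{T>t\}}$ satisfies $\E{Y}\leq c\,\E{T}<\infty$ by monotone convergence, the tower property with $\{T>t\}\in\mathcal{F}_t$, and the tail-sum formula; and you correctly observe that condition (4) is only assumed on $\{T>t\}$, which is exactly where the indicator confines you, so no strengthening of the hypothesis is smuggled in. Dominated convergence then transfers the bounded-case conclusion from $T\wedge n$ to $T$, and also yields integrability of $X_T$ since $|X_T|\leq |X_0|+Y$. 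The one place where you sketch rather than prove is the final sentence: upgrading $\E{X_T}\leq\E{X_0}$ to the localised statement $\E{X_\tau\mid\mathcal{F}_\sigma}\leq X_\sigma$ for general stopping times $\sigma\leq\tau$ requires verifying $\E{X_\tau\mathds{1}_A}\leq\E{X_\sigma\mathds{1}_A}$ for each $A\in\mathcal{F}_\sigma$, which follows by the same domination but is not literally ``the same truncation argument conditionally on $\mathcal{F}_\sigma$''. Since every application of Theorem~\ref{thm:Doob2} in the paper uses only $\sigma=0$, $\tau=T$ (comparing $\E{Z_{T_k}}$ with $\E{Z_k}$ after a shift of time origin), the part you proved in detail is exactly the part that is needed.
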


\section{Definitions and Theorems}
\subsection{Supplementary material for the analysis of Random 2-SAT and Recolouring algorithms}
We defer some important results of Random 2-SAT and Recolouring algorithms here.

\begin{theorem}(Random 2-SAT \cite{papadimitriou1991selecting}) \label{thm:2sat}
The randomised 2-SAT algorithm, when run on a satisfiable 2-SAT formula
over $n \in \mathbb{N}$ variables, terminates in $O(n^4)$ time in expectation.
\end{theorem}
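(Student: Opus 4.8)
The plan is to reduce the termination time to the first hitting time of a fair-or-better random walk and then invoke the variance drift theorem (Theorem~\ref{thm:ExpTail}). First I would fix, using satisfiability of the formula $\phi$, a satisfying assignment $A^*$, and define the potential $X_t$ to be the number of variables on which the current assignment $A_t$ agrees with $A^*$, i.e.\ $X_t = n - \hamming{A_t}{A^*} \in \{0,\dots,n\}$. Since $X_t = n$ forces $A_t = A^*$, which is satisfying, the algorithm has halted no later than the first time $X_t$ reaches $n$; thus it suffices to bound the expected value of $T := \inf\{t \ge 0 \mid X_t \ge n\}$.

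Next I would verify the drift hypotheses of Theorem~\ref{thm:ExpTail} for $X_t$ with target $b = n$. At each step the algorithm flips one of the two variables of some currently unsatisfied clause $C$, chosen uniformly. Because $A^*$ satisfies $C$ while $A_t$ does not, $A_t$ and $A^*$ must disagree on at least one of the two variables of $C$. Flipping a disagreeing variable raises $X_t$ by exactly $1$, flipping an agreeing one lowers it by exactly $1$, and at least one of the two candidates is disagreeing; hence an increase occurs with probability at least $1/2$, giving $\Et{X_{t+1}-X_t} \ge 0$ (condition (C2)). Moreover every step changes $X_t$ by exactly $\pm 1$, so $\Et{(X_{t+1}-X_t)^2} = 1$ (condition (C3) with $\delta = 1$) and the step size is bounded (conditions (C1)/(C1*)). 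Theorem~\ref{thm:ExpTail} (equivalently Corollary~\ref{cor:ExpTail_cor}) then yields $\E{T} \le (b^2 - X_0^2)/\delta \le n^2$, i.e.\ the algorithm performs $O(n^2)$ flips in expectation.

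Finally I would convert iterations into running time: each iteration must locate an unsatisfied clause (or certify that none remains), which costs $O(m)$, and a 2-SAT formula on $n$ variables has $m = O(n^2)$ distinct clauses, giving the claimed $O(n^4)$ expected time after multiplying the $O(n^2)$ expected iterations by the $O(n^2)$ per-iteration cost. I expect the main obstacle to be the drift lower bound in the second step: one must argue that the $+1$ probability is at least $1/2$ \emph{no matter} which unsatisfied clause the (possibly adversarial) selection rule picks, so that $X_t$ stochastically dominates a fair walk uniformly over the history. Once this robustness is secured, the remainder is a direct application of the paper's drift theorem together with routine cost accounting.
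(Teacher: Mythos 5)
Your proposal is correct and follows essentially the same route as the paper: although the paper defers Theorem~\ref{thm:2sat} itself to \cite{papadimitriou1991selecting} and \cite{gobel2018intuitive}, its own proof of the tail-bound version (Theorem~\ref{thm:2sat2}) uses exactly your construction---the agreement potential $X_t$ with respect to a fixed satisfying assignment, the drift conditions (C2)/(C3) with $\delta = 1$ and fixed step size $\pm 1$, an application of Theorem~\ref{thm:ExpTail} with $b=n$, and multiplication by the $O(n^2)$ per-iteration cost to reach $O(n^4)$. The robustness issue you flag is resolved exactly as you anticipate: whichever unsatisfied clause is selected, the current assignment must disagree with the satisfying one on at least one of its two variables, so the $1/2$ lower bound on the probability of a $+1$ step holds uniformly over the history, and there is no gap.
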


\begin{theorem}(McDiarmid \cite{mcdiarmid_1993}) 
\label{thm:McDiarmid} The expected run time of Recolouring on a 3-colorable graph with $n \in \mathbb{N}$ vertices is $O(n^4)$.
\end{theorem}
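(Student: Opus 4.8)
The plan is to recover McDiarmid's bound via the variance-drift machinery developed above (in the spirit of the Göbel et al.\ drift recovery) rather than by the original combinatorial argument. First I would fix a proper $3$-colouring of $G$, which exists since $G$ is $3$-colorable, with colour classes $A,B,C$, and record the structural fact that drives everything: since a triangle is a copy of $K_3$ and the colouring is proper, every triangle of $G$ contains exactly one vertex from each of $A$, $B$, $C$.

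Next I would introduce the potential. Writing the two recolouring colours as $1$ and $2$ and letting $\chi_t$ denote the current $2$-colouring, set
\[
X_t := \bigl|\{v \in A : \chi_t(v) = 1\}\bigr| + \bigl|\{v \in B : \chi_t(v) = 2\}\bigr|,
\]
so that $X_t \in [0,\,|A|+|B|]$. The two extreme values are terminating states: $X_t = |A|+|B|$ forces all of $A$ to colour $1$ and all of $B$ to colour $2$, whereas $X_t = 0$ forces the reverse, and in either case every triangle has its $A$-vertex and $B$-vertex coloured differently, so no triangle is monochromatic and SEEK returns nothing. Hence the first time $X_t$ reaches $\{0,\,|A|+|B|\}$ upper-bounds the runtime $T$ of the algorithm.

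The core is the one-step analysis. While the algorithm has not terminated, SEEK returns a monochromatic triangle $\{a,b,c\}$ with $a\in A$, $b\in B$, $c\in C$ all sharing one colour, and a uniformly random vertex is flipped. Flipping $a$ changes $X_t$ by $\mp 1$, flipping $b$ by $\pm 1$, and flipping $c$ by $0$ (signs determined by the common colour), each with probability $1/3$. Therefore $\E{X_{t+1}-X_t \mid \mathcal{F}_t}=0$, the second moment satisfies $\E{(X_{t+1}-X_t)^2 \mid \mathcal{F}_t}=2/3=:\delta$, and $|X_{t+1}-X_t|\le 1$, so the zero-drift hypothesis together with $(C1)$ and $(C3)$ of Theorem~\ref{thm:ExpTail_2absorbing} all hold. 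Applying that theorem with $b=|A|+|B|$ yields $\E{T}\le X_0(b-X_0)/\delta$ with the exponential tail $\Pr(T\ge\tau)\le e^{-2\tau\delta/(eb^2)}$; multiplying the resulting bound on the number of recolouring steps by the per-iteration cost of locating a monochromatic triangle produces the stated $O(n^4)$ runtime.

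The main obstacle I anticipate is not a single calculation but choosing the potential so that all three properties hold at once: the martingale (zero-drift) property for \emph{every} monochromatic triangle, a second moment bounded below by a constant, and—most delicately—the identification of the potential's boundary values with genuine terminating colourings, which is what lets the algorithm's stopping time be dominated by the absorption time of $X_t$. A secondary point requiring care is the bookkeeping that fixes the exact power of $n$ and the constant $4/(3e)$ in the companion tail bound, namely reconciling the range of the potential with the cost model for SEEK.
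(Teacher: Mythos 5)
Your proposal is correct and takes essentially the same route as the paper: the paper's drift proof (following G\"{o}bel et al.) uses exactly your potential---the number of vertices in two fixed colour classes of a proper $3$-colouring whose current $2$-colour matches the $3$-colouring---with the same one-step distribution ($+1$, $-1$, $0$, each with probability $1/3$), zero drift, second moment $\delta=2/3$, absorption at $\{0,m\}$, an application of Theorem~\ref{thm:ExpTail_2absorbing}, and the final multiplication by the $O(n^2)$ per-iteration cost. The only difference is cosmetic: you write the potential explicitly as a sum over the classes $A$ and $B$, whereas the paper describes the same quantity verbally.
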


\subsection{Supplementary material for the analysis of coevolutionary algorithm}
We defer some important definitions and lemmas used in the analysis of RLS-PD algorithm here. Following the definitions used in \cite{hevia2023runtime}, we define

\begin{defn}
\label{def:forward_backward_drift}
Let $M(x,y)$ be the Manhattan distance from a search point $(x,y)\in \mathcal{X}\times \mathcal{Y}$ to the optimum, that is, $M(x,y):=\abs{n\beta -\ones{x} }  +  \abs{n\alpha- \ones{y} }$. Let $M_t:=M(x_t,y_t)$. For all $t \in \mathbb{N}$ we define:
  \begin{align*}
      \pplus&:=\prob{M_{t+1}>M_t\mid M_t=M(x,y)};\\
      \pminus&:=\prob{M_{t+1}<M_t\mid M_t=M(x,y)}.
  \end{align*}
\end{defn}

\begin{defn}
\label{def:quadrant}
During our analysis, we divide the search space into four quadrants. We say that a pair of search points $(x,y)$ is in:
\begin{itemize}
    \item the first quadrant if $0\le \ones{x}<\beta n \wedge \alpha n \le \ones{y}\le n$, 
    \item the second quadrant if $\beta n \le \ones{x}\leq n \wedge \alpha n <\ones{y}\leq n$,
    \item the third quadrant if $\beta n <\ones{x}\leq n \wedge 0 \leq \ones{y} \le \alpha n$, and
    \item the fourth quadrant if $0\leq \ones{x} < \beta n \wedge 0\leq \ones{y} <\alpha n$.
\end{itemize}

\end{defn}

\begin{lemma}(\cite{hevia2023runtime})
\label{lem:drift_Manhattan_RLS}
Consider RLS-PD on \bilinear as in Theorem~\ref{thm:RLS_runtime}. Define $T:=\inf\{t\mid\M_t = 0\}$, then for every generation $t<T$
\begin{align*}
    \E{\M_{t}-\M_{t+1}- \frac{\M_{t}-(A+B)\sqrt{n}}{2n};t<T \mid M_t} \ge 0.
\end{align*}
\end{lemma}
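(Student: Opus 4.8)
The plan is to reduce the claim to a quadrant-wise computation of the one-step drift of $\M_t$. Since each iteration of RLS-PD flips exactly one bit chosen uniformly among the $2n$ bits of $x_t$ and $y_t$, an accepted move changes exactly one of $\ones{x},\ones{y}$ by one, and hence changes $\M_t$ by $\pm 1$ (no overshoot occurs, because a unit change of $\ones{x}$ alters $\abs{\beta n-\ones{x}}$ by exactly one, and likewise for $y$). Thus $\E{\M_t-\M_{t+1}\mid \M_t}=\pminus-\pplus$, and on the $\mathcal{F}_t$-measurable event $\{t<T\}$, where $\M_t>0$, it suffices to identify which single-bit flips are accepted and whether they move $\ones{x},\ones{y}$ toward or away from $\beta n,\alpha n$.

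First I would translate the pairwise-dominance rule into directional rules. Because a flip in $x$ leaves $y$ fixed, Definition~\ref{def:dominance-relation} collapses to accepting the $x$-flip iff $g(x',y)\ge g(x,y)$, and symmetrically a $y$-flip iff $g(x,y')\le g(x,y)$; i.e.\ RLS-PD maximises $g$ in $x$ and minimises it in $y$. Writing $g=\bilinear_{\alpha,\beta}$ and noting that $E_1$ depends only on $\ones{y}$ and $E_2$ only on $\ones{x}$, we get $g(x',y)-g(x,y)=(\ones{x'}-\ones{x})(\ones{y}-\alpha n)+(E_2(x')-E_2(x))$ and $g(x,y')-g(x,y)=(\ones{y'}-\ones{y})(\ones{x}-\beta n)+(E_1(y')-E_1(y))$. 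Hence, away from the boundaries, RLS-PD increases $\ones{x}$ when $\ones{y}>\alpha n$ (decreases it otherwise) and decreases $\ones{y}$ when $\ones{x}>\beta n$ (increases it otherwise).

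Next I would evaluate the drift in each of the four quadrants of Definition~\ref{def:quadrant}. For example, in the first quadrant ($\ones{x}<\beta n$, $\ones{y}\ge\alpha n$) the accepted $x$-flips increase $\ones{x}$ (probability $(n-\ones{x})/2n$, lowering $\M$) while the accepted $y$-flips increase $\ones{y}$ (probability $(n-\ones{y})/2n$, raising $\M$), so the drift is $(\ones{y}-\ones{x})/2n$. Using $\M_t=(\beta n-\ones{x})+(\ones{y}-\alpha n)$ this equals $(\M_t+(\alpha-\beta)n)/2n$, and since $\abs{(\alpha-\beta)n}\le (A+B)\sqrt{n}$ the drift is at least $(\M_t-(A+B)\sqrt{n})/2n$. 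The other three quadrants are symmetric, giving drifts $(\ones{x}+\ones{y}-n)/2n$, $(\ones{x}-\ones{y})/2n$ and $(n-\ones{x}-\ones{y})/2n$, each of the form $(\M_t+cn)/2n$ with $c\in\{\alpha+\beta-1,\ \beta-\alpha,\ 1-\alpha-\beta\}$ and $\abs{cn}\le (A+B)\sqrt{n}$; combining the quadrants proves the lemma.

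The main obstacle I anticipate is the boundary behaviour at $\ones{x}=\beta n$ or $\ones{y}=\alpha n$ (relevant when $\alpha n,\beta n$ are integral), where the leading bilinear term of the acceptance condition vanishes and the sign is decided by $E_1,E_2$. Here I would use that $E_2=-\max\{(\beta n-\ones{x})^2,1\}/n^3$ is monotone in $\abs{\beta n-\ones{x}}$, so maximising $g$ in $x$ still pulls $\ones{x}$ toward $\beta n$, and symmetrically minimising $g$ in $y$ lowers $E_1\ge 0$ and pulls $\ones{y}$ toward $\alpha n$; both effects only decrease $\M$, so the lower bound survives. This boundary case analysis is the delicate part and follows \cite{hevia2023runtime}.
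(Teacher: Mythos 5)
Your proposal is correct, but note that there is nothing in the paper to compare it against: the paper never proves Lemma~\ref{lem:drift_Manhattan_RLS}. The lemma is imported verbatim from \cite{hevia2023runtime}, and the paper's own arguments (the proofs of Theorems~\ref{thm:RLS_runtime} and~\ref{thm:reaching_the_optimum_RLS}) simply invoke it together with the computations of that cited work. Your proof therefore supplies the missing derivation, and it reconstructs the cited argument faithfully: the observation that pairwise dominance degenerates to ``maximise $g$ in $x$, minimise $g$ in $y$'' when only one member of the pair changes (one of the two dominance inequalities becomes trivial); the single-bit-flip structure forcing $\M_{t+1}-\M_t\in\{-1,0,+1\}$ (using that $\alpha n,\beta n$ are integral, which you correctly flag); and the quadrant-wise drift formulas $(\ones{y}-\ones{x})/2n$, $(\ones{x}+\ones{y}-n)/2n$, $(\ones{x}-\ones{y})/2n$, $(n-\ones{x}-\ones{y})/2n$, each of the form $\left(\M_t+cn\right)/(2n)$ with $\abs{cn}\le(A+B)\sqrt{n}$, which I have checked against Definition~\ref{def:quadrant} and the acceptance rules; all are correct.

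One caution on your final paragraph: the sentence ``both effects only decrease $\M$, so the lower bound survives'' is imprecise. At a boundary point, say $\ones{y}=\alpha n$ with $\ones{x}\neq\beta n$, only the $x$-flips are decided by $E_2$ (and these do pull $\ones{x}$ toward $\beta n$, decreasing $\M$); the $y$-flips are still decided by the bilinear term $(\ones{y'}-\ones{y})(\ones{x}-\beta n)$ and push $\ones{y}$ \emph{away} from $\alpha n$, increasing $\M$. The same asymmetry holds at $\ones{x}=\beta n$. What actually saves the bound is not that both flip directions are favourable, but that the resulting drift at each boundary point coincides exactly with the formula of the quadrant containing it (e.g.\ $(\ones{x}-\ones{y})/2n$ at $\ones{y}=\alpha n$, $\ones{x}>\beta n$, which lies in the third quadrant), so the estimate $\E{\M_t-\M_{t+1}\mid \M_t}\ge\left(\M_t-(A+B)\sqrt{n}\right)/(2n)$ persists. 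Your computations already yield this; only the summarising sentence overstates what happens at the boundary.
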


\subsection{Supplementary material for the analysis of \rwab algorithm}
We defer some important definitions used in the analysis of the \rwab algorithm here. Following the definitions used in \cite{larcher2023simple}, we define

\begin{defn}
\label{def:swap}
We say that $a^+$ swaps or we have a swap if in the call of \Challenge, the value of $a^+$ changes. A swap is called a mistake when $a^+=a^*$ at the start of \Challenge and no change of the underlying distribution of the rewards (abbrev. change) occurs while the \Challenge was running.
\end{defn}

\begin{defn}
\label{def:era}
An era denotes the time between two consecutive changes in the reward distribution. A sub-era is an interval in which both the underlying reward distribution and the arm $a^+$ remain constant. We use $L$ to denote the number of eras and $M$ denote the number of sub-eras.
\end{defn}

Before showing the main theorem, we need some lemma to proceed.

\begin{restatable}{lemma}{SecSixLemOne}
\label{lem:RegretLemOne} Given $X\sim Geo(p), Y\sim Geo(q)$ where $p,q \in (0,1)$, if $p \geq q$, then $Y$ stochastic dominates $X$.

\end{restatable}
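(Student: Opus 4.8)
The plan is to verify the definition of (first-order) stochastic dominance directly through the tail distribution functions, which for geometric random variables take a particularly simple closed form. Recall that $Y$ stochastically dominates $X$ precisely when $\Pr(Y > k) \geq \Pr(X > k)$ for every integer $k$. First I would fix the convention that $Geo(p)$ is supported on the positive integers with $\Pr(X = k) = (1-p)^{k-1}p$, so that the tail probability telescopes to $\Pr(X > k) = (1-p)^{k}$ for all $k \geq 0$; the complementary ``number of failures'' convention only shifts the exponent and changes nothing in what follows. The identical computation gives $\Pr(Y > k) = (1-q)^{k}$.

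The core of the argument is then the elementary observation that $p \geq q$ implies $1-p \leq 1-q$, and since $1-p, 1-q \in (0,1)$ and $k \geq 0$, raising to the $k$-th power preserves the inequality, yielding $(1-p)^{k} \leq (1-q)^{k}$. Hence $\Pr(X > k) \leq \Pr(Y > k)$ for every $k$, which is exactly the defining condition for $Y$ to stochastically dominate $X$. This requires only monotonicity of $t \mapsto t^k$ on $[0,1]$ and no further machinery.

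Alternatively --- and perhaps more in the spirit of the later applications to \rwab --- one can give a one-line coupling proof. Draw a single i.i.d. sequence $U_1, U_2, \dots$ of uniforms on $[0,1]$ and set $X := \min\{k : U_k \leq p\}$ and $Y := \min\{k : U_k \leq q\}$, which realise the correct geometric marginals. Because $p \geq q$ forces $\{U_k \leq q\} \subseteq \{U_k \leq p\}$ for each $k$, the first index at which the $Y$-process succeeds can only be at least as large as the one for $X$, so $Y \geq X$ holds pointwise on this common probability space; pointwise domination trivially implies stochastic domination.

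There is essentially no serious obstacle here, as the statement is near-immediate in either formulation. The only point requiring minor care is to state the geometric convention explicitly and track which parameter is the ``success'' probability, since the direction of the dominance would be reversed if one inadvertently swapped the roles of the success and failure probabilities; making the convention precise at the outset removes this ambiguity.
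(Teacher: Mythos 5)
Your first argument is essentially the paper's own proof: the paper likewise computes the tail of the geometric distribution in closed form (summing $\Pr(X=k)=(1-p)^k p$ via the geometric series to obtain $(1-p)^{\lfloor s\rfloor+1}$, and similarly for $Y$) and concludes from $p\geq q \Rightarrow 1-p \leq 1-q$ exactly as you do; your choice of the ``trials'' convention rather than the paper's ``failures'' convention only shifts the exponent, as you note. Your coupling argument is a genuinely different (and arguably cleaner) alternative that the paper does not take, since it yields pointwise domination $Y\geq X$ on a common probability space rather than comparing tails analytically; your explicit care with the success/failure convention is also warranted, as the paper itself glosses over this and even has a harmless off-by-one in its summation index $\lfloor s\rfloor + 1$ when $s$ is an integer.
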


\par Lemma~\ref{lem:RegretLemOne} shows that given two Geometric distributed random variables, the sufficient condition for one stochastic dominating the other is simply with higher "success" probability. 

The proof consists of four cases. We denote the accumulated regret for each case by $\mathcal{R}_i$ for $i \in [4]$. Next, following the definitions in \cite{larcher2023simple}, we define $\mathcal{R}_i$ for $i \in [4]$ formally.
\begin{defn}
We denote the optimal arm by $a^*$, the action value of $S$ in \Challenge by $S_t$ and the expected regret at iteration $t$ by $\Delta$. Let $s=\sqrt{T/L}$ where $T$ is the time horizon and $L$ is the number of changes. Then, we define
\begin{itemize}
    \item $\mathcal{R}_1$:  ``The accumulated regret when starting \Challenge with correct arm $a^{+}=a^*$ until a change occurs or the \Challenge ends'';
    \item $\mathcal{R}_2$: ``The accumulated regret when
    a change occurs during an ongoing \Challenge until the next change occurs or the end of this ongoing \Challenge no matter if we have $a^+=a^*$ or $a^+\neq a^*$'';
    \item $\mathcal{R}_3$: ``The accumulated regret when $a^+ \neq a^*$ with no ongoing \Challenge'';
    \item $\mathcal{R}_4$: ``The accumulated regret when starting \Challenge with incorrect arm $a^{+}\neq a^*$ until a new change occurs or the end of \Challenge in which the action value in \Challenge, denoted by $S_t$, hits $-s$ to trigger a swap''.
\end{itemize}
\end{defn}

\begin{proof}[Proof of Sketch for Theorem~\ref{thm:RegretMain}]
Here we provide a sketch of the proof.   
We divide the analysis into four cases covered by $\mathcal{R}_i$ for $i \in [4]$ (as shown in Figure~\ref{fig:RegretClass}).
Once we obtain the tail bound for these $\mathcal{R}_i$, we can derive the tail bound for total regret $\mathcal{R}$ by using $\mathcal{R} \leq \sum_{i=1}^4 \mathcal{R}_i$.
\end{proof}

\begin{figure}[!ht] 
    \centering
       \includegraphics[width=1.0\linewidth]{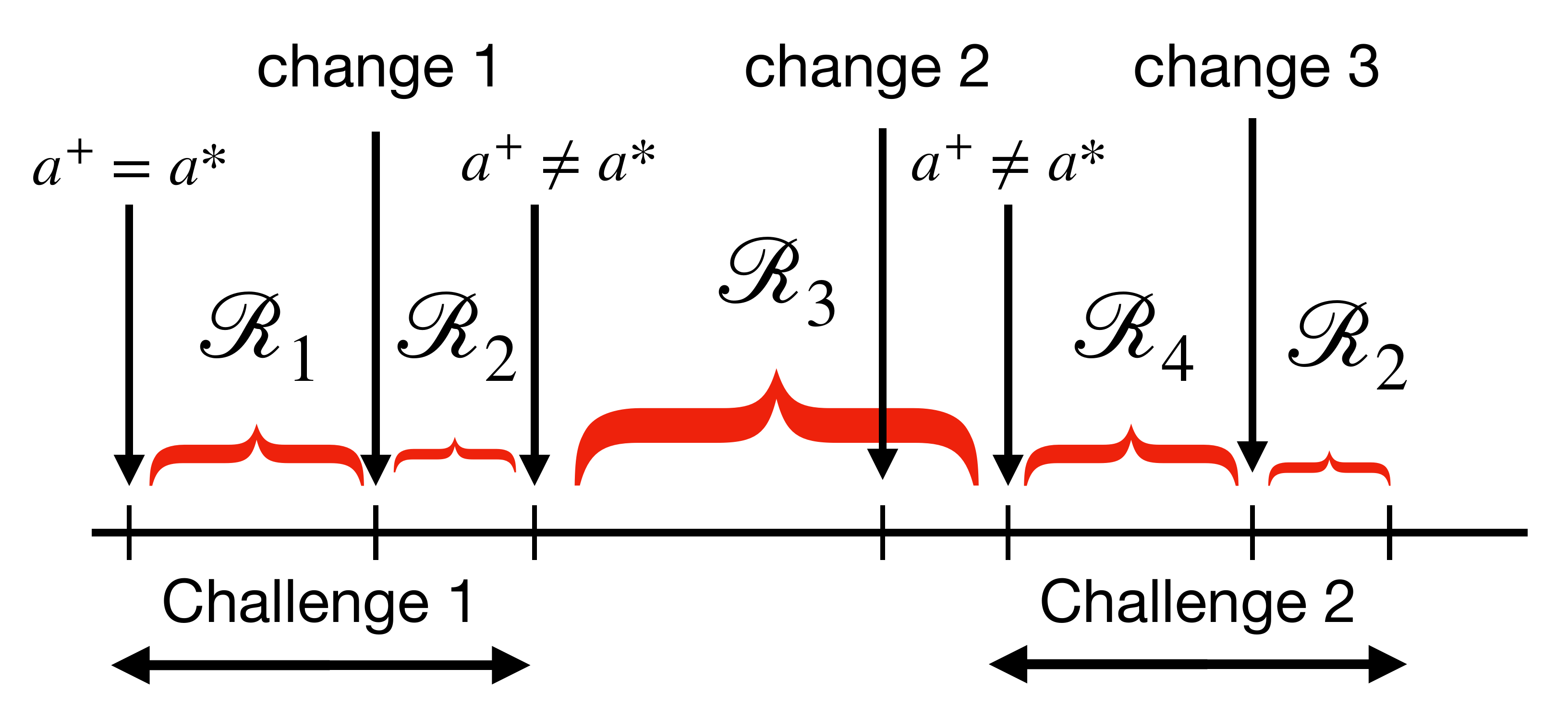}
  \caption{Classes of accumulated Regret along time horizon.}
  \label{fig:RegretClass} 
\end{figure}

\newpage
\section{Omitted Proofs}

\SecThreeMain*
\begin{proof}[Proof of Theorem~\ref{thm:Uppertail2}] For any $ k\in \mathbb{N}$, we define $T_{k} = \inf \{t \geq k \mid X_{t} \leq 0 \}$. Note that for any $k$ and $m$, if $m<k$, then  $\{\omega\in\Omega \mid T_k(\omega)=m\}=\emptyset\in \mathcal{F}_m$ and if $m \geq k$, from definition of $T_k$ (consider the process after $X_k$), then $\{\omega\in\Omega\mid T_k(\omega)=m\}=\{\omega\in\Omega\mid X_i(\omega)>0 \text{ and } X_m(\omega)\leq 0 \text{ for $i=k, \dots, m-1$} \}$ is in the $\sigma$-algebra $\mathcal{F}_m$ since $X$ is adapted to $\mathcal{F}$.
Thus, $T_k$ is a stopping time with respect to $\left(\mathcal{F}_t\right)_{t\geq 0}$. Since $X_t$ has a negative drift, we want to transform the process to overcome such negative drift by using variance. We proceed by defining $Y_t$. Also, we define $Z_t$ to connect $Y_t$ and $T_k$ to establish the recurrence step.
\par We would like to apply the extended optional stopping theorem for the random variable $X_t, Z_t$. To do so, we first need to prove that the expectation of $T$ is finite and thus the expectation of $T_k$ is finite for any given $k$. We want to apply additive drift theorem~\cite{he_drift_2001} on $Y_t$ instead of directly on $X_t$. 
Note that if $X_t \leq 0$ and $X_t \in [0,b]$, then $Y_t=b^2-(b-X_t)^2 \leq 0$. If $Y_t \leq 0$, then $b^2-(b-X_t)^2\leq 0$ which implies that $b^2 \leq (b-X_t)^2$. $X_t \in [0,b]$ and $b-X_t \in [0,b]$, then we have $b \leq b-X_t$ and thus we get $X_t \leq 0$. We can see the equivalence of these two events from here. So $Y_t \leq 0$ is equivalent to $X_t \leq 0$ from the definition of $Y_t$ and the equivalence of these two events gives
\begin{align*}
    T=\inf\{t \geq 0 \mid X_t   \leq 0\} = \inf\{t \geq 0 \mid Y_t  \leq 0\}.
\end{align*}
Consider the process $Y_t=b^2-(b-X_t)^2$ for all ${t\in\N}$, we have
\begin{align}
    \Et{Y_t-Y_{t+1}} &= \Et{\left(b-X_{t+1} \right)^2-\left(b-X_t \right)^2} \nonumber \\
                     &=\Et{X_{t+1}^2-X_t^2-2b\left(X_{t+1}-X_t \right)} \nonumber\\
                     &=\mathrm{E}_t \left( \left(X_{t+1}-X_t \right)^2 \right. \nonumber\\
                     &\quad \left. -2\left(X_{t+1}-X_t \right)\left(b-X_t \right) . \right)\nonumber\\ \intertext{Using $(A_1)$ gives}
                     &\geq \delta >0.  \label{eq:PositiveDriftYt}
\end{align}
 By classic additive drift theorem~\cite{he_drift_2001} on $Y_t$ we can derive the upper bound for the expected runtime is $\E{T} \leq \frac{b^2-(b-X_0)^2}{\delta}$. 
 Now, we define another process $Z_t= Y_t+\delta t$, then
\begin{align*}
    \Et{Z_t-Z_{t+1}}&= \Et{Y_t-\delta t-Y_{t+1}-\delta(t+1)} \\                 &=\Et{Y_t-Y_{t+1}-\delta} \\ \intertext{Using Eq~(\ref{eq:PositiveDriftYt}) gives}
                        &\geq 0.
\end{align*}
Also notice that $Y_t, Z_t$ are adapted to the filtration $\mathcal{F}_t$ and due to $X_t$ is finite then $\E{|X_t|}<\infty$. This implies that $\E{|Y_t|}, \E{|Z_t|}$ are bounded as well for any given $t$. Then, both $Z_t$ and $Y_t$ are super-martingales according to Definition~\ref{def:martingale}. 
Since we have shown that $T_k$ is a stopping time, we can apply Theorem~\ref{thm:Doob2} with respect to $Z_t$.
\begin{align*}
    \Ek{Z_k-Z_{T_k}} &\geq 0 \\ \intertext{Substituting $Z_t$ gives}
    \Ek{Y_k+\delta k -\left(Y_{T_k}+\delta T_k \right)} &\geq 0 \\ \intertext{Rearranging the equation gives}
    \Ek{Y_k -Y_{T_k} } &\geq \delta \Ek{T_k-k }
\end{align*}

Now, note that 
\begin{align*}
    0\leq \Ek{Y_{T_k}} &= \Ek{b^2-(b-X_{T_k})^2} \\
                       &= b^2 - \Ek{(b-X_{T_k})^2} \\ \intertext{Using Jensen's inequality gives,}
                                                &\leq b^2 - (\Ek{b-X_{T_k}} )^2 \\ \intertext{Since $\E{T}<\infty$ as shown and we have shown Lemma~\ref{lem:stepsize}, we satisfy condition (4) in the extended Optional Stopping Time Theorem. Using Optional Stopping Theorem (Theorem~\ref{thm:Doob2}) on stopping time $T_k$ gives,}
                                                &\leq b^2 - (b-X_k)^2
\end{align*}
So we can see the bounds for $\Ek{Y_{T_k}} \in [0,b^2 - (b-X_k)^2]$ and deduce that $b^2 - (b-X_k)^2 \geq \Ek{Y_k-Y_{T_k}} \geq 0$. By setting
\begin{align*}
    \theta_k:= \frac{\Ek{Y_k-Y_{T_k}}}{\delta}  \in [0,\frac{b^2}{\delta}],
\end{align*}
and using $\Ek{Y_k -Y_{T_k} } \geq \delta \Ek{T_k-k }$, we have shown the following estimate:
\begin{align}
      \E{ T_{k} - k \mid \mathcal{F}_{k}} \leq \theta_{k} \quad \text{for some $\theta_{k} \geq 0$.} \label{eq:khit_time}
\end{align}
Taking $\theta := eb^2 / \delta$, we get  for all natural numbers $k >0$,
\begin{align}
    \E{\mathds{1}_{ \{T_{k} - k > \theta \}}\mid \mathcal{F}_{k} } &= \Pr (T_{k} - k > \theta \mid \mathcal{F}_{t})\nonumber \\ \intertext{Since $T_k-k\geq 0$, we can apply Markov's inequality}
    &\leq  \E{ T_{k} - k \mid \mathcal{F}_{k}} / \theta \nonumber \\ \intertext{Using Eq.~(\ref{eq:khit_time}) gives}
                                                                                                           &\leq \theta_{k} / \theta \nonumber \\ 
                                                           & \leq e^{-1}. \label{eq:Markov2}
\end{align}
\par Next, we construct a recurrence relation by using Equation \ref{eq:Markov2} above. We consider the intersection decomposition (A similar intersection decomposition can be found in \cite{menshikov_popov_wade_2016})  between event $\{T_0>(k+1)\theta\}$ and $\{T_0>k\theta \}$ by introducing $T_{k\theta}$
\begin{align*}
    \{T_0 >(k+1)\theta\} &= \{{T_0} >k\theta \} \cap \{{T_{k\theta}} >(k+1)\theta \}  \\
                       &= \{T_0 >k\theta \} \cap \{{T_{k\theta}} - k\theta >\theta \} \\ \intertext{Then,}
                         \mathds{1}_{   \{T_0 >(k+1)\theta\}}  &=   \mathds{1}_{\{T_0 >k\theta \} } \cdot  \mathds{1}_{\{{T_{k\theta}} - k\theta >\theta \}}.
\end{align*}
So, we have the recurrence relation:
\begin{align*}
    \Pr(T_0 >(k+1) \theta) &= \E{ \mathds{1}_{\{{T_0 >(k+1)\theta \} } }} \\
                             &= \E{ \mathds{1}_{\{T_0 >k\theta \} } \cdot  \mathds{1}_{\{ T_{k\theta }  - k\theta >\theta \} } } \\ \intertext{note that $\{T_0 - k\theta >0 \} \in \mathcal{F}_{k\theta}$ and use the property of conditional expectation,}
                             &= \E{\mathds{1}_{\{T_0 >k\theta \} } \E{\mathds{1}_{\{{T_{k\theta}} - k\theta >\theta \}} \mid \mathcal{F}_{k\theta} }}\\ \intertext{Using Eq.~\ref{eq:Markov2} gives}
                             &\leq \E{\mathds{1}_{\{T_0 >k\theta \} } \cdot e^{-1}}\\
                             &= \Pr(T_0 >k\theta) \cdot e^{-1} \\ \intertext{By induction, we have }
                             &\leq e^{-(k+1)}.
\end{align*}
Then, finally, we derive (rearranging all the terms) for $\tau\geq 0$:
\begin{align*}
    \Pr(T_0\geq \tau) \leq e^{-\tau / \theta} 
\end{align*}
where $\theta \geq  e b^2/\delta $.  
\end{proof}

\SecThreeMainOneplus*
\begin{proof}[Proof of Theorem~\ref{thm:ExpTail}]
    Let us define $Y_t=n-X_t$ and $T:=\{t\geq 0 \mid X_t \geq n\}=\{t\geq 0 \mid Y_t \leq 0\}$. So from $(C2),(C3)$ we have 
    \begin{align*}
        \Et{Y_t-Y_{t+1}} &\geq 0 \\
        \Et{\left(Y_t-Y_{t+1} \right)^2} &\geq \delta
    \end{align*}
    This implies that $Y_t$ satisfies $(A1)$ in Theorem~\ref{thm:Uppertail2}. Notice that $Y_t \in [0,n]$ so we set $b=n$ in Theorem~\ref{thm:Uppertail2}, and thus we satisfy $(A2)$. We now have $Y_t$ satisfies all conditions in Theorem~\ref{thm:Uppertail2}.  We apply Theorem~\ref{thm:Uppertail2} to $Y_t$ with $Y_{T}=0$, we obtain the desired results.
\end{proof}

\SecThreeLemOne*
\begin{proof}[Proof of Lemma~\ref{lem:stepsize}]
Notice that for all $T>t$,
\begin{align*}
     E\bigl[|X_{t+1}-X_t|\mid {\mathcal F}_t ] & = \int_{0}^{\infty} \Pr (|X_{t+1}-X_t|\geq j\ \mid {\mathcal F}_t) dj    \\ \intertext{Using the step size condition, we have}
                                                            & \leq \int_{0}^{\infty} \frac{r}{(1+\eta)^j} dj \\
                                                            &=\frac{r}{\log (1+\eta )}:=c.
\end{align*}
\end{proof}

\SecThreeMainCorOne*
\begin{proof}[Proof of Theorem~\ref{cor:ExpTail_cor}]
    This proof is a direct result of Theorem~\ref{thm:ExpTail} by using fixed step size condition (C1).
\end{proof}

\SecThreeMainTwo*
\begin{proof}[Proof of Theorem~\ref{thm:ExpTail_2absorbing}] We follow the analysis of Theorem~\ref{thm:Uppertail2} by replacing $Y_t = X_t(n-X_t)$ and $Z_t=Y_t+\delta t$ in the proof. 
 For any $ k\in \mathbb{N}$, we define $T_{k} = \inf \{t \geq k \mid X_{t} \leq 0 \text{ or } X_t \geq n \}$. Note that for any $k$ and $m$, if $m<k$, then  $\{\omega\in\Omega \mid T_k(\omega)=m\}=\emptyset\in \mathcal{F}_m$ and if $m \geq k$, from definition of $T_k$ (consider the process after $X_k$), 
 $\{\omega\in\Omega\mid T_k(\omega)=m\}=\{\omega\in\Omega\mid X_i(\omega)\in (0,n) \text{ and } X_m(\omega)\leq 0, X_m(\omega)\geq n  \text{ for $i=k, \dots, m-1$} \}$ is in the $\sigma$-algebra $\mathcal{F}_m$ since $X$ is adapted to $\mathcal{F}$.Thus, $T_k$ is a stopping time with respect to $\left(\mathcal{F}_t\right)_{t\geq 0}$. 
 \par We would like to apply the extended optional stopping theorem for the random variable $X_t$. To do so, we first need to prove that the expectation of $T$ is finite and thus the expectation of $T_k$ is finite for any given $k$. We want to apply additive drift theorem~\cite{he_drift_2001} on $Y_t$ instead of directly on $X_t$.
 Note that if $X_t \leq 0$ or $X_t \geq n$, then $Y_t=X_t(n-X_t) \leq 0$. If $Y_t \leq 0$, then $X_t(n-X_t)\leq 0$ which implies that either $X_t \leq 0$ or $n-X_t \leq 0$. We can see the equivalence of these two events from here. The equivalence of these two events gives
\begin{align*}
    T=\inf\{t \geq 0 \mid X_t  \leq 0 \text{ or } X_t \geq n\} = \inf\{t \geq 0 \mid Y_t  \leq 0\}.
\end{align*}
Consider the process $Y_t = X_t(n-X_t)$ for $t \geq 0$, we have
\begin{align*}
    \Et{Y_{t}-Y_{t+1}} &= \Et{X_t(n-X_t)-X_{t+1}(n-X_{t+1} } \\
                       &= n \Et{X_t-X_{t+1}}+ \Et{X_{t+1}^2-X_t^2} \\ \intertext{Using condition (C3) and $\Et{X_{t+1}-X_t}=0$ gives}
                       &=\Et{\left(X_{t}-X_{t+1} \right)^2} \geq \delta .
\end{align*}        
And we have 
\begin{align*}
    \Et{Z_t-Z_{t+1}} = \Et{Y_t-Y_{t+1}-\delta} \geq 0 .
\end{align*}
Also notice that $Y_t, Z_t$ are adapted to the filtration $\mathcal{F}_t$ and due to $X_t$ is finite then $\E{|X_t|}<\infty$. This implies that $\E{|Y_t|}, \E{|Z_t|}$ are bounded as well for any given $t$. Then, both $Z_t$ and $Y_t$ are super-martingales according to Definition~\ref{def:martingale}. 
Since we have shown that $T_k$ is a stopping time, we can apply Theorem~\ref{thm:Doob2} with respect to $Z_t$. 
\begin{align*}
    \Ek{Z_k-Z_{T_k}} &\geq 0 \\ \intertext{Substituting $Z_t$ gives}
    \Ek{Y_k+\delta k -\left(Y_{T_k}+\delta T_k \right)} &\geq 0 \\ \intertext{Rearranging the equation gives}
    \Ek{Y_k -Y_{T_k} } &\geq \delta \Ek{T_k-k }
\end{align*}

Now, note that 
\begin{align*}
    0\leq \Ek{Y_{T_k}} &= \Ek{n \left(n-X_{T_k} \right)} \\
                       &= n^2 - n\Ek{X_{T_k}} \\ \intertext{Since $\E{T}<\infty$ as shown and we have shown Lemma~\ref{lem:stepsize}, we satisfy condition (4) in the extended Optional Stopping Time Theorem. Using Optional Stopping Theorem (Theorem~\ref{thm:Doob2}) on stopping time $T_k$ gives,}
                                                &\leq n^2 - nX_k
\end{align*}
So we can see the bounds for $\Ek{Y_{T_k}} \in [0,n^2 - nX_k]$ and deduce that $n^2 - nX_k \geq \Ek{Y_k-Y_{T_k}} \geq 0$. By setting
\begin{align*}
    \theta_k:= \frac{\Ek{Y_k-Y_{T_k}}}{\delta}  \in [0,\frac{n^2}{2\delta}],
\end{align*}
and using $\Ek{Y_k -Y_{T_k} } \geq \delta \Ek{T_k-k }$, we have shown the following estimate:
\begin{align}
      \E{ T_{k} - k \mid \mathcal{F}_{k}} \leq \theta_{k} \quad \text{for some $\theta_{k} \geq 0$.} \label{eq:khit_time2}
\end{align}

Taking $\theta := \frac{en^2}{2\delta}$, we get  for all natural numbers $k >0$,
\begin{align}
    \E{\mathds{1}_{ \{T_{k} - k > \theta \}}\mid \mathcal{F}_{k} } &= \Pr (T_{k} - k > \theta \mid \mathcal{F}_{t})\nonumber \\ \intertext{Since $T_k-k\geq 0$, we can apply Markov's inequality}
    &\leq  \E{ T_{k} - k \mid \mathcal{F}_{k}} / \theta \nonumber \\ \intertext{Using Eq.~(\ref{eq:khit_time2}) gives}
                                                                                                           &\leq \theta_{k} / \theta \nonumber \\ 
                                                           & \leq e^{-1} \label{eq:Markov3}
\end{align}
\par Next, we consider constructing a recurrence relation by using Equation \ref{eq:Markov3} above. Note that we consider the intersection decomposition (A similar intersection decomposition can be found in \cite{menshikov_popov_wade_2016})  between event $\{T_0>(k+1)\theta\}$ and $\{T_0>k\theta \}$ by introducing $T_{k\theta}$
\begin{align*}
    \{T_0 >(k+1)\theta\} &= \{{T_0} >k\theta \} \cap \{{T_{k\theta}} >(k+1)\theta \}  \\
                       &= \{T_0 >k\theta \} \cap \{{T_{k\theta}} - k\theta >\theta \} \\ \intertext{Then,}
                         \mathds{1}_{   \{T_0 >(k+1)\theta\}}  &=   \mathds{1}_{\{T_0 >k\theta \} } \cdot  \mathds{1}_{\{{T_{k\theta}} - k\theta >\theta \}}.
\end{align*}
So, we have the recurrence relation:
\begin{align*}
    \Pr(T_0 >(k+1) \theta) &= \E{ \mathds{1}_{\{{T_0 >(k+1)\theta \} } }} \\
                             &= \E{ \mathds{1}_{\{T_0 >k\theta \} } \cdot  \mathds{1}_{\{ T_{k\theta }  - k\theta >\theta \} } } \\ \intertext{note that $\{T_0 - k\theta >0 \} \in \mathcal{F}_{k\theta}$ and use the property of conditional expectation,}
                             &= \E{\mathds{1}_{\{T_0 >k\theta \} } \E{\mathds{1}_{\{{T_{k\theta}} - k\theta >\theta \}} \mid \mathcal{F}_{k\theta} }}\\ \intertext{Using Eq.~\ref{eq:Markov3} gives}
                             &\leq \E{\mathds{1}_{\{T_0 >k\theta \} } \cdot e^{-1}}\\
                             &= \Pr(T_0 >k\theta) \cdot e^{-1} \\ \intertext{By induction, we have }
                             &\leq e^{-(k+1)}.
\end{align*}
Then, finally, we derive (rearranging all the terms) for $\tau\geq 0$:
\begin{align*}
    \Pr(T_0\geq \tau) \leq e^{-\tau / \theta} 
\end{align*}
where $\theta = \frac{en^2}{2\delta} $.

\end{proof}

\SecThreeMainThree*
\begin{proof}[Proof of Theorem~\ref{thm:ExpTail_drift}]
We follow the analysis of Theorem~\ref{thm:Uppertail2} by By replacing $Y_t = X_t$ and $Z_t=Y_t-\varepsilon t$ in the proof. 
 For any $ k\in \mathbb{N}$, we define $T_{k} = \inf \{t \geq k \mid  X_t \geq n \}$. Note that for any $k$ and $m$, if $m<k$, then  $\{\omega\in\Omega \mid T_k(\omega)=m\}=\emptyset\in \mathcal{F}_m$ and if $m \geq k$, from definition of $T_k$ (consider the process after $X_k$), $\{\omega\in\Omega\mid T_k(\omega)=m\}=\{\omega\in\Omega\mid X_i(\omega)<n \text{ and } X_m(\omega)\geq n \text{ for $i=k, \dots, m-1$} \}$ is in the $\sigma$-algebra $F_m$ since $X$ is adapted to $\mathcal{F}$.Thus, $T_k$ is a stopping time with respect to $\left(\mathcal{F}_t\right)_{t\geq 0}$. We would like to apply the extended optional stopping theorem for the random variable $X_t$. To do so, we first need to prove that the expectation of $T$ is finite and thus the expectation of $T_k$ is finite for any given $k$. We apply additive drift theorem~\cite{he_drift_2001} on $X_t$ to get $\E{T} \leq \frac{n-X_0}{\varepsilon}$. And we have 
\begin{align*}
    \Et{Z_{t+1}-Z_t} = \Et{X_{t+1}-X_{t+1}-\varepsilon} \geq 0.
\end{align*}
Also notice that $Z_t$ is adapted to the filtration $\mathcal{F}_t$ and due to $X_t$ is finite then $\E{|X_t|}<\infty$. This implies that $\E{|Z_t|}$ are bounded as well for any given $t$. Then, both $Z_t$ and $Y_t$ are sub-martingales according to Definition~\ref{def:martingale}. 
Since we have shown that $T_k$ is a stopping time, we can apply Theorem~\ref{thm:Doob2} with respect to $Z_t$. 
\begin{align*}
    \Ek{Z_{T_k}-Z_k} &\geq 0. \\ \intertext{Substituting $Z_t$ gives}
    \Ek{X_{T_k}-\varepsilon T_k - \left(X_k-\varepsilon k \right)} &\geq 0. \\ \intertext{Rearranging the equation gives}
    \Ek{X_{T_k} -X_{k} } &\geq \varepsilon \Ek{T_k-k }.
\end{align*}

Now, Since $\E{T}<\infty$ as shown and we have shown Lemma~\ref{lem:stepsize}, we satisfy condition (4) in the extended Optional Stopping Time Theorem. Using Optional Stopping Theorem (Theorem~\ref{thm:Doob2}) on stopping time $T_k$ gives $\Ek{X_{T_k}} \geq X_k$. So we can see the bounds for $\Ek{X_{T_k}} \in [X_k,n]$ and deduce that $n - X_k \geq \Ek{X_{T_k}-X_k} \geq 0$. By setting
\begin{align*}
    \theta_k:= \frac{\Ek{X_{T_k}-X_k}}{\varepsilon}  \in [0,\frac{n}{\varepsilon}],
\end{align*}
and using $\Ek{X_{T_k} -X_{k} } \geq \varepsilon \Ek{T_k-k }$, we have shown the following estimate:
\begin{align}
      \E{ T_{k} - k \mid \mathcal{F}_{k}} \leq \theta_{k} \quad \text{for some $\theta_{k} \geq 0$.} \label{eq:khit_time4}
\end{align}

Taking $\theta := en / \varepsilon$, we get  for all natural numbers $k >0$,
\begin{align}
    \E{\mathds{1}_{ \{T_{k} - k > \theta \}}\mid \mathcal{F}_{k} } &= \Pr (T_{k} - k > \theta \mid \mathcal{F}_{t})\nonumber \\ \intertext{Since $T_k-k\geq 0$, we can apply Markov's inequality}
    &\leq  \E{ T_{k} - k \mid \mathcal{F}_{k}} / \theta \nonumber \\ \intertext{Using Eq.~(\ref{eq:khit_time4}) gives}
                                                                                                           &\leq \theta_{k} / \theta \nonumber \\ 
                                                           & \leq e^{-1} \label{eq:Markov4}
\end{align}
\par Next, we consider constructing a recurrence relation by using Equation \ref{eq:Markov4} above. Note that we consider the intersection decomposition (A similar intersection decomposition can be found in \cite{menshikov_popov_wade_2016})  between event $\{T_0>(k+1)\theta\}$ and $\{T_0>k\theta \}$ by introducing $T_{k\theta}$
\begin{align*}
    \{T_0 >(k+1)\theta\} &= \{{T_0} >k\theta \} \cap \{{T_{k\theta}} >(k+1)\theta \}  \\
                       &= \{T_0 >k\theta \} \cap \{{T_{k\theta}} - k\theta >\theta \} \\ \intertext{Then,}
                         \mathds{1}_{   \{T_0 >(k+1)\theta\}}  &=   \mathds{1}_{\{T_0 >k\theta \} } \cdot  \mathds{1}_{\{{T_{k\theta}} - k\theta >\theta \}}.
\end{align*}
So, we have the recurrence relation:
\begin{align*}
    \Pr(T_0 >(k+1) \theta) &= \E{ \mathds{1}_{\{{T_0 >(k+1)\theta \} } }} \\
                             &= \E{ \mathds{1}_{\{T_0 >k\theta \} } \cdot  \mathds{1}_{\{ T_{k\theta }  - k\theta >\theta \} } } \\ \intertext{note that $\{T_0 - k\theta >0 \} \in \mathcal{F}_{k\theta}$ and use the property of conditional expectation,}
                             &= \E{\mathds{1}_{\{T_0 >k\theta \} } \E{\mathds{1}_{\{{T_{k\theta}} - k\theta >\theta \}} \mid \mathcal{F}_{k\theta} }}\\ \intertext{Using Eq.~\ref{eq:Markov3} gives}
                             &\leq \E{\mathds{1}_{\{T_0 >k\theta \} } \cdot e^{-1}}\\
                             &= \Pr(T_0 >k\theta) \cdot e^{-1} \\ \intertext{By induction, we have }
                             &\leq e^{-(k+1)}.
\end{align*}
Then, finally, we derive (rearranging all the terms) for $\tau\geq 0$:
\begin{align*}
    \Pr(T_0\geq \tau) \leq e^{-\tau / \theta} 
\end{align*}
where $\theta = \frac{en}{\varepsilon} $.

\end{proof}

\SecFourMainOne*
\begin{proof}[Proof of Theorem~\ref{thm:2sat2}] 
This proof follows the same proof exactly in \cite{gobel2018intuitive} except we consider the tail bound for the runtime. We define $X_t \in [n]$ to be the number of variables that the current truth assignment agrees with a satisfying assignment and $T$ to be the first hitting time that $X_t=n$.
\par From the proof of Theorem~\ref{thm:2sat} in \cite{gobel2018intuitive}, it has been shown that
the variance bound $Var(X_{t+1}-X_{t} \mid \mathcal{F}_t) \geq 1$ and $\Pr(X_{t+1}=X_t+1\mid \mathcal{F}_t)\geq 1/2$ and $\Pr(X_{t+1}=X_t-1 \mid \mathcal{F}_t)\leq 1/2$. Thus, we can easily have $\E{(X_{t+1}-X_t)^2 \mid \mathcal{F}_t} \geq Var(X_{t+1}-X_{t} \mid \mathcal{F}_t) \geq 1 $ and $\E{X_{t+1}-X_{t}\mid \mathcal{F}_t}\geq 0$. This satisfies conditions (C2) and (C3) in Theorem~\ref{thm:ExpTail}.
Then we need to verify that the randomised 2-SAT algorithm satisfies the step size condition. It is easy to see that from Algorithm~\ref{alg:2Sat}, $X_t$ always has a fixed step size $1$ (either backwards or forwards). Then, we verified the step size condition and then applied Theorem~\ref{thm:ExpTail} with $\theta = en^2$ to obtain the desired tail bound.
\begin{align*}
    \Pr(T\geq r n^2) = e^{-r n^2/ en^2} \leq  e^{-r/e}.
\end{align*}
The rest of the step follows from the proof of Theorem~\ref{thm:2sat} in \cite{gobel2018intuitive} and by multiplying the executed time of the algorithm (i.e. $O(n^2)$), we can conclude the desired tail bound.
\end{proof}

\SecFourMainTwo*
\begin{proof}[Proof of Theorem~\ref{thm:recoloring}] 
This proof follows the same proof exactly in \cite{gobel2018intuitive} except we consider the tail bound for the runtime. 
We first fix a 3-colouring of this given graph, two colours out of three and a set of vertices which consists of these two colours. 
We define the size of this set of vertices by $m$ and $Y_t \in [m]$ to be the number of vertices in this set, whose colour in the 2-colouring provided by the Recolour algorithm at iteration $t \in \mathbb{N}$ matches their colour in the original 3-colouring, with the condition that these matching colours are one of the two fixed colours. We define the first hitting time by $T:=\inf \{t>0 \mid Y_t= 0 \text{ or } m\}$.
\par It has been shown in the proof of Theorem~\ref{thm:McDiarmid} in \cite{gobel2018intuitive} that 
``the runtime of Recolour is bounded from
above by the time that Recolour takes to find such a colouring."  Thus, we consider the tail bound for $T$. It has also been shown in \cite{gobel2018intuitive} that for $s \in [m] \setminus \{0,m\}$,  
\begin{align*}
    \Pr \left(Y_{t+1}=Y_t \pm 1 \mid Y_t =s \right) &= 1/3  \\ \intertext{Note that we can compute the drift and the second moment of the drift}
    \E{Y_t-Y_{t+1} \mid Y_t =s } &= 1 \times \frac{1}{3} + (-1) \times \frac{1}{3} + 0 \times \frac{1}{3} \\
                                &=0 \\
    \E{ \left(Y_t-Y_{t+1} \right)^2 \mid Y_t =s } &=1 \times \frac{1}{3} + (-1)^2 \times \frac{1}{3}  = \frac{2}{3}
\end{align*}
Now, we satisfy condition (C1*), (C3) and zero drift condition in Theorem~\ref{thm:ExpTail_2absorbing}. 
Notice that $Y_0(m-Y_0)\leq m^2/4$ since $Y_0 \in [m]$ and this attains the maximum when $Y_0=m/2$. Also, as the definition of $m$, we have $m\leq n$. Thus, we derive the following
\begin{align*}
    \E{T} &\leq \frac{3\E{Y_0\left(m-Y_0\right)}}{2} \leq \frac{3n^2}{8}. \\ \intertext{Moreover, for any $r\geq 0$, we have}
    \Pr \left(T \geq r n^2 \right)  &\leq  e^{- 2r n^2 \cdot \frac{2}{3}/ en^2} \leq e^{\frac{-4r}{3e}} .
\end{align*}
The rest of the step follows from the proof of Theorem~\ref{thm:McDiarmid} in \cite{gobel2018intuitive} and by multiplying the executed time of the algorithm at each step (i.e. $O(n^2)$), so we finish the proof.
\end{proof}

\SecFiveMainOne*
\begin{proof}[Proof of Theorem~\ref{thm:RLS_runtime}]
This proof follows the same proof exactly in \cite{hevia2023runtime} except we consider the tail bound for the runtime.
We define $T:=\inf\{t>0 \mid(x_t,y_t)\in \OPT\}=\inf\{t>0 \mid M_t=0\}$, where $(x_t,y_t)$ are the current solutions of RLS-PD and $0\leq M_t \leq n(\alpha+\beta) \leq 2n$ is defined in Definition~\ref{def:forward_backward_drift}. 
Let $b=2(A+B)\sqrt{n}+1$. 
It has been shown in the proof of Theorem 3.1 in \cite{hevia2023runtime} that 
for every generation $t<T$, the drift of a new potential function 
\begin{align*}
    h(\M_t)=\begin{cases}
        b^2-(b-\M_t)^2 & \text{if } \M_t \le 2(A+B)\sqrt{n} \\
        b^2-(b-\M_t) & \text{if } \M_t > 2(A+B)\sqrt{n}
    \end{cases}
\end{align*}
is lower bounded by the function
\begin{align*}
    \delta(\M_t)
    &=\begin{cases}
        \delta_1 & \text{if } \M_t \le 2(A+B)\sqrt{n} \\
        \frac{\M_{t}-(A+B)\sqrt{n}}{2n} & \text{if } \M_t > 2(A+B)\sqrt{n}
    \end{cases} \\
    &\geq \frac{1}{2\sqrt{n}}:=\delta_1.
\end{align*}
Due to the piece-wise drift, we cannot directly apply our variance drift theorem (tail bound). To derive the exponential tail bound for the runtime, we divide the analysis into two phases.
We define $T_{\text{phase1}}:= \inf \{t>0 \mid \M_t < b \text{ given } \M_0 \geq b\}$.
Note that $T=\inf \{t>0 \mid \M_t=0\}=\inf \{t \geq  T_{\text{phase1}} \mid \M_t=0\}$. So we define $T_{\text{phase2}}:=T-T_{\text{phase1}}$. From the definition, we can bound $T_{\text{phase2}}$ from above by the first hitting $T_{\text{phase2}}':=\inf \{t >0 \mid \M_t=0 \text{ given } \M_0=b-1\}$. From the drift condition above for the case $\M_t \leq 2(A+B)\sqrt{n}$, we satisfy (A1), (A2) in Theorem~\ref{thm:Uppertail2} and the step size condition directly follows from the fact that RLS-PD only makes one step jump at each iteration. 
By applying Theorem~\ref{thm:Uppertail2}, we get: for any $r>0$,
\begin{align*}
    \Pr(T_{\text{phase2}}' \geq rn^{1.5}) 
    &\leq e^{-rn^{1.5} \delta_1 / e(b-1)^2} \\ \intertext{Taking $\delta_1=\frac{1}{2\sqrt{n}}$ and substituting $b=2\left(A+B \right)\sqrt{n}+1$ give}
    & \leq  \exp \left(-\frac{rn^{1.5} \frac{1}{2\sqrt{n}}}{4e(A+B)^2 n} \right) \\
    & \leq \exp \left(-\frac{r}{8e(A+B)^2} \right) = e^{-\Omega(r)}.
\end{align*}
Then, we consider the tail bound for $T_{\text{phase1}}$.
Let $h_{\max}:=b^2-b+n(\alpha+\beta)=O(n)$.
We use Theorem~\ref{thm:ExpTail_drift} by setting $X_t=h_{\max}-h(\M_t)$ for $t<T_{\text{phase1}}$. The drift for $X_t$ we obtain from above is at least $\delta_1$. So we have the tail bound
\begin{align*}
    \Pr(T_{\text{phase1}} > rn^{1.5}) 
    &\leq e^{-rn^{1.5} \delta_1 / eh_{\max}} \\ \intertext{Taking $\delta_1=\frac{1}{2\sqrt{n}}$ and substituting $h_{\max}=b^2-b+n(\alpha+\beta)$ give}
    & \leq  \exp \left(-\frac{rn^{1.5} \frac{1}{2\sqrt{n}}}{e\left(b^2-b+n(\alpha+\beta) \right)} \right) \\
    & = e^{-\Omega(r)}.
\end{align*}
Note that $\{T \geq 2rn^{1.5}\} \subseteq \{T_{\text{phase1}} \geq rn^{1.5}\}\cup \{T_{\text{phase2}} \geq rn^{1.5}\}$. Using the union bound gives
\begin{align*}
    \Pr \left(T \geq 2rn^{1.5} \right) 
    &\leq \Pr \left(T_{\text{phase1}} \geq rn^{1.5} \right) +
    \Pr \left(T_{\text{phase2}} \geq rn^{1.5} \right) \\ \intertext{Note that $T_{\text{phase2}}' \geq T_{\text{phase2}}$ and thus we have $\{T_{\text{phase2}} \geq rn^{1.5}\} \subseteq \{T_{\text{phase2}}' \geq rn^{1.5}\}$. Substituting the bounds gives}
    &\leq 2 e^{-\Omega(r)} = e^{-\Omega(r)}.
\end{align*}
We complete the proof.

\end{proof}

\SecFiveMainTwo*
\begin{proof}[Proof of Theorem~\ref{thm:reaching_the_optimum_RLS}]
This proof follows the same proof exactly in \cite{hevia2023runtime} except we consider the tail bound for the runtime.
Define $T:=\inf\{t\mid\M_t \geq (A+B)\sqrt{n}\}$, where $M_t$ is the current Manhattan distance to the set $\OPT$.  We assume $M_0\le(A+B)\sqrt{n}$. Otherwise, we reduce to the trivial case with $T=O(1)$.
We would like to show $\E{T} \leq O(n)$.
It has been shown in the proofs of Lemma~\ref{lem:drift_Manhattan_RLS} in \cite{hevia2023runtime} that 
for every generation $t<T$,
\begin{align*}
    \E{\M_{t}-\M_{t+1}- \frac{\M_{t}-(A+B)\sqrt{n}}{2n};t<T \mid M_t} \ge 0.
\end{align*}
\par We cannot directly use additive drift on $Y_t:=(A+B)\sqrt{n}-\M_t$ since the drift $\Et{Y_t-Y_{t+1}}$ can potentially be negative. Instead of using additive drift, we use the variance drift theorem (Theorem~\ref{thm:Uppertail2}) to show the runtime with the tail bound. Let $a=0, b=(A+B)\sqrt{n}$ and $Y_t$ as defined above in Theorem~\ref{thm:Uppertail2}. 
As shown in the proof of Theorem 4.1 in \cite{hevia2023runtime},
\begin{align*}
    &\Et{(Y_{t+1}-Y_t)^2-2(Y_{t+1}-Y_t)(b-Y_t);t<T}\\
    &\ge \frac{1}{2}-\frac{2(A+B)}{\sqrt{n}} \\ \intertext{For sufficiently large $n$, there exists a constant $\delta_2>0$ s.t.}
    &\ge \delta_2.
\end{align*}
 The expected runtime is $\E{T}=O(b^2)=O(n)$ and
 tail bound with the worst case that $Y_0=b$ and $\tau=rn $, we get: for any $r>0$,
\begin{align*}
    \Pr \left(T > rn\right) 
    &\leq \exp \left(-\frac{rn \delta_2}{eb^2}\right) \\ \intertext{Taking constant $\delta_2>0$ and substituting $b$ give}
    & = \exp \left( -\frac{rn \cdot \delta_2}{(A+B)^2n} \right)  \\
    & = e^{-\Omega(r)}.
\end{align*}
\end{proof}

\SecSixLemOne*

\begin{proof}[Proof of Lemma~\ref{lem:RegretLemOne}]
We say random variable $Y$ stochastic dominates $X$ if for any $s \in \mathbb{R}$, 
\begin{align*}
    \Pr \left( X \geq s \right)\leq \Pr \left(Y \geq s \right).
\end{align*}
Let us substitute the probability distribution function for Geometric distribution into each random variable. 
\begin{align*}
    \Pr \left(X \geq s \right)
    &= \sum_{k=\floor{s}+1}^{\infty} (1-p)^k p \\ \intertext{Using the sum for geometric series gives}
    &= p \cdot \frac{(1-p)^{\floor{s}+1}\left(1-(1-p)^{\infty} \right)}{1- (1-p)}               \\
    &=(1-p)^{\floor{s}+1} 
\end{align*}
Similarly, we can obtain $\Pr \left(Y \geq s \right)=(1-q)^{\floor{s}+1} $. Since $p \geq q$, then $(1-p)\leq (1-q)$. Thus, for any $s \in \mathbb{R}$, we can derive $ \Pr \left( X \geq s \right)\leq \Pr \left(Y \geq s \right)$.

\end{proof}

\SecSixMain*

\begin{proof}[Proof of Theorem~\ref{thm:RegretMain}]
    This proof partially follows from the proof in \cite{larcher2023simple} since we need to deal with the tail bound of the product of multiple random variables.
    We use era/sub-era defined in Definition~\ref{def:era} and swap/mistakes defined in Definition~\ref{def:swap}.
    We have $L$ eras since there are $L$ changes from the problem setting, and we let $M$ denote the number of such sub-eras. 
    \par Following the analysis of \cite{larcher2023simple}, we divide the proof into four parts. We denote the action value $S$ by $S_t$ at iteration $t$ in Algorithm~\ref{alg:rwab_challenge}. Define $\tau_{+1}$ as the hitting time of $+1$ in Algorithm~\ref{alg:rwab_challenge}, i.e., the minimal $t \geq 1$ such that $S_t \geq 1$. Similarly, for $s=\sqrt{T/L}$, define $\tau_{-s}$ to be the minimal $t \geq 1$ such that $S_t \leq s $. Denote the difference of rewards per iteration by $R_t = r^+-r^- \in [-1,1]$ and the expected regret is  $\Delta= |\E{R_t}| \in (0,1]$ (the expectation over the reward distributions). Since two reward distributions are fixed from the problem settings, we can see $\Delta$ is some constant in $(0,1]$.
    \par We denote the initial position by $S_0$ and $S_t \in (-s,1)$ for all $t<\min\{\tau_{-s},\tau_{+1}\}$. To simplify the calculation, we overestimate the regret per iteration simply by $R_t\leq 1$. We also define the accumulated regret for each case denoted by $\mathcal{R}_i$ for all $i \in [4]$ in Section 12.3: Supplementary material for the analysis of \rwab algorithm.
    \begin{itemize}
        \item[(1)] We firstly estimate $\mathcal{R}_1$. In this case, we accumulate regret only when running \Challenge with $a^+=a^*$. 
        \par The \Challenge breaks if either $S_t$ hits $+1$ or $-s$. To simplify the calculation, we only estimate the time when $S_t$ hits $+1$ in the following analysis.
        By applying Theorem~\ref{thm:ExpTail_drift} to $S_t$ with
        $\Et{S_{t+1}-S_t}\geq \Delta>0$, we have $\tau_{+1}$ is at most $\frac{\delta(1-S_0)}{\Delta }$ with probability at least $1-\exp \left(-\frac{\delta(1-S_0)}{e \cdot 1} \right)\geq 1-e^{-\delta/e}$ for any $\delta>0$. In other words, a \Challenge lasts at most $\frac{\delta(1-S_0)}{\Delta }$ with probability at least $1-e^{-\delta/e}$ for any $\delta>0$. Note that the regret per \Challenge is $1 \cdot \frac{\delta(1-S_0)}{\Delta }=\frac{\delta(1-S_0)}{\Delta }$. From Algorithm~\ref{alg:rwab}, we start a \Challenge with probability $\sqrt{\frac{L}{T}}$. We denote the number of \Challenge by $Z$ and $Z$ is subject to a Binomial distribution $\text{Bin}(T,\sqrt{\frac{L}{T}})$. By Chernoff Bound, we obtain for any $\delta>0$, 
        \begin{align}
            \Pr \left(Z \geq (1+\delta)\sqrt{LT} \right) \leq e^{-\delta^2\sqrt{LT}/(2+\delta)}. \label{eq:Chernoff}
        \end{align}    
        Using Union bound with $\{\mathcal{R}_1 \geq \delta(1+\delta)(1-S_0)\sqrt{LT}\}\subseteq \{Z \geq (1+\delta)\sqrt{LT} \}\cup \{\tau_{+1} \geq \frac{\delta(1-S_0)}{\Delta}\}$, we can derive
        \begin{equation}
            \Pr \left(\mathcal{R}_1 \leq \delta(1+\delta)\frac{(1-S_0)}{\Delta} \sqrt{LT}\right) \geq  1-e^{-\frac{\delta}{e}}-
            e^{-\frac{\delta^2\sqrt{LT}}{2+\delta}} \label{eq:R1}.
        \end{equation}

        \item[(2)] Next, we estimate $\mathcal{R}_2$. 
        Recall an era is defined in Definition~\ref{def:era}. As \Challenge breaks if the $S_t$ hits either $+1$ or $-s$. To simplify the calculation, we only estimate the time when $S_t$ hits $-s$ in the following analysis. By applying Theorem~\ref{thm:ExpTail_drift} to $X_t=s+S_t$ (for the case that $\tau_{-s}<\tau_{+1}$). The time a \Challenge breaks is, at most $\frac{ (s+S_0)\delta}{\Delta}$ with probability at least $1-\exp \left(-\frac{((s+S_0))\delta}{es} \right) \geq 1-e^{-\delta/e}$ for any $\delta>0$. Note that in each era, the regret accumulated is at most $1 \cdot \frac{\delta(s+S_0)}{\Delta}=\frac{\delta(s+S_0)}{\Delta}$. And we have $L$ eras as defined. So we derive with $S_0 \in (-s,1)$ and $s=\sqrt{T/L}$, 
       \begin{equation}
            \Pr \left(\mathcal{R}_2 \leq \frac{\delta(s+S_0)}{\Delta}L  \right) 
            \geq 1- e^{-\delta/e}-e^{-\frac{\delta^2\sqrt{LT}}{2+\delta}} \label{eq:R2}.
       \end{equation}

        \item[(3)] We then estimate $\mathcal{R}_3$.
        Recall the sub-era, is defined in Definition~\ref{def:era}. 
        To simplify our analysis, we overestimate $\mathcal{R}_3$ by assuming we accumulate regret at most $1$ at each step during this phase.
        We define $K$ to be the number of such steps in which we start a new \Challenge that ends the sub-era when no \Challenge is active. 
        We intend to show for any $\delta>0$, we have
        \begin{align}
            \Pr \left(K \geq \frac{12\delta\sqrt{T/L} }{\Delta} \right) \leq e^{-\Omega(\delta)}. \label{eq:number_challenge}
        \end{align}
        To see Eq.~(\ref{eq:number_challenge}) holds, note that by Lemma 1.1 (ii) in \cite{larcher2023simple}, each \Challenge has probability at least $\frac{\Delta}{12}$ if reaching $-s$ before $+1$ and thus end the sub-era.
        Recall that \rwab runs \Challenge with probability $\sqrt{L/T}$ at each step. So, 
        the probability of starting a new \Challenge that ends the sub-era, when no \Challenge is active, is at least $p:=\frac{\Delta}{12}\sqrt{L/T}$. We can see $K \sim \text{Geo}(p_K)$ where $p_K \geq p$. 
        We define $K'\sim \text{Geo}(p)$. By Lemma~\ref{lem:RegretLemOne}, we have $K' \succeq K$, and thus $\E{K}\leq \E{K'}=1/p $. We compute the following by using the fact that $K' \succeq K$. For any $r>0$,
        \begin{align*}
            \Pr \left(K \geq \frac{r}{p} \right) 
            &\leq \Pr \left(K' \geq \frac{r}{p} \right) \\ \intertext{We set $q:=1-p \in (0,1)$ and $K'\sim \text{Geo}(p)$.}
            & \leq \sum_{k=r/p}^{\infty} q^k p \intertext{Geometric series gives}
            &= p \cdot \frac{q^{r/p} (1-q^{\infty})}{1-q} \\ \intertext{$q:=1-p \in (0,1)$ implies that $q^{\infty}=0$ and this gives}
            &=q^{r/p} \\ \intertext{Using $q^{\alpha}=e^{\alpha\ln \left(q \right)}$ gives}
            &=e ^{r\ln (q) /p} \\
            &=\exp \left(-\frac{\ln\left(\frac{1}{1-p} \right)}{p}r \right) \\ \intertext{We define $f(p):=\frac{\ln\left(\frac{1}{1-p} \right)}{p}$ and note that $f(p)>1$ for any $p \in (0,1)$. Thus, we can conclude that}
            &\leq e^{-\delta}.
        \end{align*}
        So we prove Eq.~(\ref{eq:number_challenge}). Then, 
        the number of such steps $K$ is thus at most $\frac{12\delta \sqrt{T/L}}{\Delta}$ with probability $1-e^{-\delta}$, which is exactly Eq.~(\ref{eq:number_challenge}). So the regret contributing to $\mathcal{R}_3$ is bounded by $1\cdot \frac{12\delta \sqrt{T/L}}{\Delta}$ for each sub-era with probability at least $1-e^{-\delta}$. 
        \par For the number of sub-era $M$, from Lemma~2.1 in \cite{larcher2023simple}, we have $M\leq 2L+2N$ where $L$ is the number of changes and $N$ is the number of mistakes\footnote{This is defined in Definition~\ref{def:swap}.}. 
        Note that at each step there is a probability $\sqrt{L/T}$ of starting a \Challenge and by Lemma~1.1(i) in \cite{larcher2023simple},  ``this \Challenge has probability $2\sqrt{L/T}$ of ending with a mistake''. Thus, $N$ is stochastic dominated by $\text{Bin}(T,\sqrt{L/T}\cdot 2\sqrt{L/T})=\text{Bin}(T,\frac{2L}{T})$. By Chernoff bound, we have for any $\delta>0$,
        \begin{align*}
            \Pr \left(N \geq (1+\delta) 2L \right) \leq e^{-2L\delta^2/(2+\delta)} 
        \end{align*}
        Note that $M\leq 2L+2N$ implies that  $\{N\leq(1+\delta) 2L \} \subseteq \{M\leq 2L+4L(1+\delta)\}$. We deduce that 
        \begin{align}
            \Pr\left(M\leq  2L+4L(1+\delta)\right) 
            &\geq   \Pr\left(N\leq  (1+\delta) 2L\right) \nonumber\\
            &\geq 1- e^{-2L\delta^2/(2+\delta)}
            \label{eq:Chernoff2}
        \end{align}
        Using $\mathcal{R}_3=M\cdot \text{Regret/per sub-era}$, we can obtain
        \begin{align*}
            &\Pr \left(\mathcal{R}_3 \geq  2L(3+2\delta) \cdot \frac{12\delta \sqrt{T/L}}{\Delta}\right) \\ \intertext{Notice that $\{\mathcal{R}_3 \geq  2L(3+2\delta) \cdot 12\delta \sqrt{T/L}\} \subseteq \{M\geq 2L(3+2\delta)\}\cup \{\text{Regret/per sub-era} \geq\frac{12\delta \sqrt{T/L}}{\Delta}\}$. We denote \text{Regret/per sub-era} by $Q$.
            Using Union bound gives}
            \leq & \Pr \left(M\geq 2L(3+2\delta) \right) \\ 
             \quad & + \Pr \left(Q \geq \frac{12\delta \sqrt{T/L}}{\Delta} \right) \\ \intertext{Using Eq.~(\ref{eq:number_challenge}) and Eq.~(\ref{eq:Chernoff2}) gives}
         \leq& e^{-2L\delta^2/(2+\delta)}+e^{-\delta}.
        \end{align*}
        In other words, we obtain
        \begin{equation}            
            \Pr \left(\mathcal{R}_3 \leq 24\delta(3+2\delta)\frac{ \sqrt{LT}}{\Delta}  \right) 
            \geq 1-e^{-2L\delta^2/(2+\delta)}-e^{-\delta} \label{eq:R3}.
        \end{equation}

        \item[(4)] We finally estimate $\mathcal{R}_4$.
        In this case, 
        ``each sub-era with $a^+ \neq a^*$ consists of several \Challenge which hit the $+1$, but one hits $-s$ to end the sub-era. Since this happens, $a^+$ is swapped, and then by definition, the sub-era ends \cite{larcher2023simple}''. Consider the steps in which a random walk with negative drift $-\Delta$ towards $-s$ is reset to $0$ whenever it goes above $+1$. By applying Theorem~\ref{thm:ExpTail_drift} to estimate the $\tau_{-s}$ with drift $\Delta$ towards $-s$, we have the total number of steps spent inside each sub-era 
        is at most $\frac{\delta(s+S_0)}{\Delta}$ with probability $1-e^{-\delta(s+S_0)/e}$ for any $\delta>0$. Since each step costs at most $1$, the regret is at most $\frac{\delta(s+S_0)}{\Delta}$ with probability $1-e^{-\delta(s+S_0)/e}$. 
        Recall that we derive the tail bound for the number of sub-era $M$ in Eq.~(\ref{eq:Chernoff2}) and that $\mathcal{R}_4=M\cdot \text{Regret/per sub-era}$.
        Using Union bound with $\{\mathcal{R}_4 \geq 2\delta(3+2\delta)(s+S_0)L\}\subseteq \{M \geq (3+2\delta)2L \}\cup \{\text{Regret/per sub-era} \geq \frac{\delta(s+S_0)}{\Delta}\}$ and Eq.~(\ref{eq:Chernoff2}), we obtain
        \begin{equation}            
            \Pr \left(\mathcal{R}_4 \leq 2\delta(3+2\delta)\frac{(s+S_0)}{\Delta}L\right) 
            \geq 1-e^{-\frac{2L\delta^2}{2+\delta}}-e^{-\frac{\delta(s+S_0)}{e}} \label{eq:R4}.
        \end{equation}
        By taking $\delta=\sqrt{\Delta \varepsilon}$ for any $\varepsilon \geq 1$ in Eq.~(\ref{eq:R1}), we can deduce that $\delta(1+\delta)=\delta+\delta^2\leq 2\varepsilon$ since $\Delta$ is some constant in $(0,1]$. Thus, we can obtain
        \begin{align*}
            \Pr \left(\mathcal{R}_1 \leq  2 \varepsilon \sqrt{LT} \right) &\geq 1 - 2e^{-\sqrt{\varepsilon}/e} \\ \intertext{Using $\delta=\sqrt{\Delta\varepsilon}\leq \varepsilon$ for $\varepsilon\geq 1$ in Eq.~(\ref{eq:R2}) gives}
            \Pr \left(\mathcal{R}_2 \leq  \varepsilon(L+ \sqrt{LT}) \right) &\geq 1 - 2e^{-\sqrt{\varepsilon}/e} \\ \intertext{Using $\delta(3+2\delta)=3\delta+2\delta^2 \leq 5\varepsilon $ in Eq.~(\ref{eq:R3}) gives}
            \Pr \left(\mathcal{R}_3 \leq 120 \varepsilon \sqrt{LT} \right) &\geq 1 - 2e^{-\sqrt{\varepsilon}} \\ \intertext{Using $2\delta(3+2\delta)=6\delta+4\delta^2 \leq 10\varepsilon $ in Eq.~(\ref{eq:R4}) gives}
            \Pr \left(\mathcal{R}_4 \leq  10 \varepsilon (L+ \sqrt{LT}) \right) &\geq 1 - 2e^{-\sqrt{\varepsilon}/e}. \\
        \end{align*}
        Above all, the total regret $\mathcal{R}$ consists of $\mathcal{R}_1, \mathcal{R}_2, \mathcal{R}_3 ,\mathcal{R}_4$. We define the following event: 
        \begin{align*}
            E &:= \{\mathcal{R} \leq  480 \varepsilon (L+  \sqrt{LT})\} \\
              &= \{\sum_{i=1}^4 \mathcal{R}_i \leq  480 \varepsilon (L+  \sqrt{LT})\}
        \end{align*}
        Considering the complement of Event $E$ and using the tail bound for each $\mathcal{R}_i$ give
        \begin{align*}
            &\Pr \left(\mathcal{R} > 480 \varepsilon (L+  \sqrt{LT}) \right) \\
            \leq & \Pr \left(\text{At least one $\mathcal{R}_i > \frac{480 \varepsilon (L+  \sqrt{LT})}{4}$} \right) \\ \intertext{Using tail bound for regret of each stage, we obtain}
            \leq & \max \{2e^{-\sqrt{\varepsilon}/e}, 2e^{-\sqrt{\varepsilon}}\} \\ 
            = &  2e^{-\sqrt{\varepsilon}/e}.
        \end{align*}
    \end{itemize}
\end{proof}

\section{Figures and Tables}
We defer all the figures and tables in the final section.
\begin{figure*}[htbp]
    \centering
  \subfloat[$\alpha=0.5$, $\beta=0.5$\label{1a}]{%
       \includegraphics[width=0.33\linewidth]{{DensityPlot_Runtime_bilinear_a05b05.pdf}}}
    \hfill
  \subfloat[$\alpha=0.3$, $\beta=0.3$\label{1b}]{%
        \includegraphics[width=0.33\linewidth]{DensityPlot_Runtime_bilinear_a03b03.pdf}}
    \hfill
  \subfloat[$\alpha=0.7$, $\beta=0.7$\label{1c}]{%
        \includegraphics[width=0.305\linewidth]{DensityPlot_Runtime_bilinear_a07b07.pdf}}
    \hfill
  \subfloat[$\alpha=0.3$, $\beta=0.7$\label{1d}]{%
        \includegraphics[width=0.33\linewidth]{DensityPlot_Runtime_bilinear_a03b07.pdf}}
  \subfloat[$\alpha=0.7$, $\beta=0.3$\label{1e}]{%
        \includegraphics[width=0.33 \linewidth]{DensityPlot_Runtime_bilinear_a07b03.pdf}}
    \hfill
  \caption{Runtime distribution for RLS-PD for various values of $\alpha$ and $\beta$, $n=1000$.}
  \label{fig:totalRuntime1} 
\end{figure*}

\begin{figure*}[htbp]
    \centering
  \subfloat[$T=1000, L=5$\label{2a}]{%
       \includegraphics[width=0.33\linewidth]{DensityPlot_TotalRegret_RWAB_T_1000_L_5.pdf}}
    \hfill
  \subfloat[$T=1000, L=10$\label{2b}]{%
        \includegraphics[width=0.33\linewidth]{DensityPlot_TotalRegret_RWAB_T_1000_L_10.pdf}}
  \subfloat[$T=1000, L=20$\label{2c}]{%
        \includegraphics[width=0.33\linewidth]{DensityPlot_TotalRegret_RWAB_T_1000_L_20.pdf}}
    \hfill
  \subfloat[$T=1000, L=40$\label{2d}]{%
        \includegraphics[width=0.33\linewidth]{DensityPlot_TotalRegret_RWAB_T_1000_L_40.pdf}}
    \hfill
  \subfloat[$T=1000, L=80$\label{2e}]{%
        \includegraphics[width=0.33\linewidth]{DensityPlot_TotalRegret_RWAB_T_1000_L_80.pdf}}
    \hfill
  \subfloat[$T=1000, L=100$\label{2f}]{%
        \includegraphics[width=0.33\linewidth]{DensityPlot_TotalRegret_RWAB_T_1000_L_100.pdf}}
  \caption{Regret distribution for various values of $T$ and $L$.}
  \label{fig:totalRegret1} 
\end{figure*}
\begin{table*}[htbp]
    \centering
    \caption{Runtime statistics for RLS-PD on Bilinear $(n=1000)$. $T$ denotes the actual runtime of RLS-PD on Bilinear on each run. $\Bar{T}$ denotes the empirical mean of the runtime and $\text{Fr}$ denotes the frequency of runtimes. }
    \label{tab:runtime_stats}
    \resizebox{\textwidth}{!}{%
    \begin{tabular}{lccccccc}
        \toprule[1.5pt]
        \textbf{Problem Configuration} &
        \textbf{$\Bar{T}$} &
        \textbf{$\text{Fr} \left(T \leq \Bar{T} \right)$} &
        \textbf{$\text{Fr} \left(T \leq 2\Bar{T}\right)$} &
        \textbf{$\text{Fr}\left(T \leq 4\Bar{T} \right)$} &
        \textbf{$\text{Fr} \left(T \leq 6\Bar{T} \right)$} &
        \textbf{$\text{Fr}\left(T \leq 8\Bar{T}\right)$} \\
        \midrule[0.5pt]
        $\alpha=0.5,\beta=0.5$& \textbf{7029.405} & 0.633 & 0.845 & 0.983 & 0.996 & \textbf{1.0} \\
        $\alpha=0.3,\beta=0.3$ & \textbf{12412.929} & 0.63 & 0.945 & 0.998 & \textbf{1.0} & 1.0  \\
         $\alpha=0.7,\beta=0.7$ & \textbf{12722.888} & 0.604 & 0.95 & 0.997 & \textbf{1.0} & 1.0  \\
       $\alpha=0.3,\beta=0.7$  & \textbf{12738.235} & 0.627 & 0.935 & 0.997 & \textbf{1.0} & 1.0  \\
       $\alpha=0.7,\beta=0.3$  & \textbf{12273.301} & 0.641 & 0.95 & 0.998 & \textbf{1.0} & 1.0  \\
        \bottomrule[1.5pt]
    \end{tabular}%
    }
\end{table*}

\begin{table*}[htbp]
    \centering
    \caption{Regret statistics for Rwab $(T=1000)$. $R$ denotes the actual regret of Rwab on 2-armed non-stationary Bernoulli Bandits on each run. $\Bar{R}$ denotes the empirical mean of the regret and $\text{Fr}$ denotes the frequency of regrets. }
    \label{tab:regret_stats}
    \resizebox{\textwidth}{!}{%
    \begin{tabular}{p{2.2cm}cccccccc}
        \toprule[1.5pt]
        \textbf{Problem}  \newline\textbf{Configuration} &
        \textbf{\(\Bar{R}\)} &
        \textbf{\(\text{Fr}\left(R \leq \Bar{R} \right)\)} &
        \textbf{\(\text{Fr} \left(R \leq 1.2\Bar{R}\right)\)} &
        \textbf{\(\text{Fr} \left(R \leq 1.4\Bar{R} \right)\)} &
        \textbf{\(\text{Fr} \left(R \leq 1.6\Bar{R} \right)\)} &
        \textbf{\(\text{Fr} \left(R \leq 1.8\Bar{R} \right)\)} &
        \textbf{\(\text{Fr} \left(R \leq 2\Bar{R}\right)\)} \\
        \midrule[0.5pt]
        $L=5$ & \textbf{111.329} & 0.543 & 0.829 & 0.949 & 0.990 & 0.998 & \textbf{1.0} \\
        $L=10$ & \textbf{132.621} & 0.491 & 0.871 & 0.988 & \textbf{1.0} & 1.0 & 1.0 \\
        $L=20$ & \textbf{172.975} & 0.457 & 0.920 & 0.996 & \textbf{1.0} & 1.0 & 1.0 \\
        $L=40$ & \textbf{219.151} & 0.383 & 0.949 & \textbf{1.0} & 1.0 & 1.0 & 1.0 \\
        $L=80$ & \textbf{293.709} & 0.343 & 0.945 & \textbf{1.0} & 1.0 & 1.0 & 1.0 \\
        $L=100$ & \textbf{326.728} & 0.299 & 0.958 & \textbf{1.0} & 1.0 & 1.0 & 1.0 \\
        \bottomrule[1.5pt]
    \end{tabular}%
    }
\end{table*}

\end{document}